
\documentclass[10pt]{article} 
\usepackage[preprint]{rlc}

\usepackage{amssymb,amsthm}            
\usepackage{tikz}
\usetikzlibrary{patterns, calc, decorations.pathreplacing} 

\usepackage{mathtools}          
\usepackage{mathrsfs}           
\mathtoolsset{showonlyrefs}     
\usepackage{graphicx}           
\usepackage{subcaption}         
\usepackage[space]{grffile}     
\usepackage{url}                
\let\classAND\AND
\let\AND\relax
\usepackage{algorithmic}

\let\AND\classAND
\AtBeginEnvironment{algorithmic}{\let\AND\algoAND}
\usepackage[ruled, vlined]{algorithm2e}

\SetCommentSty{mycommfont}
\usepackage{float}
\usepackage{booktabs}

\newtheorem{assumption}{Assumption}
\newtheorem{remark}{Remark}

\newtheorem{theorem}{Theorem}
\newtheorem{lemma}{Lemma}
\newtheorem{definition}{Definition}
\newtheorem{proposition}{Proposition}
\newtheorem{corollary}{Corollary}
\newcommand{\red}[1]{\textcolor{red}{#1}}

\renewcommand{\hat}{\widehat}

\title{Cost Aware Best Arm Identification}


\author{Kellen Kanarios \\
    kellenkk@umich.edu \\
    University of Michigan, Ann Arbor 
    \And
    Qining Zhang \\
    qiningz@umich.edu\\
    University of Michigan, Ann Arbor 
    \AND
    Lei Ying \\
    leiying@umich.edu \\
    University of Michigan, Ann Arbor
    }


\begin{document}

\maketitle

\begin{abstract}
In this paper, we study a best arm identification problem with dual objects. In addition to the classic reward, each arm is associated with a cost distribution and the goal is to identify the largest reward arm using the minimum expected cost. We call it \emph{Cost Aware Best Arm Identification} (CABAI), which captures the separation of testing and implementation phases in product development pipelines and models the objective shift between phases, i.e., cost for testing and reward for implementation. We first derive a theoretical lower bound for CABAI and propose an algorithm called $\mathsf{CTAS}$ to match it asymptotically. To reduce the computation of $\mathsf{CTAS}$, we further propose a simple algorithm called \emph{Chernoff Overlap} (CO), based on a square-root rule, which we prove is optimal in simplified two-armed models and generalizes well in numerical experiments. Our results show that (i) ignoring the heterogeneous action cost results in sub-optimality in practice, and (ii) simple algorithms can deliver near-optimal performance over a wide range of problems.
\end{abstract}

\section{Introduction}

The stochastic multi-armed bandit (MAB)~\citep{thompson1933likelihood, OriginalMAB} is a classic model which has widespread applications, from content recommendation~\citep{kohli2013recommendation}, resource allocation~\citep{liu2020pond}, clinical trials~\citep{villar2015clinic}, to efficient ad placement. A multi-armed bandit problem involves an agent and an environment, which is represented by a set of actions (arms) with distinct underlying reward distributions. At each round, the agent will choose one of the arms and then obtain a random reward generated from the associated distribution. Most existing studies formulate MAB either as the best arm identification (BAI) problem~\citep{kaufmann2016complexity, Garovoer16BAI} where the agent intends to identify the highest reward arm as quickly as possible or as a regret minimization problem~\citep{auer2002ucb,garivier2011kl} where the goal is to maximize the cumulative reward over a time-horizon. Both formulations have been well-developed and successful in balancing the trade-off between exploration and exploitation. 

However, unlike the classic MAB model, most real-world product development pipelines are usually separated into two phases: testing (survey) and implementation (release). Here, testing refers to the process where one intends to find the best product among a set of potential candidates through sequential trials, e.g., A/B testing for clinical decisions. The implementation phase refers to the selected best product being used in a wider population after testing, usually involving mass production. Different performance measures are emphasized in different phases. For example, the cost of prototype medicine may be of primary concern in testing while the efficacy is more important in implementation since the production cost is either decreased via mass production or covered through insurance. Similarly, when choosing the best platform for online advertising, the total payment to different platforms for advertising may be of essential concern in testing, while the click-through and conversion rate are what matters in implementation after the best platform is selected. Unfortunately, neither BAI nor regret minimization captures the aforementioned differences, which makes algorithms developed for traditional MAB not directly applicable. Moreover, trying out different candidates during testing may require different costs, which again is not captured in classic MAB.

\textbf{Cost Aware Best Arm Identification:} in this paper, we propose a new MAB problem called \emph{Cost Aware Best Arm Identification} (CABAI), where besides reward (the main object for implementation), each arm is in addition associated with a cost distribution to model the object for testing. Each time the agent chooses an arm, it will observe a random reward-cost pair, which is independently generated from the reward distribution and cost distribution respectively. The goal in CABAI is to identify the highest reward arm using the minimum cost possible, which breaks down to the following questions: (1) how should we sample arms during testing with unknown cost, and how does the rule differ from BAI (\textbf{sampling rule})? (2) when is the best arm identifiable (\textbf{stopping rule})? (3) which arm should we choose for implementation (\textbf{decision rule})? We will show that the design of algorithms for CABAI is related to BAI, but have fundamental differences so that BAI optimal algorithms do not necessarily achieve good performance in CABAI. This also implies that directly applying BAI algorithms and neglecting the heterogeneous nature of arms in practice will result in sub-optimality. We address the following questions in our paper: (1) What are the fundamental limits of CABAI? (2) How should we design efficient algorithms to achieve the limit?

\textbf{Our Contributions:} We propose CABAI and show that traditional BAI algorithms no longer perform well. As summarized in Table \ref{tab:optimal-pulls}, the optimal proportions of arm pulls have essential differences between traditional BAI and CABAI, i.e., $\mathsf{TAS}$~\citep{Garovoer16BAI}, which is optimal in BAI, allocates almost the same amount of pulls to the first two arms, while the optimal proportion of arm pulls for CABAI emphasize more on the low-cost arm, as achieved by our proposed $\mathsf{CTAS}$ algorithm. We first prove a non-asymptotic lower bound on the minimum cumulative cost required to identify the best arm. Then, we propose an algorithm called \emph{Cost-Aware Track and Stop} ($\mathsf{CTAS}$) to match the lower bound asymptotically. However, the $\mathsf{CTAS}$ algorithm is required to solve a bilevel optimization problem at each time step, which exerts relatively high computational complexity and prevents its direct use in practice. To overcome this issue, we further propose a low-complexity algorithm called \emph{Chernoff Overlap} $(\mathsf{CO})$ which exhibits desirable empirical performance and remains theoretically optimal in simplified bandit models.
\begin{table}[H]
    \centering
    \begin{tabular}{ccccc}
        \toprule
         Algorithm & Optimal? &  $w_1(t)$ $(1.5, 1)$  & $w_2(t)$ $(1, 0.1)$ & $w_3(t)$ $(0.5, 0.01)$ \\
        \midrule
        $\mathsf{TAS}$ & $\times$  & $0.46$ & $0.46$ & $0.08$ \\
        $\mathsf{CTAS}$ & \checkmark  & $0.23$ & $0.72$ & $0.05$ \\
        \bottomrule
    \end{tabular}
    \caption{The expected rewards are $\boldsymbol{\mu} = [1.5, 1, 0.5]$ and the expected costs are $\boldsymbol{c} = [1, 0.1, 0.01]$. For arm $i$, $w_i(t)$ is the proportion of arm pulls up to time $t$, i.e., $w_i(t) = N_i(t)/t$ where $N_i$ is the number of pulls. Noticeably, CABAI emphasizes more on low-cost arms to complement high-cost arms.}
    \label{tab:optimal-pulls}
\end{table}

\subsection{Related Work}
We review existing MAB results most relevant to our paper. A detailed discussion is in the appendix.

\textbf{BAI with Fixed Confidence:} BAI has been studied for many years and was originally proposed in \citet{bechhofer1958sequential}. In this paper, we consider a subset known as the fixed confidence setting, where the agent aims to minimize the sample complexity while ensuring the best arm is identified with probability at least $1 - \delta$. Here, $\delta$ is a pre-specified confidence level, and such algorithms are called $\delta$-PAC. In \citet{kaufmann2016complexity}, the authors introduce a non-asymptotic lower bound for this setting. Subsequently, they propose the Track and Stop algorithm ($\mathsf{TAS}$) that matches this lower bound asymptotically.  The $\mathsf{TAS}$ algorithm has since been extended to various other settings~\citep{pmlr-v201-jourdan23a, garivier2021nonasymptotic, kato2021role}. Before it, researchers proposed ``confidence-based'' algorithms, e.g., $\mathsf{KL}$-$\mathsf{LUCB}$ \citep{Kaufmann13kl}, $\mathsf{UGapE}$ \citep{UGapE}, which rely on constructing high-probability confidence intervals. They are more computationally feasible than $\mathsf{TAS}$ inspired algorithms, but with few theoretical guarantees. 

\textbf{BAI with Safety Constraints:} A formulation similar to our paper is BAI with safety constraints~\citep{wang2022best}. As a motivating example, they consider the clinical trial setting, where each drug is associated with a dosage and the dosage has an associated safety level. They attempt to identify the best drug and dosage for fixed confidence without violating the safety level. Similarly, \citet{hou2022almost} attempts to identify the best arm subject to a constraint on the variance. Our formulation is distinct from them because the agent is free to perform any action. In \cite{chen2023doublyoptimistic, chen2022strategies}, they formulate safety constraints as a constrained optimization problem. They explore and show that allowing minimal constraint violations can provide significant improvement in the regret setting. This is distinct from the BAI setting explored in this paper.

\textbf{Multi-fidelity BAI:} An alternative formulation that considers cost is the multi-fidelity formulation introduced in \cite{kandasamy2016multifidelity} and recently considered in the best arm identification regime \citep{multi-fidelity-2022, wang2023multifidelity}. In this setting, along with choosing an arm, the agent chooses the desired fidelity or ``level of accuracy'' of the mean estimate. Each fidelity incurs a cost, where higher fidelity incurs a larger cost but provides more accurate estimate. This setting clearly differs from ours because the cost of each fidelity is known a priori and is controllable through choice of fidelity.

\section{Preliminaries}

We study a model similar to the fixed-confidence BAI in stochastic $K$-armed bandits. We denote the set of arms as $\mathcal{A} \coloneqq \{1,2,\cdots,K\}$. Each arm $a$ is associated with a reward distribution $\boldsymbol{\nu}_{\mu} = \{\nu_{\mu_1}, \ldots, \nu_{\mu_K}\}$ with expectations $\boldsymbol{\mu} \coloneqq \{\mu_1, \mu_2, \cdots, \mu_K\}$. We assume $\boldsymbol{\nu}_{\mu}$ are independent and make the natural exponential family assumption standard in BAI literature~\citep{kaufmann2016complexity}:
\begin{assumption}[Natural Exponential Family]\label{ass:exp-family}
    For any $a$, $\nu_{\mu_a}$ belongs to family $\mathcal{P}$ which can be parameterized by the expectation with finite moment generating function, i.e.,
    $$
        \mathcal{P}=\{\nu_{\mu}| \mu \in[0,1], \nu_{\mu}= h(x) \exp (\theta_\mu x-b(\theta_\mu)) \},
    $$
    where $\theta_\mu$ is a function of $\mu$, and $b(\theta)$ is convex and twice differentiable. 
\end{assumption}
For two different expectations $\mu$ and $\mu'$ with the same exponential family, we use $d\left(\mu, \mu^{\prime}\right)$ to denote the KL-divergence from $\nu_\mu$ to $\nu_{\mu'}$. Note that Assumption~\ref{ass:exp-family} is very general and includes a large class of distributions such as Gaussian (with known variance), Bernoulli, and Poisson distributions by considering the following choice of parameters:
\begin{alignat*}{3}
        \mathsf{Bernoulli:} \quad \theta_{\mu} &= \log \left(\frac{\mu}{1 - \mu}\right),\quad &&b(\theta_{\mu}) = \log \left(1 + e^{\theta_{\mu}}\right), \quad &&h(x) = 1 \\
        \mathsf{Poisson:} \quad \theta_{\mu} &= \log(\mu), \quad  &&b(\theta_{\mu}) = e^{\theta_{\mu}}, \quad &&h(x) = \frac{1}{x!}e^{-x} \\
        \mathsf{Gaussian:} \quad \theta_{\mu} &= \frac{\mu}{\sigma^2}, &&b(\theta_{\mu}) = \frac{\sigma^2 \theta_{\mu}^2}{2}, &&h(x) = \frac{1}{\sqrt{2\pi}}e^{-x^2/2\sigma^2} 
\end{alignat*}
Unique to this work, we assume that each arm has a cost with distribution $\boldsymbol{\nu}_{c} \coloneqq \{\nu_{c_1}, \ldots, \nu_{c_K}\}$ and expectations $\boldsymbol{c} \coloneqq \{c_1, \ldots, c_K\}$. We assume they satisfy the positivity assumption, which is natural in our motivating examples in real world such as ad placement or clinical trials, where the cost of each action is always bounded and all actions are not free.

\begin{assumption}[Bounded Positivity]\label{ass:boundedness}
    For any arm $a$, we assume the support of the cost distribution $\nu_{c_a}$ is positive and bounded away from $0$, i.e., $\mathsf{supp}(\nu_{c_a}) \in [\ell, 1]$, where $\ell$ is a positive constant. 
\end{assumption}

\textbf{Problem Formulation:} We define the best arm $a^*(\boldsymbol{\mu})$ to be the action which has the highest expected reward, i.e., $a^*(\boldsymbol{\mu}) = \arg\max_{a\in\mathcal{A}}\mu_a$, and we assume there is a unique best arm. The results can be generalized to scenarios with multiple best arms given the number of best arms. 
At each round (time) $t \in \mathbb{N}^+$, we interact with the environment by choosing an arm $A_t\in \mathcal{A}$. After that, a (reward, cost) signal pair $(R_t, C_t)$ is independently sampled from the joint distribution $\nu_{\mu_{A_t}} \times \nu_{c_{A_t}}$ of the action that we choose. 
For any time $t$, we use $N_a(t)$ to denote the number of times that arm $a$ has been pulled, and we use $\widehat{\mu}_a(t)$ and $\hat{c}_a(t)$ to denote the empirical average reward and cost: 
\begin{align*}
    \widehat{\mu}_a(t) &= \frac{1}{N_a(t)} \sum_{k = 1}^{t} R_k \cdot \mathbf{1}_{\{A_k = a\}},\quad \widehat{c}_a(t) = \frac{1}{N_a(t)} \sum_{k = 1}^{t} C_k \cdot \mathbf{1}_{\{A_k = a\}}.
\end{align*}

For any policy $\pi$, it consists of three components: (1) a sampling rule $(A_t)_{t\geq 1}$ to select arms to interact at each round; (2) a stopping time $\tau_\delta$ which terminates the interaction; and (3) an arm decision rule $\hat{a}$ to identify the best arm. As a convention of BAI with fixed confidence, we require our policy $\pi$ to be \emph{$\delta$-PAC} (Probably Approximately Correct)~\citep{kaufmann2016complexity}, which means the algorithm should terminate in finite time and the probability of choosing the wrong best arm should be lower than the confidence level $\delta$.  The definition of $\delta$-PAC is as follows:

\begin{definition}
    An algorithm $\pi$ is $\delta$-PAC if for any reward and cost instances $(\boldsymbol{\mu},\boldsymbol{c})$, it outputs the best arm $a^*(\boldsymbol{\mu})$ with probability at least $1-\delta$ and in finite time almost surely, i.e.,
    \begin{align}
        \mathbb{P}_{\boldsymbol{\mu} \times \boldsymbol{c}}(\hat{a} \neq a^*(\boldsymbol{\mu})) \leq \delta, 
        \quad 
        \mathbb{P}_{\boldsymbol{\mu} \times \boldsymbol{c}}(\tau_\delta<\infty) = 1.
    \end{align}
\end{definition}
For any time $t$, define the cumulative cost as $J(t) \coloneqq \sum_{k=1}^t C_k$. For any fixed $\delta$, let $\Pi_\delta$ denote the set of all $\delta$-PAC best arm identification policies. The goal of this work is to find $\pi \in \Pi_\delta$, such that $\pi = \arg\min_{\pi\in \Pi_\delta} \mathbb{E}_{\boldsymbol{\mu}\times \boldsymbol{c}}\left[ J(\tau_\delta) \right]$. We use boldface $\boldsymbol{x}$ to denote vectors and instances, and calligraphy $\mathcal{X}$ to denote sets.
We use the subscript $\mathbb{P}_{\boldsymbol{\mu} \times \boldsymbol{c}}$, $\mathbb{E}_{\boldsymbol{\mu} \times \boldsymbol{c}}$ to denote the probability measure and expectation with respect to a specific instance $(\boldsymbol{\mu}, \boldsymbol{c})$.

\section{Lower Bound}
We first characterize the theoretical limits of this cost minimization problem. Denote by $\mathcal{M}$ a set of exponential bandit models such that each bandit model $\boldsymbol{\mu}=\left(\mu_1, \ldots, \mu_K\right)$ in $\mathcal{M}$ has a unique best arm $a^*(\boldsymbol{\mu})$. Let $\Sigma_K=\left\{ \boldsymbol{w} \in \mathbb{R}_{+}^k: w_1+\cdots+w_K=1\right\}$ to be the set of probability distributions on $\mathcal{A}$, then we present the following theorem which characterizes the fundamental lower bound.

\begin{theorem}\label{thm:general-lower-bound}
    Let $\delta\in(0,1)$. For any $\delta$-PAC algorithm and any bandit model $\boldsymbol{\mu} \in \mathcal{M}$, we have:
    \begin{align*}
        \mathbb{E}_{\boldsymbol{\mu}\times \boldsymbol{c}}\left[J(\tau_\delta)\right] \geq T^*(\boldsymbol{\mu}) \log \frac{1}{\delta} + o\left(\log \frac{1}{\delta}\right).
    \end{align*}
    where $T^*(\boldsymbol{\mu})$ is the instance dependent constant satisfying:
    \[
        T^*(\boldsymbol{\mu})^{-1} = \sup_{\boldsymbol{w} \in \Sigma_K}\inf_{\boldsymbol{\lambda} \in \left\{a^*(\boldsymbol{\lambda}) \neq a^*(\boldsymbol{\mu})\right\}}\sum_{a}\frac{w_a}{c_a}d(\mu_a,\lambda_a).
    \]
\end{theorem}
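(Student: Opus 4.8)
The plan is to adapt the classical change-of-measure argument of \citet{kaufmann2016complexity} to account for the fact that the quantity being accumulated is cost rather than sample count. The starting point is the standard transportation/data-processing inequality: for any $\delta$-PAC algorithm $\pi$, any instance $\boldsymbol{\mu}$ and any alternative $\boldsymbol{\lambda}$ with $a^*(\boldsymbol{\lambda}) \neq a^*(\boldsymbol{\mu})$, one has
\begin{align*}
    \sum_{a} \mathbb{E}_{\boldsymbol{\mu}\times\boldsymbol{c}}[N_a(\tau_\delta)]\, d(\mu_a,\lambda_a) \;\geq\; \mathrm{kl}(\delta, 1-\delta) \;\geq\; \log\frac{1}{2.4\delta},
\end{align*}
where $\mathrm{kl}(x,y)$ denotes the Bernoulli KL-divergence; this follows from the log-likelihood-ratio identity (Wald's lemma applied to the per-step KL, which only involves the reward distributions since the cost distributions are identical across the two instances) together with the PAC guarantee. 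Write $n_a := \mathbb{E}_{\boldsymbol{\mu}\times\boldsymbol{c}}[N_a(\tau_\delta)]$ and $T := \sum_a n_a = \mathbb{E}[\tau_\delta]$.

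The key new step is to relate the expected cumulative cost $\mathbb{E}[J(\tau_\delta)]$ to the weighted sum $\sum_a n_a c_a$. Since $C_k$ is drawn from $\nu_{c_{A_k}}$ independently of the past given $A_k$, and $\tau_\delta$ is a stopping time with $\mathbb{E}[\tau_\delta] < \infty$ (guaranteed once we have shown $T$ is finite, which itself follows from $\delta$-PAC plus the lower bound on the $n_a d(\mu_a,\lambda_a)$ sum being finite), Wald's identity for the martingale $\sum_{k\le t}(C_k - c_{A_k})$ gives exactly $\mathbb{E}[J(\tau_\delta)] = \sum_a n_a c_a$. Now introduce the normalized proportions $w_a := n_a / T \in \Sigma_K$, so the transportation inequality becomes $T \sum_a w_a\, d(\mu_a,\lambda_a) \geq \log\frac{1}{2.4\delta}$ for every valid alternative $\boldsymbol{\lambda}$, hence $T \cdot \inf_{\boldsymbol{\lambda}} \sum_a w_a\, d(\mu_a,\lambda_a) \geq \log\frac{1}{2.4\delta}$. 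This controls $T$ but not directly $\sum_a n_a c_a = T\sum_a w_a c_a$, so a naive substitution does not produce the claimed rate constant.

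To get the cost-weighted constant $T^*(\boldsymbol{\mu})$, I would instead reparametrize: rather than working with $w_a = n_a/T$, set $v_a := n_a c_a / \big(\sum_b n_b c_b\big)$, i.e., the \emph{cost-weighted} proportions, which also lie in $\Sigma_K$. Substituting $n_a = v_a \cdot \mathbb{E}[J(\tau_\delta)] / c_a$ into the transportation inequality yields
\begin{align*}
    \mathbb{E}_{\boldsymbol{\mu}\times\boldsymbol{c}}[J(\tau_\delta)] \sum_{a} \frac{v_a}{c_a}\, d(\mu_a,\lambda_a) \;\geq\; \log\frac{1}{2.4\delta}
\end{align*}
for every alternative $\boldsymbol{\lambda}$ with $a^*(\boldsymbol{\lambda})\neq a^*(\boldsymbol{\mu})$. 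Taking the infimum over such $\boldsymbol{\lambda}$ and then noting that $\boldsymbol{v}$ is \emph{some} element of $\Sigma_K$, so the left factor is at most $\sup_{\boldsymbol{w}\in\Sigma_K}\inf_{\boldsymbol{\lambda}} \sum_a \frac{w_a}{c_a} d(\mu_a,\lambda_a) = T^*(\boldsymbol{\mu})^{-1}$, we obtain $\mathbb{E}[J(\tau_\delta)] \cdot T^*(\boldsymbol{\mu})^{-1} \geq \log\frac{1}{2.4\delta}$, i.e., $\mathbb{E}[J(\tau_\delta)] \geq T^*(\boldsymbol{\mu}) \log\frac{1}{2.4\delta} = T^*(\boldsymbol{\mu})\log\frac{1}{\delta} + o(\log\frac{1}{\delta})$.

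The main obstacle I anticipate is the measure-theoretic bookkeeping around Wald's identity when $\tau_\delta$ is data-dependent: one must justify $\mathbb{E}[J(\tau_\delta)] = \sum_a c_a \mathbb{E}[N_a(\tau_\delta)]$ rigorously (using Assumption~\ref{ass:boundedness} to bound the per-step cost increments and a dominated-convergence / optional-stopping argument), and must verify the log-likelihood-ratio computation only picks up reward-divergence terms because the cost channel is identical under $\mathbb{P}_{\boldsymbol{\mu}\times\boldsymbol{c}}$ and $\mathbb{P}_{\boldsymbol{\lambda}\times\boldsymbol{c}}$ — so the cost observations carry no discriminating information and the entropic inequality is unchanged from the rewards-only BAI setting. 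A secondary subtlety is ensuring $\mathbb{E}[\tau_\delta] < \infty$ a priori so that all the expectations above are finite; this can be handled exactly as in \citet{kaufmann2016complexity} since $\delta$-PAC already requires $\tau_\delta < \infty$ almost surely, and Assumption~\ref{ass:boundedness} bounds costs from below by $\ell > 0$, making $J(\tau_\delta) \geq \ell\,\tau_\delta$ so that a finite cost bound also bounds $\mathbb{E}[\tau_\delta]$.
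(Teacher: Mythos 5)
Your proposal is correct and follows essentially the same route as the paper: the transportation inequality of \citet{kaufmann2016complexity} (with only reward-KL terms, since the cost distributions are common to $\boldsymbol{\mu}$ and $\boldsymbol{\lambda}$), a Wald-type cost decomposition $\mathbb{E}[J(\tau_\delta)]=\sum_a c_a\mathbb{E}[N_a(\tau_\delta)]$ (the paper's Lemma~\ref{lemma:cost-decomp}, proved via the tower property), and the reparametrization by cost-weighted proportions $w_a = c_a\mathbb{E}[N_a(\tau_\delta)]/\mathbb{E}[J(\tau_\delta)]$ followed by the $\sup_{\boldsymbol{w}\in\Sigma_K}\inf_{\boldsymbol{\lambda}}$ bound. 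Your added care about optional stopping and the a priori finiteness of $\mathbb{E}[\tau_\delta]$ is a sound refinement of the same argument, not a different approach.
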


The proof of Theorem \ref{thm:general-lower-bound} is deferred to the appendix but primarily relies on the ``transportation'' lemma proposed in~\citet{kaufmann2016complexity}, which characterizes the theoretical hardness to distinguish the bandit model $\boldsymbol{\mu}$ from any other models $\boldsymbol{\lambda}$ where $a^*(\boldsymbol{\lambda}) \neq a^*(\boldsymbol{\mu})$. Theorem~\ref{thm:general-lower-bound} suggests that $\mathcal{O}(\log (1/\delta))$ cumulative cost is inevitable to identify the optimal arm, and it also characterizes the asymptotic lower bound constant $T^*(\boldsymbol{\mu})$.

\textbf{Instance Dependent Constant $T^*(\boldsymbol{\mu})$: }  The instance dependent constant $T^*(\boldsymbol{\mu})$ obtained in our Theorem.~\ref{thm:general-lower-bound} is different from classic best arm identification lower bounds, e.g., $T^*(\boldsymbol{\mu})$ in Theorem 1 of~\citet{Garovoer16BAI}. Even though it captures the hardness of this instance in terms of the cumulative cost, $T^*(\boldsymbol{\mu})$ is still a vague notion in the sense that the relationship between the theoretical cumulative cost $J(\tau_\delta)$ and model parameters, $\boldsymbol{\mu}$, $\boldsymbol{c}$, is still unclear. To better understand this mysterious constant $T^*(\boldsymbol{\mu})$, we present Theorem.~\ref{thm:general-lower-bound} in the simple case of $2$ armed Gaussian bandits with unit variance, where $T^*(\boldsymbol{\mu})$ has a closed-form expression. 

\begin{corollary}\label{corr:2-arm-gauss}
    Let $\delta\in(0,1)$. For any $\delta$-PAC algorithm and any 2-armed Gaussian bandits with reward expectations $\{\mu_1, \mu_2\}$ and unit variance such that $\mu_1 > \mu_2$ , we have:
    \begin{align*}
        \mathbb{E}_{\boldsymbol{\mu}\times \boldsymbol{c}}\left[J(\tau_{\delta})\right]\geq \frac{2\left(\sqrt{c_1}+\sqrt{c_2}\right)^2 }{(\mu_1 - \mu_2)^2} \log \frac{1}{\delta} + o\left(\log \frac{1}{\delta}\right).
    \end{align*}
\end{corollary}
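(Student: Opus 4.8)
The plan is to instantiate Theorem~\ref{thm:general-lower-bound} in the two-armed unit-variance Gaussian case and to evaluate the variational constant $T^*(\boldsymbol{\mu})$ in closed form; the claimed bound is then immediate. Since $\mu_1 > \mu_2$, the optimal arm is $a^*(\boldsymbol{\mu}) = 1$, the alternative set $\{\boldsymbol{\lambda}: a^*(\boldsymbol{\lambda}) \neq a^*(\boldsymbol{\mu})\}$ is the set $\{\boldsymbol{\lambda} = (\lambda_1,\lambda_2): \lambda_2 > \lambda_1\}$ (whose closure we may use, since the infimum is unchanged), and for unit-variance Gaussians $d(\mu,\lambda) = (\mu-\lambda)^2/2$. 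Hence
\[
    T^*(\boldsymbol{\mu})^{-1} = \sup_{\boldsymbol{w}\in\Sigma_2}\ \inf_{\lambda_1\le\lambda_2}\ \left[\frac{w_1}{2c_1}(\mu_1-\lambda_1)^2 + \frac{w_2}{2c_2}(\mu_2-\lambda_2)^2\right],
\]
and it remains to solve the two nested optimizations.

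First I would solve the inner infimum for fixed $\boldsymbol{w}$. Writing $\alpha = w_1/c_1$ and $\beta = w_2/c_2$, a short monotonicity argument shows the minimizer must lie on the diagonal $\lambda_1 = \lambda_2 =: x$ with $x\in[\mu_2,\mu_1]$: moving $\lambda_1$ up toward $\mu_1$ and $\lambda_2$ down toward $\mu_2$ decreases the objective until the constraint $\lambda_1\le\lambda_2$ binds, and pushing $x$ outside $[\mu_2,\mu_1]$ increases both terms. The resulting one-dimensional problem $\min_x \frac{\alpha}{2}(\mu_1-x)^2 + \frac{\beta}{2}(x-\mu_2)^2$ is an elementary convex quadratic with minimizer $x^\star = (\alpha\mu_1+\beta\mu_2)/(\alpha+\beta)$ and optimal value $\frac{\alpha\beta}{2(\alpha+\beta)}(\mu_1-\mu_2)^2$. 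Therefore $T^*(\boldsymbol{\mu})^{-1} = \frac{(\mu_1-\mu_2)^2}{2}\,\sup_{\boldsymbol{w}\in\Sigma_2}\frac{\alpha\beta}{\alpha+\beta}$.

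Next I would solve the outer supremum. Since $\frac{\alpha\beta}{\alpha+\beta} = (\alpha^{-1} + \beta^{-1})^{-1}$ and $\alpha^{-1} + \beta^{-1} = c_1/w_1 + c_2/w_2$, this is equivalent to minimizing $c_1/w_1 + c_2/w_2$ over the simplex $w_1+w_2=1$. By the Cauchy–Schwarz inequality, $\left(\frac{c_1}{w_1} + \frac{c_2}{w_2}\right)(w_1 + w_2) \geq (\sqrt{c_1}+\sqrt{c_2})^2$, with equality exactly when $w_a \propto \sqrt{c_a}$, i.e. $w_1 = \sqrt{c_1}/(\sqrt{c_1}+\sqrt{c_2})$ (a Lagrange-multiplier computation gives the same stationary point). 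Consequently $\sup_{\boldsymbol{w}\in\Sigma_2}\frac{\alpha\beta}{\alpha+\beta} = (\sqrt{c_1}+\sqrt{c_2})^{-2}$, so $T^*(\boldsymbol{\mu}) = 2(\sqrt{c_1}+\sqrt{c_2})^2/(\mu_1-\mu_2)^2$, and substituting this into Theorem~\ref{thm:general-lower-bound} proves Corollary~\ref{corr:2-arm-gauss}.

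The only place requiring care is the reduction of the inner infimum to the diagonal and to the compact interval $[\mu_2,\mu_1]$, together with the passage from the open alternative set to its closure; all of this is a routine monotonicity/continuity argument, so I do not expect a genuine obstacle. The substance of the corollary is the clean closed form: the optimal weights satisfy the square-root rule $w_a \propto \sqrt{c_a}$, which already exhibits how the cost-aware allocation departs from the uniform $(\tfrac12,\tfrac12)$ allocation that is optimal in ordinary two-armed Gaussian BAI.
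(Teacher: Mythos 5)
Your proposal is correct and follows essentially the same route as the paper: instantiate Theorem~\ref{thm:general-lower-bound}, reduce the inner infimum to the diagonal $\lambda_1=\lambda_2$ for unit-variance Gaussians, obtain the value $\frac{(\mu_1-\mu_2)^2}{2}\cdot\frac{w_1 w_2/(c_1 c_2)}{w_1/c_1+w_2/c_2}$, and maximize over $\boldsymbol{w}$ to find $w_1=\sqrt{c_1}/(\sqrt{c_1}+\sqrt{c_2})$ and $T^*(\boldsymbol{\mu})=2(\sqrt{c_1}+\sqrt{c_2})^2/(\mu_1-\mu_2)^2$. The only cosmetic difference is that you close the outer optimization with Cauchy--Schwarz while the paper differentiates in $w_1$ directly; both give the same square-root allocation.
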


It is noticeable that the dependence on cost is non-trivial but somehow involves the square root $\sqrt{c_a}$ for each action. This inspires our low-complexity algorithm Chernoff Overlap ($\mathsf{CO}$) based on a square-root rule. The lower bound of a slightly more general setting is provided in the appendix.

\textbf{The Optimal Weight $\boldsymbol{w}^*$ :} Let $\boldsymbol{w}^* = \{w_a^*\}_{a\in\mathcal{A}}$ be the solution of the sup-inf problem in the definition of $T^*(\boldsymbol{\mu})$ in Theorem \ref{thm:general-lower-bound}. The weight $\boldsymbol{w}^*$ is essential in designing efficient algorithms to match the lower bound, as it characterizes the optimal proportion of the total cumulative cost from pulling arm $a$. Concretely, any algorithm which matches the lower bound should satisfy:
\begin{equation}\label{eq:w-star}
    \lim_{\delta \rightarrow 0}\frac{c_a \mathbb{E}_{\boldsymbol{\mu}\times \boldsymbol{c}}\left[N_a(\tau_{\delta})\right]}{\mathbb{E}_{\boldsymbol{\mu}\times \boldsymbol{c}}\left[J(\tau_{\delta})\right]} = w_a^*, \quad \forall a\in\mathcal{A}
\end{equation}
This differs from~\citet{Garovoer16BAI}, where $w_a$ is the proportion of rounds that arm $a$ is pulled. Like $T^*(\boldsymbol{\mu})$, there is no closed-form expression for $\boldsymbol{w}^*$ in general bandit models with $K \geq 3$. In the appendix, we show that one can compute the desired quantities such as $T^*(\boldsymbol{\mu})$ and $\boldsymbol{w}^*$ by similarly solving $K$ continuous equations to \citet{Garovoer16BAI}. Therefore, we can readily apply iterative methods such as bisection to compute these values. We summarize this procedure in the $\mathsf{ComputeProportions}$ Algorithm (Algorithm.~\ref{alg:opt-prop} in the Appendix), which will be called regularly as a sub-routine in our proposed algorithms (Algorithm.~\ref{alg:ctas}). 

\section{Asymptotically Cost Optimal Algorithm}

\IncMargin{1em}
\begin{algorithm}[t]
    \caption{\textbf{C}ost-adapted \textbf{T}rack \textbf{A}nd \textbf{S}top ($\mathsf{CTAS}$)}\label{alg:ctas}
    \KwIn{confidence $\delta$; $\alpha \geq 1$; sufficiently large $B$; oracle function $\mathsf{ComputeProportions}(\boldsymbol{\mu},\boldsymbol{c})$.}
     pull each arm $a\in\mathcal{A}$ once as initialization\;
     \For{$t\geq K+1$}{
     forced exploration set $\mathcal{U}_t = \{a \mid N_a(t) < \sqrt{t}\}$ \;
     $\boldsymbol{w}^* = \mathsf{ComputeProportions}(\hat{\boldsymbol{\mu}}(t), \hat{\boldsymbol{c}}(t))$ \tcp*{compute optimal proportion}
    \If(\tcp*[f]{Sampling Rule}){$\mathcal{U}_t \neq \emptyset$}{
         pull the least-pulled arm: $a_{t} \in \underset{a \in \mathcal{A}}{\operatorname{argmin}}\ N_a(t)$ 
     }
    \Else{
        $a_{t} \in \underset{a \in \mathcal{A}}{\operatorname{argmax}}\ J(t)w_a^*- \widehat{c}_a(t) N_a(t)$ \tcp*{pull the arm with largest deficit}
        }
        \If(\tcp*[f]{Stopping Rule}){$Z(t) > \log\left(\frac{Bt^{\alpha}}{\delta}\right)$}{
            \textbf{break};
        }
    }
    \Return $\hat{a} = \underset{a \in \mathcal{A}}{\operatorname{argmax}}\  \hat{\mu}_a(t)$ \tcp*{Decision Rule}
\end{algorithm}

In this section, we propose a BAI algorithm called \textbf{C}ost-aware \textbf{T}rack \textbf{A}nd \textbf{S}top ($\mathsf{CTAS}$) whose cumulative cost performance asymptotically matches the lower bound in Theorem~\ref{thm:general-lower-bound} both in expectation and almost surely. We discuss each of the sampling, stopping and decision rules for $\mathsf{CTAS}$:

\textbf{Sampling Rule:}
From \eqref{eq:w-star}, a necessary condition for the optimal algorithm is derived. Our sampling rule in Algorithm \ref{alg:ctas} strives to match the proportion of the cost of each arm to the optimal proportion $\boldsymbol{w}^*(\boldsymbol{\mu})$. First, we force the empirical proportions $\hat{w}_{a} = \hat{c}_a N_a(t)/{J(t)}$ to not differ too greatly from the empirically optimal weights $\boldsymbol{w}^*(\hat{\boldsymbol{\mu}})$ using a largest-deficit-first like arm selection policy. We will show that as the empirical mean $\hat{\boldsymbol{\mu}} \to \boldsymbol{\mu}$, we will have $\boldsymbol{w}^*(\hat{\boldsymbol{\mu}}) \to \boldsymbol{w}^*(\boldsymbol{\mu})$, and the empirical cost proportion will also converge and concentrate along the optimal proportion $\boldsymbol{w}^*(\boldsymbol{\mu})$.

\begin{remark}
For Theorem \ref{thm:as-conv} and the almost sure optimality result, we do not need such a fast rate of convergence and only require that for every $a$, $N_a(t) \to \infty$.
\end{remark}

\textbf{Forced Exploration:} Also present in Algorithm \ref{alg:ctas} is the forced exploration, which pulls the least-pulled arm when $\mathcal{U}(t)$ is not empty (Line 6). This ensures each arm is pulled at least $\Omega(\sqrt{t})$ times, and makes sure that our plug-in estimate of $\boldsymbol{w}^*$ is sufficiently accurate. The $\sqrt{t}$ rate of forced exploration is carefully chosen to balance the sample complexity and the convergence rate of the empirical mean. If chosen too small, the fraction of cost from different arms will concentrate along the inaccurate estimation which results in sub-optimality. If chosen too large, the forced exploration will dominate the sampling procedure, leading to an almost uniform exploration which is sub-optimal.

\textbf{Stopping Rule and Decision Rule:}
We utilize the Generalized Likelihood Ratio statistic \citep{Chernoff1959} between the observations of arm $a$ and arm $b$ $Z_{a, b}(t)$. For an arbitrary exponential family, $Z_{a,b}(t)$ has a closed-form expression as follows:
\begin{align*}
    Z_{a, b}(t)= N_a(t) &d\left(\hat{\mu}_a(t), \hat{\mu}_{a, b}(t)\right) +N_b(t) d\left(\hat{\mu}_b(t), \hat{\mu}_{a, b}(t)\right),
\end{align*}
where $\hat{\mu}_{a, b}(t) = \hat{\mu}_{b, a}(t)$ is defined:
\[
\hat{\mu}_{a, b}(t):=\frac{N_a(t)}{N_a(t)+N_b(t)} \hat{\mu}_a(t)+\frac{N_b(t)}{N_a(t)+N_b(t)} \hat{\mu}_b(t).
\]
In particular, the Chernoff statistics $Z(t) = \max_{a \in \mathcal{A}} \min_{b \in\mathcal{A}, b \neq a} Z_{a,b}(t)$ measures the distance between an instance where the current empirical best arm is indeed the best arm, and the ``closest'' instance where the current empirical best arm is not the true best arm, both reflected through reward observations. So, the larger $Z(t)$ is, the more confident that the empirical best arm is indeed the best arm. The proposition below ensures the $\delta$-PAC guarantee of $\mathsf{CTAS}$.

\begin{proposition}[$\delta$-PAC]\label{prop:beta}
    Let $\delta \in(0,1)$ and $\alpha \geq 1$. There exists a constant $B_{\alpha}$\footnote{$B_\alpha$ satisfies $B_\alpha \geq 2K$ for $\alpha = 1$,  or
$\sum_{t=1}^{\infty} \frac{e^{K+1}}{K^K} \frac{\left(\log ^2\left(B_\alpha t^\alpha\right) \log t\right)^K}{t^\alpha} \leq B_\alpha$ for $\alpha > 1$.}
such that for all $B \geq B_{\alpha}$ the $\mathsf{CTAS}$ algorithm in Algorithm.~\ref{alg:ctas} is $\delta$-PAC, i.e.,
\begin{align*}
    \mathbb{P}_{\boldsymbol{\mu}\times \boldsymbol{c}}\left(\tau_\delta<\infty, \hat{a}_{\tau_\delta} \neq a^*\right) \leq \delta.
\end{align*}
\end{proposition}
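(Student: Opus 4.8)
The plan is to follow the standard Track-and-Stop $\delta$-PAC argument from \citet{kaufmann2016complexity} and \citet{Garovoer16BAI}, adapted to the fact that here the threshold is $\log(Bt^\alpha/\delta)$ rather than a fixed $\log(1/\delta)$. The event to control is $\{\tau_\delta < \infty,\ \hat a_{\tau_\delta} \neq a^*\}$. On this event, at the stopping time $t = \tau_\delta$ we have simultaneously $Z(t) > \log(Bt^\alpha/\delta)$ and $\hat a_t \neq a^*$. By definition of $Z(t) = \max_a \min_{b\neq a} Z_{a,b}(t)$ and the fact that $Z_{a,b}(t) = -Z_{b,a}(t)$ in sign-convention terms (more precisely, $Z_{a,b}(t) \geq 0$ iff $\hat\mu_a(t) \geq \hat\mu_b(t)$), the maximizing arm in $Z(t)$ is exactly the empirical best arm $\hat a_t$. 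So on the bad event, the true best arm $a^*$ satisfies $\hat\mu_{a^*}(t) < \hat\mu_{\hat a_t}(t)$, which forces $Z_{\hat a_t, a^*}(t) \le$ the relevant GLR statistic comparing the truth against the empirical winner; concretely $Z(t) \le Z_{\hat a_t, a^*}(t)$ is bounded above by $\sum_{c \in \{\hat a_t, a^*\}} N_c(t)\, d(\hat\mu_c(t), \hat\mu_{\hat a_t, a^*}(t))$, and standard manipulations (as in Lemma 19 of \citet{kaufmann2016complexity}) show this is dominated by $\inf_{\boldsymbol\lambda: a^*(\boldsymbol\lambda)\neq a^*}\sum_a N_a(t)\, d(\hat\mu_a(t), \lambda_a)$, i.e.\ by the GLR statistic for the true alternative set evaluated along the empirical means.

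Next I would invoke a deviation inequality for this GLR-type statistic. The key lemma (e.g.\ Lemma 1 of \citet{kaufmann2016complexity} / the self-normalized concentration bounds of Magureanu et al., or the version in \citet{Garovoer16BAI}) states that for exponential-family rewards,
\[
\mathbb{P}_{\boldsymbol\mu\times\boldsymbol c}\!\left(\exists t\in\mathbb{N}:\ \sum_{a} N_a(t)\, d\!\left(\hat\mu_a(t),\mu_a\right) \ge g(t)\right) \le \delta,
\]
holds once $g(t)$ is calibrated appropriately; the precise calibration is what forces the constant $B_\alpha$ in the footnote. On the bad event, $a^*$ itself is a valid alternative parameter vector realizing the infimum bound up to the contribution from arms other than $\hat a_t, a^*$ — but since the alternative only needs to flip the identity of the best arm, we can take $\boldsymbol\lambda$ equal to $\boldsymbol\mu$ on all coordinates except possibly moving $\mu_{\hat a_t}$ down or $\mu_{a^*}$ — and conclude $Z(\tau_\delta) \le \sum_a N_a(\tau_\delta)\, d(\hat\mu_a(\tau_\delta), \mu_a)$. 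Combining with the stopping condition, the bad event is contained in $\{\exists t:\ \sum_a N_a(t)\,d(\hat\mu_a(t),\mu_a) > \log(Bt^\alpha/\delta)\}$, whose probability is at most $\delta$ by the union/peeling bound, provided $B \ge B_\alpha$ with $B_\alpha$ chosen exactly so that the resulting series (after a peeling argument over the possible values of $(N_1(t),\dots,N_K(t))$, each at most $t$) telescopes/sums to at most $1$. This is precisely the condition $\sum_{t\ge1} \frac{e^{K+1}}{K^K}\frac{(\log^2(B_\alpha t^\alpha)\log t)^K}{t^\alpha} \le B_\alpha$ for $\alpha>1$, and the simpler $B_\alpha \ge 2K$ bound for $\alpha = 1$ handled via a slightly different (Kaufmann-style) argument.

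The main obstacle I expect is the calibration step: one must verify that the GLR statistic, evaluated at a \emph{data-dependent} stopping time with a \emph{time-varying} threshold $\log(Bt^\alpha/\delta)$, still admits a uniform-in-time deviation control. This requires a peeling argument over the simplex of pull-count vectors combined with a Chernoff bound on each $\sum_a n_a\, d(\hat\mu_a, \mu_a)$ for fixed counts $(n_a)$, then a union bound over the (polynomially many in $t$, hence the $(\log t)^K$ and power-of-$\log$ factors) relevant count configurations, and finally summability of the resulting series in $t$ — which is exactly where the growth rate $\alpha \ge 1$ and the constant $B_\alpha$ enter. Notably, the cost distributions and the bounded-positivity Assumption~\ref{ass:boundedness} play \emph{no role} in this proposition: the stopping rule and decision rule depend only on reward observations through $Z(t)$, so the $\delta$-PAC guarantee is inherited essentially verbatim from the reward-only analysis, with the only new ingredient being bookkeeping for the $t^\alpha$ factor. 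I would remark on this explicitly, since it cleanly separates the correctness guarantee (reward-only) from the cost-optimality guarantee (which is where $\boldsymbol c$ matters).
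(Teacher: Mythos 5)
Your proposal follows essentially the same route as the paper's proof: on the error event you bound the Chernoff statistic at stopping by $\sum_a N_a(t)\,d(\hat\mu_a(t),\mu_a)$ (the paper does this via the identity $Z_{a,a^*}(t)=\inf_{\mu_a'<\mu_{a^*}'}[N_a d(\hat\mu_a,\mu_a')+N_{a^*}d(\hat\mu_{a^*},\mu_{a^*}')]$ with the true means as a feasible point, which is the same "true model is a valid alternative" observation you make), then apply the Magureanu-type uniform-in-time deviation inequality with threshold $\log(Bt^\alpha/\delta)$ and the summability condition defining $B_\alpha$, deferring $\alpha=1$ to the Garivier--Kaufmann argument with $B=2K$ exactly as the paper does. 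Your remark that the cost distributions play no role in the correctness guarantee is also consistent with the paper's reward-only proof.
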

The cost bandit setting also encourages the algorithm to stop as early as possible, so the same stopping rules from traditional BAI~\citep{Garovoer16BAI} can be used. A more refined threshold can be found in \citet{kaufmann2021mixture}. However, we will use the threshold in Algorithm \ref{alg:ctas} for the rest of the paper for simplicity. Our Proposition.~\ref{prop:beta} combines Theorem 10 and Proposition 11 from~\citet{Garovoer16BAI}, and the proof will be provided in the appendix.

\textbf{Asymptotic Optimality for $\mathsf{CTAS}$:}
In Theorem.~\ref{thm:exp-opt}, we provide provable cost guarantees for the $\mathsf{CTAS}$ algorithm. Namely, the algorithm asymptotically achieves the lower bound in Theorem~\ref{thm:general-lower-bound} in expectation as the confidence level $\delta$ decreases to $0$. 

\begin{theorem}[Expected Upper Bound]\label{thm:exp-opt}
Let $\delta \in [0,1)$ and $\alpha \in[1, e / 2]$. Using Chernoff's stopping rule with $\beta(t, \delta)=\log (\mathcal{O}(t^\alpha) / \delta)$, the $\mathsf{CTAS}$ algorithm ensures:
$$
\limsup _{\delta \rightarrow 0} \frac{\mathbb{E}_{\boldsymbol{\mu} \times \boldsymbol{c}}\left[J(\tau_\delta)\right]}{\log (1 / \delta)} \leq \alpha T^*(\boldsymbol{\mu}).
$$
\end{theorem}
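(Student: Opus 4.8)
The plan is to adapt the Track-and-Stop analysis of~\citet{Garovoer16BAI} to the cost-weighted setting, with the key observation that the sampling rule tracks the cost-proportions $\hat w_a = \hat c_a N_a(t)/J(t)$ rather than the raw pull-proportions. First I would establish a \emph{sample-complexity-type} deterministic estimate: on a favorable event $\mathcal{E}_T$ on which all empirical means $\hat{\boldsymbol\mu}(t),\hat{\boldsymbol c}(t)$ are uniformly close to the truth for all $t \ge \sqrt T$ (using the $\Omega(\sqrt t)$ forced exploration to guarantee every $N_a(t)$ is large), the continuity of the map $(\boldsymbol\mu,\boldsymbol c)\mapsto (\boldsymbol w^*, T^*)$ — which follows from the characterization via the $K$ continuous equations referenced after Theorem~\ref{thm:general-lower-bound} — gives $\boldsymbol w^*(\hat{\boldsymbol\mu}(t),\hat{\boldsymbol c}(t)) \to \boldsymbol w^*(\boldsymbol\mu,\boldsymbol c)$. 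Then the largest-deficit tracking argument shows the realized cost-proportions $c_a N_a(t)/J(t)$ converge to $w_a^*$, hence $\sum_a \frac{N_a(t)}{J(t)} d(\hat\mu_a(t),\hat\mu_{a,b}(t))$ is eventually bounded below by $(1-\varepsilon)/T^*(\boldsymbol\mu)$ for the relevant pair $(a,b)$, so $Z(t) \ge (1-\varepsilon) J(t)/T^*(\boldsymbol\mu)$ for $t$ large on $\mathcal E_T$.

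Next I would convert this into a bound on the stopping cost. The stopping rule fires once $Z(t) > \log(Bt^\alpha/\delta)$; combining with the previous lower bound on $Z(t)$, on $\mathcal E_T$ the algorithm stops as soon as $J(t) \gtrsim (1+\varepsilon)T^*(\boldsymbol\mu)\log(Bt^\alpha/\delta)$. Since costs are bounded in $[\ell,1]$ by Assumption~\ref{ass:boundedness}, $J(t)$ and $t$ are comparable ($\ell t \le J(t) \le t$), so $\log(Bt^\alpha/\delta) = \alpha\log t + \log(B/\delta) \le \alpha \log J(\tau_\delta)/\ell + \log(B/\delta)$; solving the resulting implicit inequality $J(\tau_\delta) \lesssim (1+\varepsilon)\alpha T^*(\boldsymbol\mu)\log(1/\delta)$ up to lower-order $\log\log(1/\delta)$ and constant terms gives, on the favorable event, $\limsup_{\delta\to 0} J(\tau_\delta)/\log(1/\delta) \le \alpha(1+\varepsilon)T^*(\boldsymbol\mu)$ for every $\varepsilon > 0$.

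Finally I would handle the expectation. Define $T_0(\delta)$ as the (deterministic) threshold above which the favorable-event argument forces stopping, so $\tau_\delta \le T_0(\delta)$ on $\bigcap_{T \ge T_0(\delta)}\mathcal E_T$. For the complementary events I would invoke the standard concentration bound: $\mathbb P(\mathcal E_T^c)$ decays polynomially fast enough (e.g. summably, using Chernoff bounds for the exponential family together with the forced-exploration lower bound on $N_a(t)$) that $\sum_T T\cdot \mathbb P(\mathcal E_T^c) < \infty$; together with the $\delta$-PAC stopping guarantee this bounds $\mathbb E[J(\tau_\delta)] \le \mathbb E[\tau_\delta] \le T_0(\delta) + C$ for a constant $C$ independent of $\delta$. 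Dividing by $\log(1/\delta)$ and letting $\delta\to 0$, then $\varepsilon\to 0$, yields the claim. The main obstacle is the first step: proving that the largest-deficit tracking rule, applied to the \emph{cost-weighted} proportions with a \emph{moving} target $\boldsymbol w^*(\hat{\boldsymbol\mu}(t),\hat{\boldsymbol c}(t))$ that itself fluctuates, actually drives $c_aN_a(t)/J(t)\to w_a^*$ — one must control both the tracking error (deficit accumulated before the estimates stabilize) and the drift of the target, and show the $\sqrt t$ forced-exploration rate is fast enough to make the target converge yet slow enough not to distort the limiting proportions.
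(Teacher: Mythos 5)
Your overall route is the same as the paper's: a favorable event $\mathcal{E}_T$ on which $\hat{\boldsymbol{\mu}}(t)$ and $\hat{\boldsymbol{c}}(t)$ are uniformly close to $(\boldsymbol{\mu},\boldsymbol{c})$ over a window ending at $T$, continuity of $(\boldsymbol{\mu},\boldsymbol{c})\mapsto\boldsymbol{w}^*$, the largest-deficit tracking argument (Lemma~\ref{lemma:track}) forcing $\hat{c}_a(t)N_a(t)/J(t)$ close to $w_a^*$, the resulting linear-in-$J(t)$ lower bound on $Z(t)$, inversion of the implicit stopping inequality via Lemma~\ref{lemma:tech-bound}, and a summable bound on $\mathbb{P}(\mathcal{E}_T^c)$ (Lemma~\ref{lemma:neg-event}). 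The ``main obstacle'' you flag at the end --- tracking a moving target $\boldsymbol{w}^*(\hat{\boldsymbol{\mu}}(t),\hat{\boldsymbol{c}}(t))$ when $J(t)$ and $\hat{\boldsymbol{c}}(t)$ are themselves random --- is precisely what Lemma~\ref{lemma:track} and Theorem~\ref{thm:as-conv} resolve, so that part of your plan is sound and matches the paper.

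The genuine gap is in your final assembly of the expectation bound. You write $\mathbb{E}[J(\tau_\delta)]\le\mathbb{E}[\tau_\delta]\le T_0(\delta)+C$ with a \emph{deterministic time} threshold $T_0(\delta)$. On the favorable event the stopping condition controls the \emph{cost} $J(t)$, not the round index $t$: stopping is forced once $J(t)\,C^*_\epsilon(\boldsymbol{\mu},\boldsymbol{c})\ge\log(Bt^\alpha/\delta)$. To convert this into a deterministic time you must use $J(t)\ge\ell t$, so the earliest time at which stopping is \emph{guaranteed} is $T_0(\delta)\approx\alpha(1+\varepsilon)T^*(\boldsymbol{\mu})\log(1/\delta)/\ell$, and then $\mathbb{E}[J(\tau_\delta)]\le T_0(\delta)+C$ only yields $\limsup_{\delta\to0}\mathbb{E}[J(\tau_\delta)]/\log(1/\delta)\le\alpha T^*(\boldsymbol{\mu})/\ell$, missing the claimed constant by a factor $1/\ell\ge 1$ --- and matching $T^*(\boldsymbol{\mu})$ exactly is the entire content of the theorem. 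If instead you intend $T_0(\delta)$ to be the cost-level bound $\approx\alpha(1+\varepsilon)T^*(\boldsymbol{\mu})\log(1/\delta)$, then the middle inequality $\mathbb{E}[\tau_\delta]\le T_0(\delta)+C$ is false in general, since $\tau_\delta$ can be as large as $J(\tau_\delta)/\ell$. The paper avoids this detour by staying in cost units throughout: it shows $\mathcal{E}_T\subset\{J(\tau_\delta)\le\overline{C}(\eta,\epsilon)+T_\epsilon\}$ for $T$ large, where $\overline{C}(\eta,\epsilon)$ is the cost bound produced by Lemma~\ref{lemma:tech-bound} (the bounds $\ell T\le J(T)\le T$ are only used inside logarithms, where the $1/\ell$ loss is lower order), and then integrates the tail of $J(\tau_\delta)$ directly, $\mathbb{E}[J(\tau_\delta)]\le T_\epsilon+\overline{C}(\eta,\epsilon)+\sum_{T\ge1}\mathbb{P}(\mathcal{E}_T^c)$, using $\{J(\tau_\delta)>T\}\subset\mathcal{E}_T^c$. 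Since your second paragraph already derives the correct per-path cost bound on the favorable event, the fix is to feed that bound into the tail sum for $J(\tau_\delta)$ rather than passing through $\mathbb{E}[\tau_\delta]$; note also that the $\delta$-PAC property plays no role in this step --- all that is needed is the inclusion of $\{J(\tau_\delta)>T\}$ in $\mathcal{E}_T^c$ for $T$ beyond the favorable-event bound.
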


For optimality, we can simply take $\alpha = 1$ and choose $B \geq 2K$ from Proposition \ref{prop:beta}.
The proof of the theorem along with a weaker almost sure cost upper bound result (Theorem.~\ref{thm:as-upper}) will be provided in the appendix. The major difference of the expected upper bound and the weaker version is the rate of exploration, where Theorem.~\ref{thm:exp-opt} requires $\mathcal{O}(\sqrt{t})$ forced exploration rate while the weaker version suffice with $o(t)$. We first show the empirical proportion of the cost for each arm converges to the optimal proportion (Theorem.~\ref{thm:as-conv}), with the help of forced exploration rate. Then, the Chernoff stopping time ensures our algorithm stops early to guarantee $\delta$-PAC and to minimize the cost. 

\section{Low Complexity Algorithm}

Even though $\mathsf{CTAS}$ achieves asymptotically optimal cost performance, this algorithm suffers from the heavy computation time of computing $\boldsymbol{w}^*$. As shown in Table~\ref{tab:runtime} in Section \ref{sec:experiments}, the $\mathsf{CTAS}$ and $\mathsf{TAS}$ algorithm requires much more time to compute the sampling rule at each time step.
This leads to the desire for a ``model-free'' algorithm that does not require us to compute $\boldsymbol{w}^*$. In this section, we propose a low-complexity algorithm called Chernoff-Overlap ($\mathsf{CO}$) which is summarized in Algorithm~\ref{alg:co}. $\mathsf{CO}$ is based on action elimination. The main idea behind these algorithms is to sample each arm uniformly and then eliminate arms that can be declared sub-optimal with high probability. However, it is easy to see that sampling uniformly would not be a good idea in the case of heterogeneous costs. This requires that the sampling rule take into account the proper ratio of information gained from pulling an arm concerning the cost of that arm.

\textbf{Sampling Rule:}
To gain maximum information on the remaining uncertainty of reward, it is desirable to pull the arm with the largest decrease in ``overlap'' as shown in Fig.~\ref{fig:overlap_ellipses}, which results in the arm with minimum pulls $N_t(a)$. However, we also need to consider the cost of arms and weigh the decrease of overlap with cost. Through analysis of the two-armed Gaussian setting, this leads to our choice of sampling rule which weighs $N_a(t)$ with $\sqrt{c_a}$, called the square-root rule.

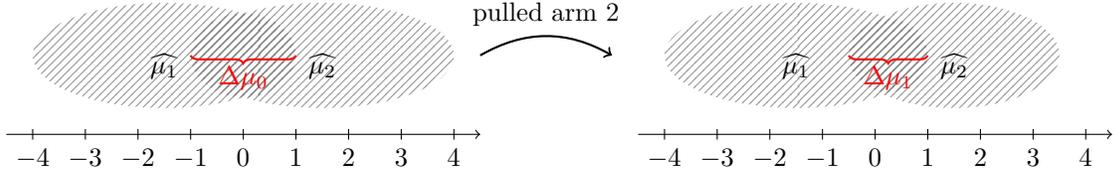
\begin{figure}[t]
    \begin{minipage}{.4\linewidth}
        \begin{tikzpicture}[scale=.7]
        \fill[pattern=north east lines, pattern color=gray!80] (-1.5,0) ellipse (2.5 and 1);
        \draw (-1.5,-.2) node {$\hat{\mu_1}$};
        \fill[pattern=north east lines, pattern color=gray!80] (1.5,0) ellipse (2.5 and 1);
        \fill[pattern=north east lines, pattern color=gray] (0,0) ellipse (1 and .8);
        \draw (1.5,-.2) node {$\hat{\mu_2}$};
        \draw[
            red,
            thick,
            decorate,
            decoration={
                brace,
                amplitude=3pt,
                mirror,
            },
        ]
        (-1, 0) -- (1,0) node[midway, below] {\red{$\Delta \mu_{0}$}};
        \draw[->] (-4.5,-1.5) -- (4.5,-1.5);
                \foreach \x in {-4, -3,-2,-1,0,1,2,3,4}
                    \draw (\x,-1.4) -- (\x,-1.6) node[below] {$\x$};
        \draw[->, bend left=30, thick] (4.5, 0) to node[above] {pulled arm 2} (7,0);
        \fill[pattern=north east lines, pattern color=gray!80] (10.5,0) ellipse (2.5 and 1);
        \draw (10.5,-.2) node {$\hat{\mu_1}$};
        \fill[pattern=north east lines, pattern color=gray!80] (13.5,0) ellipse (2 and 1);
        \fill[pattern=north east lines, pattern color=gray] (12.25,0) ellipse (.75 and .7);
        \draw (13.5,-.2) node {$\hat{\mu_2}$};
        \draw[
            red,
            thick,
            decorate,
            decoration={
                brace,
                amplitude=3pt,
                mirror,
            },
        ]
        (11.5, 0) -- (13,0) node[midway, below] {\red{$\Delta \mu_{1}$}};
        \draw[->] (7.5,-1.5) -- (16.5,-1.5);
                \foreach \x in {8,9,10,11,12,13,14,15,16}
                    \pgfmathsetmacro{\xminusone}{int(\x - 12)}
                    \draw (\x,-1.4) -- (\x,-1.6) node[below] {$\xminusone$};
    \end{tikzpicture}
    \end{minipage}
    \caption{Change in overlap upon pulling arm $2$, where the ellipsoids stand for confidence intervals. Left: wider confidence interval for $\mu_2$. Right: reduced confidence interval upon pulling arm 2.}
    \label{fig:overlap_ellipses}
\end{figure}

\textbf{Stopping Rule:}
Our stopping rule will still rely on the generalized likelihood ratio. For any time $t$, Let $a^*(t)$ be the empirical best arm, i.e., $a^*(t) = \arg\max_{a} \hat{\mu}_a(t)$. The Chernoff statistics we adopt in $\mathsf{CO}$ is instead the pairwise statistic $Z_{a^*(t), a}(t)$.
When it is large, the empirical reward $\hat{\mu}_a(t)$ of arm $a$ is significantly lower than the empirical reward of $a^*(t)$, which gives us high confidence to eliminate this arm. Naturally, we then stop when only one arm remains. The following proposition ensures that by the choice of a proper threshold, $\mathsf{CO}$ is $\delta$-PAC. The proof will be in the appendix.

\begin{proposition}[$\delta$-PAC]\label{prop:beta-co}
    Let $\delta \in(0,1)$ and $\alpha \geq 1$. There exists a large enough constant $B$\footnote{$B$ can be chosen the same as Proposition \ref{prop:beta}.} such that the Chernoff-Overlap algorithm in Algorithm.~\ref{alg:co} is $\delta$-PAC, i.e.,
    \begin{align*}
        \mathbb{P}_{\boldsymbol{\mu}\times \boldsymbol{c}}\left(\tau_\delta<\infty, \hat{a}_{\tau_\delta} \neq a^*\right) \leq \delta.
    \end{align*}
\end{proposition}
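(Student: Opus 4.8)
\textbf{Proof proposal for Proposition~\ref{prop:beta-co}.}

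The plan is to mimic the standard $\delta$-PAC argument for GLR-based stopping rules, adapted to the pairwise-elimination structure of $\mathsf{CO}$. The event $\{\hat a_{\tau_\delta} \neq a^*\}$ can only occur if at some round $t$ the algorithm eliminated the true best arm $a^*$, i.e., if $a^*$ was eliminated because some empirical best arm $a^*(t) \neq a^*$ achieved $Z_{a^*(t), a^*}(t) > \beta(t,\delta)$ for the chosen threshold $\beta(t,\delta) = \log(Bt^\alpha/\delta)$. Since $a^*(t)$ has higher empirical reward than $a^*$ on that round and the true gap has the opposite sign, this forces a large deviation of the empirical means from their true values. Concretely, I would first observe that on the event that $a^*$ is wrongly eliminated by arm $b = a^*(t)$, we have $\hat\mu_b(t) > \hat\mu_{a^*}(t)$, so $\hat\mu_{b,a^*}(t)$ lies between them, and the GLR statistic $Z_{b,a^*}(t) = N_b(t) d(\hat\mu_b(t), \hat\mu_{b,a^*}(t)) + N_{a^*}(t) d(\hat\mu_{a^*}(t), \hat\mu_{b,a^*}(t))$ lower-bounds (up to constants, via the usual contraction-of-KL argument) a sum of the form $N_b(t)\,d(\hat\mu_b(t), \xi) + N_{a^*}(t)\,d(\hat\mu_{a^*}(t), \xi)$ evaluated at any $\xi$, in particular at a point where the roles of $a^*$ and $b$ are swapped relative to the truth. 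The key point is that $Z_{b,a^*}(t) \le \inf_{\xi}\big[N_b(t) d(\hat\mu_b(t),\xi) + N_{a^*}(t) d(\hat\mu_{a^*}(t),\xi)\big]$ is exactly the GLR for testing $\mu_b \le \mu_{a^*}$ vs.\ $\mu_b > \mu_{a^*}$, and under the true model $\mu_b < \mu_{a^*}$ this statistic exceeding $\beta(t,\delta)$ is a rare event.

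The technical heart is a uniform-in-time deviation bound: I would invoke the maximal inequality for exponential-family GLR statistics from \citet{Garovoer16BAI} (Theorem~10 / the mixture-martingale bound of \citet{kaufmann2021mixture} referenced in the text), which states that for any pair $(a,b)$,
\begin{align*}
\mathbb{P}_{\boldsymbol{\mu}\times\boldsymbol{c}}\!\left(\exists t: \mu_b < \mu_a \text{ and } Z_{a,b}(t) > \log\!\left(\tfrac{Bt^\alpha}{\delta}\right)\right) \le \frac{\delta}{2K} \cdot C(\alpha,B)
\end{align*}
once $B$ is taken as in the footnote (the same constant as Proposition~\ref{prop:beta}). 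Summing the wrong-elimination event over the at most $K-1$ possible eliminators $b \neq a^*$ and applying a union bound then yields $\mathbb{P}_{\boldsymbol{\mu}\times\boldsymbol{c}}(\tau_\delta < \infty, \hat a_{\tau_\delta} \neq a^*) \le \delta$. One subtlety is that the relevant statistic in $\mathsf{CO}$ is $Z_{a^*(t),a}(t)$ with a data-dependent index $a^*(t)$, rather than a fixed pair; this is handled by further unioning over the choice of $a^*(t) = b$, which is absorbed into the factor $K$, exactly as in the treatment of the $\max$-$\min$ statistic in \citet{Garovoer16BAI}. The finite-stopping claim $\mathbb{P}(\tau_\delta < \infty) = 1$ follows separately from the forced/uniform exploration built into the sampling rule: every surviving arm is pulled infinitely often on the event $\{\tau_\delta = \infty\}$, so $N_a(t) \to \infty$ for all non-eliminated $a$, whence for any suboptimal surviving $a$ the statistic $Z_{a^*(t),a}(t)$ grows linearly in $t$ by the law of large numbers (since $\hat\mu_{a^*}(t) \to \mu_{a^*} > \mu_a \leftarrow \hat\mu_a(t)$), eventually exceeding the logarithmically-growing threshold — a contradiction, so all but one arm is eventually eliminated.

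The main obstacle I anticipate is making the reduction from the pairwise elimination statistic to the classical GLR deviation bound fully rigorous, in particular verifying that the threshold $\beta(t,\delta) = \log(Bt^\alpha/\delta)$ with the constant $B$ from Proposition~\ref{prop:beta}'s footnote is large enough to control the union over both the time index and the $O(K)$ candidate eliminating arms simultaneously. This is essentially a bookkeeping exercise given the mixture-martingale machinery, but one must be careful that the cost distribution and the square-root sampling rule play no role here: the $\delta$-PAC property depends only on the reward observations and the GLR threshold, and the sampling rule enters only through the (benign) requirement that eliminated arms stop being pulled while survivors keep being pulled enough to eventually trigger stopping. I would close by remarking that, as in \citet{Garovoer16BAI}, the same $B$ works for all three rules (Propositions~\ref{prop:beta} and~\ref{prop:beta-co}) because the deviation bound is agnostic to how arms are selected.
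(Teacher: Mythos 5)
Your proposal is correct and follows essentially the same route as the paper: reduce the error event to the best arm $a^*$ being wrongly eliminated by some empirical leader $b$ with $\hat{\mu}_b(t) > \hat{\mu}_{a^*}(t)$, upper bound $Z_{b,a^*}(t)$ by $N_b(t)d(\hat{\mu}_b(t),\mu_b) + N_{a^*}(t)d(\hat{\mu}_{a^*}(t),\mu_{a^*})$ via the infimum representation of the GLR, and control this with a time-uniform deviation inequality under the threshold $\log(Bt^\alpha/\delta)$ with the same $B$ as Proposition~\ref{prop:beta}. The only (cosmetic) difference is bookkeeping: you allocate $\delta$ over the $O(K)$ candidate eliminators by a pairwise maximal inequality plus a union bound, whereas the paper absorbs all arms at once into the event $\{\exists t: \sum_a N_a(t)d(\hat{\mu}_a(t),\mu_a) \geq \beta(t,\delta)\}$ and applies the $K$-dimensional deviation bound of Magureanu et al. directly.
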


\textbf{Cost Upper Bound for Chernoff-Overlap:}
It is difficult to relate an algorithm to the general cost lower bound in Theorem \ref{thm:general-lower-bound} without direct tracking. Therefore, we must resort to relating the cost upper bound of Chernoff-Overlap to the lower bound in cases where there is a closed-form solution. We consider the two-armed Gaussian bandits setting and show that Chernoff-Overlap is asymptotically cost-optimal for this special case, resulting in the following Theorem.

\begin{theorem}\label{thm:chernoff-two-arm}
    Let $\delta\in(0,1)$ and $\alpha \in (1, e/2)$. For any 2-armed Gaussian bandit model with rewards $\{\mu_1, \mu_2\}$ and costs $\{c_1, c_2\}$, under the $\mathsf{CO}$ algorithm in Algorithm \ref{alg:co} we have with probability $1$:
    \begin{align*}
        \limsup_{\delta \to 0} \frac{J(\tau_{\delta})}{\log(1/\delta)} \leq \frac{2\alpha\left(\sqrt{c_1}+\sqrt{c_2}\right)^2}{(\mu_1 - \mu_2)^2}.
    \end{align*}
\end{theorem}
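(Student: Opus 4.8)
The plan is to exploit the fact that with two arms elimination and stopping coincide, and that the GLR statistic is symmetric ($Z_{a,b}(t)=Z_{b,a}(t)$ since $\hat{\mu}_{a,b}(t)=\hat{\mu}_{b,a}(t)$), so the stopping time of $\mathsf{CO}$ is simply $\tau_\delta=\inf\{t:Z_{1,2}(t)>\log(Bt^\alpha/\delta)\}$; correctness (returning arm~$1$) is already handed to us by Proposition~\ref{prop:beta-co}, so only the cost has to be bounded. For unit-variance Gaussians the statistic has the closed form $Z_{1,2}(t)=\frac{N_1(t)N_2(t)}{N_1(t)+N_2(t)}\cdot\frac{(\hat{\mu}_1(t)-\hat{\mu}_2(t))^2}{2}$ whenever $\hat{\mu}_1(t)\ge\hat{\mu}_2(t)$, so the whole argument reduces to (i) identifying the almost-sure growth rates of $N_1(t),N_2(t)$ and of $J(t)$ under the square-root sampling rule, and (ii) inverting the stopping inequality.

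First I would pin down the almost-sure limits. The square-root rule keeps $\sqrt{\hat{c}_a(t)}\,N_a(t)$ balanced across the two arms; since the costs lie in $[\ell,1]$ (Assumption~\ref{ass:boundedness}) no arm is ever starved, so $N_1(t),N_2(t)\to\infty$ and hence $\hat{\mu}_a(t)\to\mu_a$ and $\hat{c}_a(t)\to c_a$ a.s.\ by the strong law. The key step is to upgrade ``$\sqrt{\hat{c}_a}N_a$ balanced'' into $N_a(t)/t\to\beta_a$ a.s., with $\beta_1\coloneqq\frac{\sqrt{c_2}}{\sqrt{c_1}+\sqrt{c_2}}$ and $\beta_2\coloneqq\frac{\sqrt{c_1}}{\sqrt{c_1}+\sqrt{c_2}}$: the greedy balancing keeps the gap $\sqrt{\hat{c}_1(t)}N_1(t)-\sqrt{\hat{c}_2(t)}N_2(t)$ under control (up to a term that is $o(t)$ once the cost estimates have concentrated), and since $N_1(t)+N_2(t)=t$, solving these two relations forces the stated proportions. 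A second application of the strong law to the realized per-arm costs then gives $J(t)/t=\frac1t\sum_a\sum_{k\le t:\,A_k=a}C_k\to c_1\beta_1+c_2\beta_2=\sqrt{c_1c_2}$ a.s., and therefore $Z_{1,2}(t)/t\to\beta_1\beta_2\cdot\frac{(\mu_1-\mu_2)^2}{2}=\frac{\sqrt{c_1c_2}\,(\mu_1-\mu_2)^2}{2(\sqrt{c_1}+\sqrt{c_2})^2}=:a$.

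Next I would invert the stopping rule on the full-measure event where these limits hold. Fixing $\epsilon>0$, for all large $t$ we have $Z_{1,2}(t)\ge(a-\epsilon)t$, so $\tau_\delta\le\inf\{t:(a-\epsilon)t>\log(Bt^\alpha/\delta)\}$; using the crude bound $\log(Bt^\alpha/\delta)\le\alpha\log(Bt/\delta)$ (valid once $B/\delta\ge1$) together with a standard inversion lemma as in \citet{Garovoer16BAI}, this gives $\tau_\delta\le\frac{\alpha}{a-\epsilon}\log(1/\delta)(1+o(1))$ as $\delta\to0$. Since $\tau_\delta\to\infty$, eventually $J(\tau_\delta)\le(\sqrt{c_1c_2}+\epsilon)\tau_\delta$, so combining and sending $\delta\to0$ then $\epsilon\to0$ yields
\[
\limsup_{\delta\to0}\frac{J(\tau_\delta)}{\log(1/\delta)}\le\frac{\alpha\sqrt{c_1c_2}}{a}=\frac{2\alpha(\sqrt{c_1}+\sqrt{c_2})^2}{(\mu_1-\mu_2)^2},
\]
which is exactly the claimed bound. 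The factor $\alpha$ is inherited purely from the bound on the threshold, so the leading constant approaches the lower bound of Corollary~\ref{corr:2-arm-gauss} as $\alpha\to1^+$.

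The hard part will be the tracking claim $N_a(t)/t\to\beta_a$: the weights $\sqrt{\hat{c}_a(t)}$ driving the balancing are random and only converge to $\sqrt{c_a}$, so I would have to show the balancing gap stays sublinear despite the moving estimates --- e.g.\ by splitting the horizon into the phase before and after $\hat{c}_a(t)$ has entered a small neighborhood of $c_a$ and bounding the accumulated drift on each piece. Everything else --- the closed form of $Z_{1,2}$, the two strong-law applications, and the inversion lemma --- is routine and parallels the analysis of $\mathsf{CTAS}$.
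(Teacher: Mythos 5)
Your proposal is correct and follows essentially the same route as the paper: a tracking lemma showing the square-root rule drives the sampling proportions to $\bigl(\tfrac{\sqrt{c_2}}{\sqrt{c_1}+\sqrt{c_2}},\tfrac{\sqrt{c_1}}{\sqrt{c_1}+\sqrt{c_2}}\bigr)$ almost surely (the paper's Lemma~\ref{lemma:asconv-co}, stated in terms of cost proportions $\hat{c}_aN_a/J$ rather than your $N_a/t$, which is an equivalent normalization), followed by the linear lower bound on $Z_{1,2}(t)$ and the same threshold inversion via Lemma~\ref{lemma:tech-bound}. The step you flag as hard is exactly where the paper's proof does its work, via the sandwich $c(t)N_2(t)-c(t)\le N_1(t)\le c(t)N_2(t)+1$ with $c(t)=\sqrt{\hat{c}_2(t)/\hat{c}_1(t)}$ bounded above and below by Assumption~\ref{ass:boundedness}.
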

The proof of Theorem.~\ref{thm:chernoff-two-arm} will be delayed to the appendix. The key to the proof is to show that under our sampling rule balanced by $\sqrt{\hat{c_a}(t)}$ for each arm, the empirical cost proportion $\hat{\boldsymbol{w}}(t)$ converges to the optimal proportion $\hat{\boldsymbol{w}}^*$. Then, we can apply a similar argument as the weaker version of Theorem~\ref{thm:exp-opt} to prove the upper bound. Comparing it to Corollary.~\ref{corr:2-arm-gauss}, we show our low-complexity algorithm is optimal in this setting. It is an important observation that in the homogeneous cost case, this algorithm reduces to a racing algorithm. It is well known that racing algorithms cannot be optimal on a general MAB model. However, we will show that it enjoys surprisingly good empirical performance over a wide range of bandit models with multiple arms in the next section. Establishing a provable suboptimality gap is an interesting future research problem. 

\section{Numerical Experiments}
\label{sec:experiments}
As shown before, $\mathsf{CO}$ does not inherit the strong theoretical guarantees of $\mathsf{CTAS}$. However, the main appeal of the algorithm comes from both its simplicity and the much more efficient computation time. As shown in Table \ref{tab:runtime}, $\mathsf{CO}$ takes significantly less time to run while maintaining good performance.

\begin{table}[H]
    \centering
    \begin{tabular}{ccccc}
    \toprule
    & $\mathsf{CO}$ & $\mathsf{CTAS}$ & $\mathsf{TAS}$ & $\mathsf{d}\text{-}\mathsf{LUCB}$  \\
    \midrule
    Gaussian & 85 & 1712 & 2410 & 82 \\
    Bernoulli & 58 & 1995 & 2780 & 60 \\
    Poisson & 96 & 3260 & 4633 & 101 \\
    \bottomrule
    \end{tabular}
    \caption{The process time (seconds) of each of the algorithms over 1000 trajectories for Gaussian, Bernoulli, and Poisson distributed rewards with $\boldsymbol{\mu} = [1.5, 1.0, 0.5]$ and $\boldsymbol{c} = [1, 0.1, 0.01]$.}
    \label{tab:runtime}
\end{table}
\begin{algorithm}[t]

\caption{Chernoff-Overlap Algorithm}\label{alg:co}
    \KwIn{confidence level $\delta$; $\alpha \geq 1$; sufficiently large constant $B$}
    pull each arm $a\in\mathcal{A}$ once as initialization\;
    \For{$t\geq K+1$}{
        \If(\tcp*[f]{Stopping Rule}){$|\mathcal{R}| \leq 1$}{
            \textbf{break};
        }
        eliminate all arms $a$ from $\mathcal{R}$ if $Z_{a^*(t), a}(t)> \log\left(\frac{Bt^{\alpha}}{\delta}\right)$, where $a^*(t)=\arg\max_a\hat{\mu}_a(t)$\;
        pull arm $a_{t} \in \underset{a \in \mathcal{R}}{\operatorname{argmin}}\ \sqrt{c_a} N_a(t)$   \tcp*{Sampling Rule}
    }
    \Return $\hat{a}\in \mathcal{R}$
\end{algorithm}
\textbf{Discussion:} Our square-root sampling rule of $\mathsf{CO}$ comes from reverse-engineering the optimal proportion in the two-armed Gaussian case and then separating the multi-arm problem into pairs of two-armed problems using action elimination. However, an interesting empirical result shown in Fig.~\ref{fig:diff-dist} is how well it generalizes to other reward distributions. This is illustrated in Figure \ref{fig:diff-dist}(b). We see that the change in reward distribution does not drastically impact performance. From this, we can deduce that the cost factor of $\sqrt{c}$ generalizes beyond Gaussian distributions. This is partially because when the shrinkage of confidence interval overlap is small, the exponential distribution family is locally Gaussian, and therefore can be approximated by Gaussian bandits. More evidence for this cost factor is shown in Figure \ref{fig:diff-dist}(a). Here we see that $\mathsf{CO}$ is approximately able to match the optimal proportions of arm pulls. The main distinction is that $\mathsf{CO}$ is more willing to pull the low-cost arm to eliminate it early on. This results in similar performance because the additional pulls are inexpensive relative to the other arms.  Another interesting observation is $\mathsf{CO}$ sometimes performs better than $\mathsf{CTAS}$. This is in part because of the elimination rule in $\mathsf{CO}$. While the same proof as $\mathsf{CTAS}$ can be utilized to show that $\mathsf{CO}$ is $\delta$-PAC, the theory does not utilize the full ``tightness'' of the $\mathsf{CTAS}$ stopping rule. The $\mathsf{CO}$ event of error lives in between the event of error for $\mathsf{CTAS}$ and the event bounded by theory, causing earlier stopping with less confidence. Empirically, we also had to do more exploration by a constant factor of $\sqrt{t}$ due to the added variance from random costs. Lastly, the $\mathsf{TAS}$ family algorithms are very sensitive to good initial starts, meaning that the results are also obfuscated by these extraordinarily long trajectories due to insufficient exploration.
\section{Conclusion}
In this work, we introduced a new MAB problem: Cost-Aware Best Arm Identification. We provided a new lower bound and an asymptotically optimal cost-adapted BAI algorithm. Finally, we introduced a low-complexity algorithm with promising empirical results.  As a future direction, it may be interesting to explore how this algorithm can be adapted to the regret setting in either the cost adapted setting \citep{RegretCost}, or as an ETC algorithm for carefully chosen costs.

\subsubsection*{Acknowledgments}\label{sec:ack}
This work is supported in part by NSF under grants 2112471, 2134081, 2207548, 2240981, and 2331780.

\begin{figure}[t]
\begin{minipage}{.49\textwidth}
\begin{minipage}{0.49\textwidth}
\includegraphics[width=\textwidth]{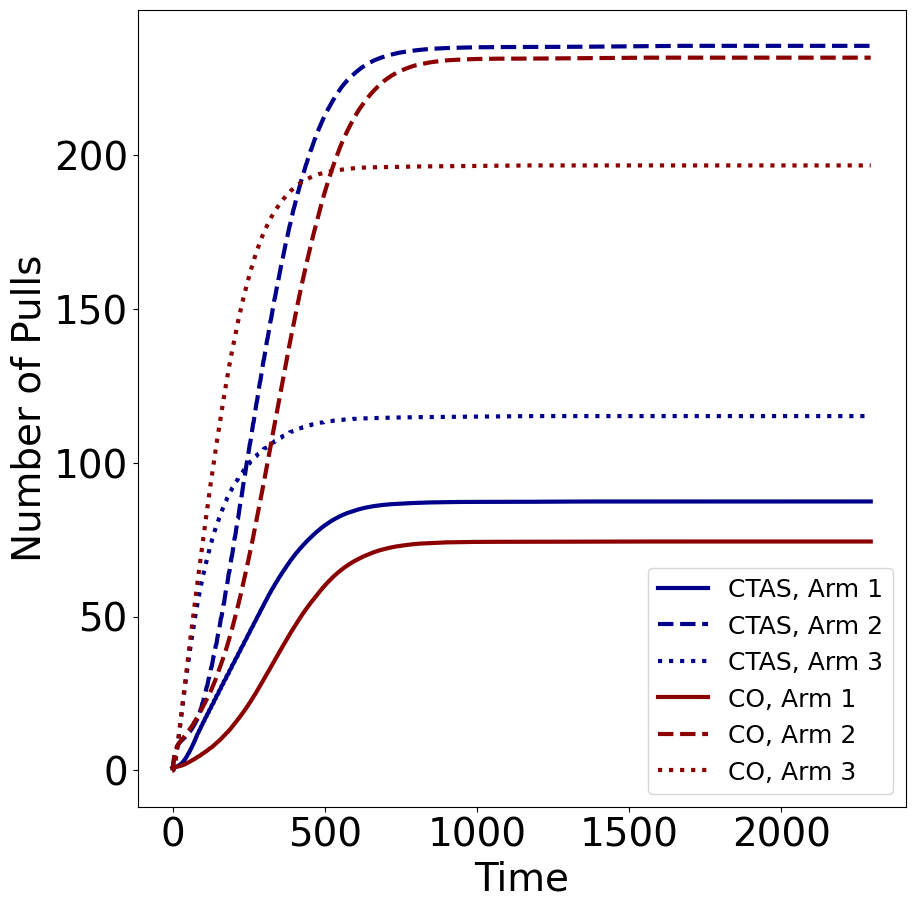}
\end{minipage}
\begin{minipage}{0.49\textwidth}
\includegraphics[width=\textwidth]{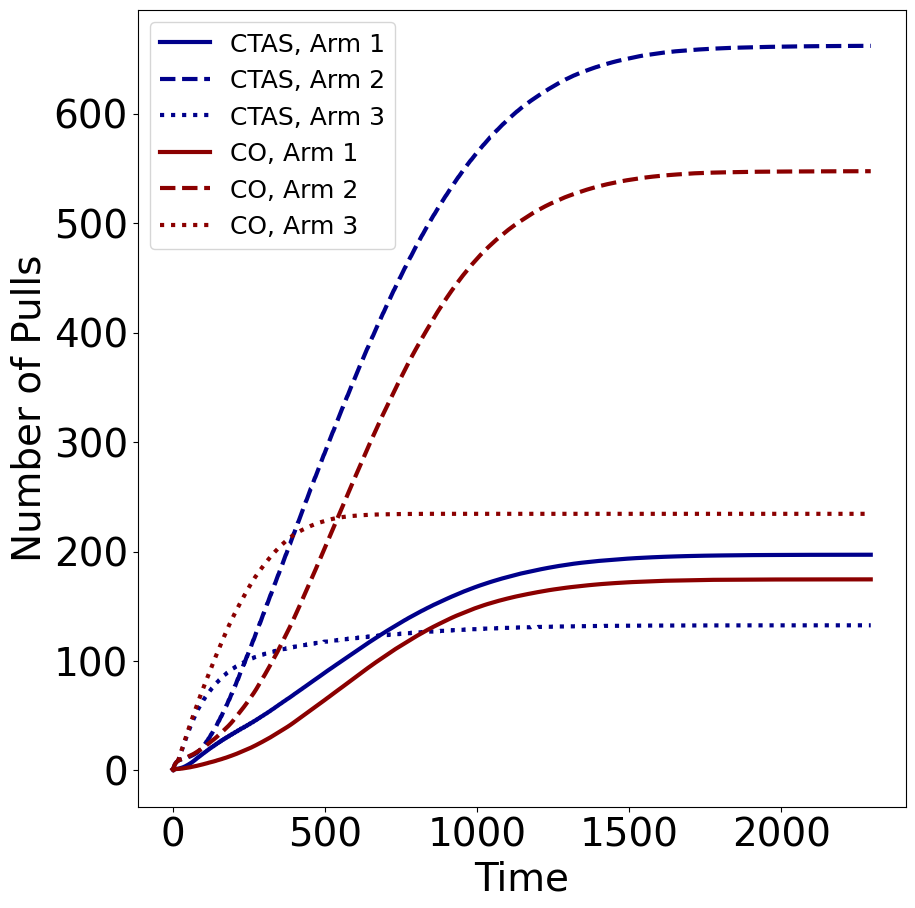}
\end{minipage}
\begin{center}
    (a) Arm pull trajectories
\end{center}
\end{minipage}
\begin{minipage}{.49\textwidth}
\begin{minipage}{0.49\textwidth}
\includegraphics[width=\textwidth]{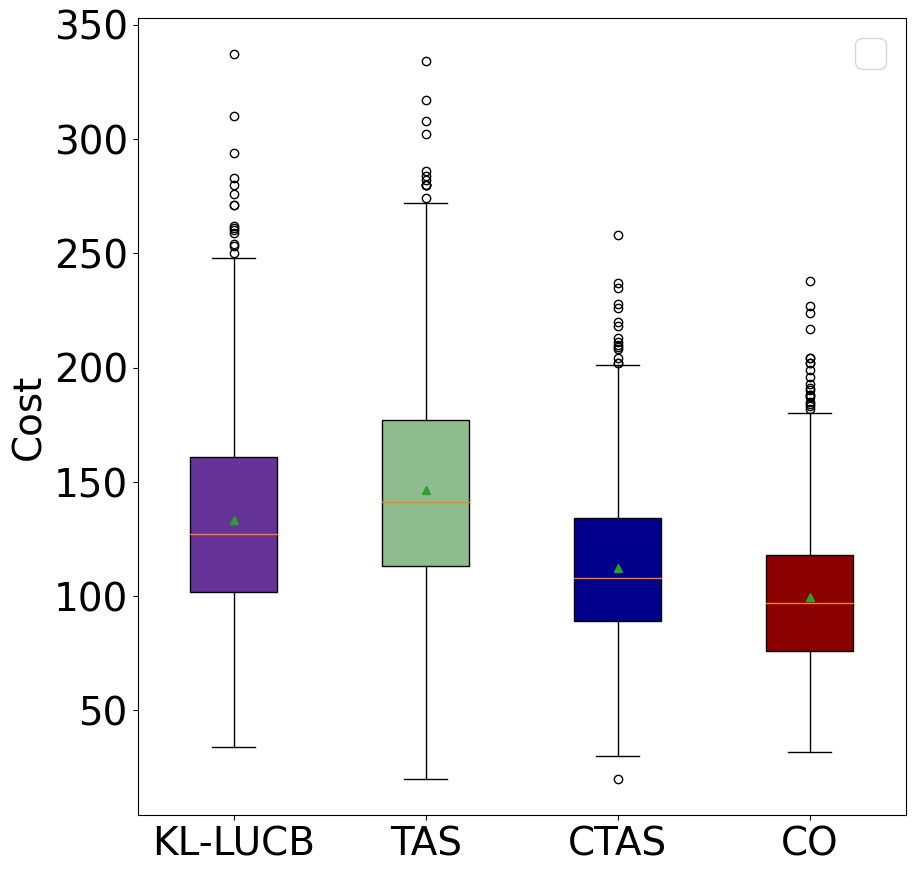}
\end{minipage}
\begin{minipage}{0.49\textwidth}
\includegraphics[width=\textwidth]{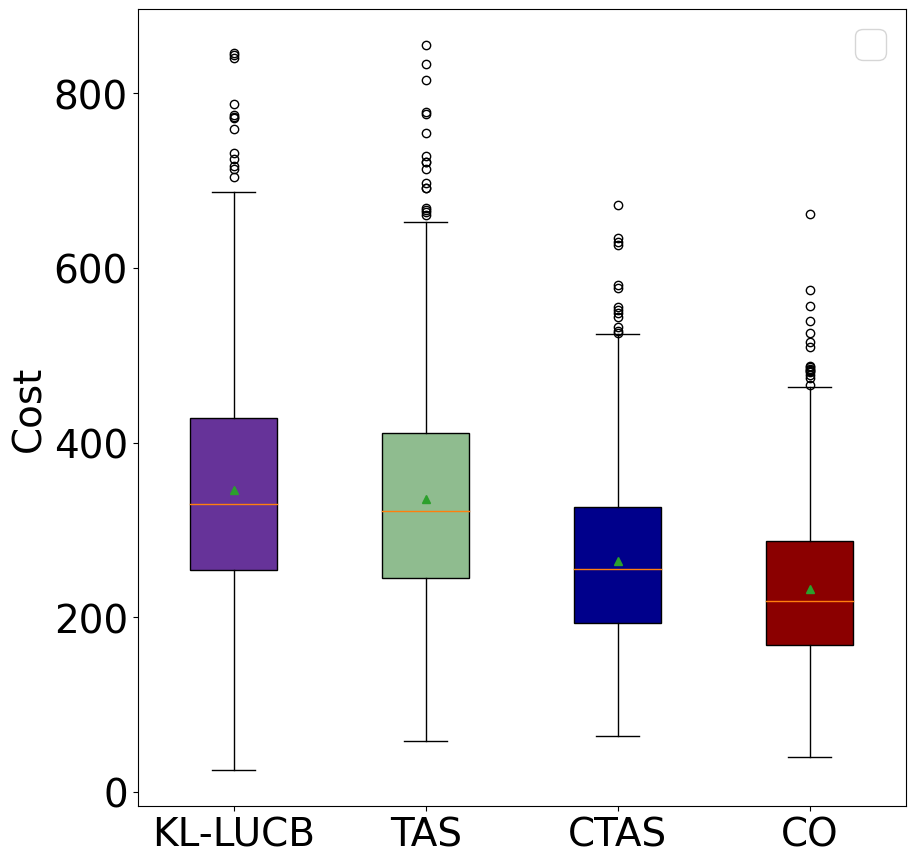}
\end{minipage}
\begin{center}
    (b) Total costs.
\end{center}
\end{minipage}
\caption{Results averaged over $1000$ trajectories with fixed confidence level $\delta = 10^{-6}$. In (a), we have the average number of arm pulls at each time $t$. In (b) we have the statistics regarding total cost for these trajectories. This figure was generated with $\boldsymbol{\mu}_1 = [1.5, 1, .5]$ and $\boldsymbol{\mu}_2 = [.9, .6, .3]$ with $\boldsymbol{c} = [1, .1, .01]$, where $\boldsymbol{\mu}_1$ and $\boldsymbol{\mu}_2$ follow a Bernoulli and Poisson distribution respectively.}
\label{fig:diff-dist}
\end{figure}

\newpage

\bibliography{ref_RLC}
\bibliographystyle{rlc}

\newpage 

\appendix

\section{Related Work}
In this section, we review the other classic problem formulations of the $\mathsf{MAB}$ problem. Here we provide a detailed discussion of less directly related works
\subsection{Best Arm Identification}
Traditional best arm identification (BAI) has been studied for decades \citep{bechhofer1958sequential}. In this setting, there are two possible objectives. First, we have the fixed confidence setting, where the agent aims to minimize the sample time (or the number of samples) while ensuring the best arm is identified with probability greater than $1 - \delta$. Second, there is the fixed budget setting, where the agent has a fixed number of samples and aims to minimize the probability of error. 
\subsubsection{Fixed Confidence}
In \citet{kaufmann2016complexity}, they introduce a non-asymptotic lower bound for BAI. Subsequently, they propose the Track and Stop algorithm ($\mathsf{TAS}$) that achieves this lower bound asymptotically. However, this algorithm requires the computation of the optimal ``proportion'' of arm pulls, $w^*$, at each iteration and requires forced exploration at the order of $\sqrt{\tau}$. The $\mathsf{TAS}$ algorithm has since been extended to a variety of other settings \citep{pmlr-v201-jourdan23a, garivier2021nonasymptotic, kato2021role}. In an attempt to ease computation time, \citet{menard2019gradient} implements a gradient ascent algorithm to get around having to compute the optimal proportions at each iteration.
\par
In \citet{degenne2019non}, they introduce an optimal algorithm in the non-asymptotic regime. This algorithm formulates the lower bound as a game between two zero-regret algorithms and pulls arms based on the outputted proportions of one player. However, the algorithm is even more computationally expensive than vanilla $\mathsf{TAS}$. For sub-gaussian rewards, \citet{barrier2022non} proposes an optimal algorithm with non-asymptotic optimal upper bound. Here, they also address forced exploration by instead computing the optimal proportions with high probability upper bounds on the empirical means at each iteration. However, the results do not hold for general exponential families.
\par
Prior to these ``model-based'' methods, there were many ``confidence-based'' algorithms \citep{2012kl, Kaufmann13kl, UGapE} focused on constructing high-probability confidence intervals. The stopping time is then when one confidence interval was disjoint from and greater than all the rest. While these algorithms are more computationally efficient than the aforementioned ``model-based'' algorithms, they do not perform nearly as well. These algorithms ideas were heavily inspired by the regret minimization setting and the heralded UCB algorithm \citep{LAI19854, Capp__2013, UCB-EXP}.
\subsubsection{Fixed Budget}
The budgeted $\mathsf{MAB}$ formulation was introduced in \citet{BwK} as a knapsack problem and allows the reformulation of the $\mathsf{MAB}$ problem to an assortment of constraints. As alluded to in \citet{RegretCost}, this framework is not generalizable to problem-dependent cost constraints as needed in our formulation. 
\par
From this, successive elimination algorithms emerged \citep{audibert2010best}. In these algorithms, the agent pulls every arm at each round then sees if any arms can be eliminated with high probability. The agent then stops when only one arm is remaining. This approach was proven optimal in \citet{carpentier2016tight} for compactly supported distributions and provides inspiration for our low-complexity algorithm.
\par
Cost has been introduced into the fixed budget case as a natural extension of the budget. Specifically, non-deterministic cost has been explored in the fixed budget setting for both discrete \citep{ding2013multi} and continuous cost \citep{xia2016budgeted}. In both cases, the density functions of the costs are compactly supported. Fixed budget BAI can be thought of as the dual problem to our problem setting. Instead of trying to maximize the confidence subject to a hard cost constraint, we try to minimize the cost subject to a hard confidence constraint.

\subsubsection{Best Arm Identification with Safety Constraints}
A somewhat similar formulation to the one in this work is the addition of safety constraints to the BAI problem. In \citet{wang2022best}, they propose a similar clinical trial example. However, each drug is associated with a dosage and the dosage has an associated safety level. They then attempt to identify the best drug and dosage level for some fixed confidence and safety level. On a similar note, \citet{hou2022almost} aims at identifying the best arm subject to a constraint on the variance of the best arm. In some way, our formulation can be viewed as BAI with soft constraints, where the agent is discouraged from certain actions but not forbidden. Additionally, in our formulation we allow for the best arm to be high cost unlike in \citet{hou2022almost}, where they require that the chosen arm satisfy the variance constraint.
\subsection{Regret Minimization}
Regret minimization is concerned with minimizing the number of pulls of suboptimal arms or equivalently maximizing the number of times the arm with the highest reward is pulled. Confidence interval based algorithms have proven effective for this setting \citep{LAI19854, UCB-EXP}. This led to fruitful work in finding tighter confidence intervals utilizing Kullback-Liebler divergence \citep{Capp__2013}. 
\subsubsection{Explore Then Commit}
Related to BAI, there is a category of regret minimization algorithms that follow an explore-then-commit approach \citep{Garovoer16ETC}. In these algorithms, the agent first attempts to identify the best arm as in BAI, then the agent commits to only pulling the identified best arm. However, it was shown in \citet{degenne2019bridging} that any ETC algorithm that first utilizes an optimal BAI algorithm will not achieve the optimal regret bounds from \citet{LAI19854, 1996Reg}. A further refined trade-off between best arm identification and regret minimization is studied in~\citet{zhang2024robai}.
\subsubsection{Regret Minimization With Cost}
The addition of cost has recently been explored for regret minimization. In \citet{RegretCost}, they introduce the cost as a separate objective, showing that no algorithm can optimally reduce the cost regret and reward regret. Similar to our work, they introduce cost-adapted versions of traditional regret-minimization algorithms. In \citet{CostExplore}, they introduce cost into the exploration-phase of an ETC algorithm. Here, the agent incurs a cost for each exploration step and is then tasked with minimizing a linear combination of the cost and regret. This is to circumvent the suboptimality of optimal BAI algorithms in regret minimization shown in \citet{degenne2019bridging} by introducing regret into the objective.

\section{Proofs for the Lower Bound Theorem and Related Corollaries}\label{app:lower-bound}

\subsection{Proof of the General Lower Bound Theorem.~\ref{thm:general-lower-bound}}
This section presents a formal proof of Theorem~\ref{thm:general-lower-bound}. Recall that our goal is to lower bound the expected cumulative cost $\mathbb{E}[J(\tau_\delta)]$, where $\tau_\delta$ is a stopping time depending on both the randomness of cost samples and the best arm identification algorithm. The following cost decomposition lemma is useful which is an analog to the classic regret decomposition lemma~\cite[Lemma 4.5]{lattimore2020bandit}.

\begin{lemma}[Cost Decomposition Lemma]\label{lemma:cost-decomp}
For any $K$-armed stochastic bandit environment, for any algorithm with stopping time $\tau$, the cumulative cost incurred by the algorithm satisfies:
\begin{equation}\label{eq:cost-decomp}
\mathbb{E}\left[J(\tau)\right] = \sum_{a = 1}^{K} c_a \cdot \mathbb{E}\left[N_a(\tau)\right].
\end{equation}
\end{lemma}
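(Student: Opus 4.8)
The plan is to mimic the proof of the classical regret decomposition lemma: write $J(\tau)$ as an infinite sum of per-round costs truncated at the stopping time, and then take conditional expectations round by round. Concretely, I would set $\mathcal{F}_k = \sigma(A_1, R_1, C_1, \ldots, A_k, R_k, C_k)$ to be the natural filtration, so that the sampling rule makes $A_k$ measurable with respect to $\mathcal{F}_{k-1}$, while conditionally on $\mathcal{F}_{k-1}$ and on $A_k$ the pair $(R_k, C_k)$ is distributed as $\nu_{\mu_{A_k}} \times \nu_{c_{A_k}}$; in particular $\mathbb{E}[C_k \mid \mathcal{F}_{k-1}] = c_{A_k}$. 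Since $\tau$ is a stopping time, the event $\{\tau \ge k\}$ is the complement of $\{\tau \le k-1\} \in \mathcal{F}_{k-1}$, hence it is $\mathcal{F}_{k-1}$-measurable.

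First I would write $J(\tau) = \sum_{k \ge 1} C_k \mathbf{1}_{\{\tau \ge k\}}$ and take expectations. Because $C_k \in [\ell, 1]$ is nonnegative by Assumption \ref{ass:boundedness}, Tonelli's theorem lets me exchange the expectation and the sum with no integrability hypothesis: if $\mathbb{E}[\tau] = \infty$ then both sides of \eqref{eq:cost-decomp} equal $+\infty$ and there is nothing to prove, and otherwise every quantity in sight is finite. Then for each fixed $k$, conditioning on $\mathcal{F}_{k-1}$ and using that $\mathbf{1}_{\{\tau \ge k\}}$ is $\mathcal{F}_{k-1}$-measurable gives $\mathbb{E}[C_k \mathbf{1}_{\{\tau \ge k\}}] = \mathbb{E}\!\left[\mathbf{1}_{\{\tau \ge k\}}\, \mathbb{E}[C_k \mid \mathcal{F}_{k-1}]\right] = \mathbb{E}\!\left[\mathbf{1}_{\{\tau \ge k\}}\, c_{A_k}\right]$.

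Next I would decompose $c_{A_k} = \sum_{a=1}^{K} c_a \mathbf{1}_{\{A_k = a\}}$, sum over $k$, interchange the two sums by Tonelli once more, and recognize $\sum_{k \ge 1} \mathbf{1}_{\{\tau \ge k\}} \mathbf{1}_{\{A_k = a\}} = \sum_{k=1}^{\tau} \mathbf{1}_{\{A_k = a\}} = N_a(\tau)$. This yields $\mathbb{E}[J(\tau)] = \sum_{a=1}^{K} c_a\, \mathbb{E}[N_a(\tau)]$, which is exactly \eqref{eq:cost-decomp}.

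The only delicate points — and thus the ``main obstacle,'' though it is a mild one — are the bookkeeping: making sure the filtration is arranged so that $A_k$ is predictable while $(R_k, C_k)$ is freshly sampled with the right conditional law, and justifying the two interchanges of sum and expectation; both are handled cleanly by Tonelli thanks to the nonnegativity of costs guaranteed by Assumption \ref{ass:boundedness}. No concentration or information-theoretic input is needed here — that enters only afterward, when this lemma is combined with the transportation inequality of \citet{kaufmann2016complexity} to establish Theorem \ref{thm:general-lower-bound}.
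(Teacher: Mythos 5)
Your proof is correct, but it follows a different route from the paper's. The paper proves Lemma~\ref{lemma:cost-decomp} by applying the tower property conditioned on the value of $\tau$ itself, writing $\mathbb{E}[J(\tau)] = \sum_a \mathbb{E}\left[\mathbb{E}\left[\sum_{t=1}^{\tau} C_t \mathbf{1}_{A_t=a} \mid \tau\right]\right]$ and then asserting that on $\{A_t = a\}$ the cost $C_t$ has mean $c_a$ and is independent of the stopping time, which collapses the inner expectation to $c_a\,\mathbb{E}[N_a(\tau)\mid\tau]$. You instead use the standard Wald-type argument: write $J(\tau)=\sum_{k\ge1} C_k \mathbf{1}_{\{\tau\ge k\}}$, note $\{\tau\ge k\}\in\mathcal{F}_{k-1}$, condition round by round to replace $C_k$ by $c_{A_k}$, and interchange sums via Tonelli. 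What your route buys is rigor precisely where the paper is loosest: the claimed independence of $C_t$ from $\tau$ is delicate, since in general (and in $\mathsf{CTAS}$ specifically, whose sampling rule uses $J(t)$ and $\hat{c}_a(t)$) the stopping time does depend on the observed costs; your argument needs only the predictability of $\mathbf{1}_{\{\tau\ge k\}}$ and the fresh sampling of $C_k$ given the past, so it sidesteps that issue entirely, and Tonelli also dispenses with any integrability worry (the identity holds in $[0,\infty]$). The one bookkeeping caveat, which you essentially flag yourself, is that with a randomized sampling rule $A_k$ is measurable only after enlarging $\mathcal{F}_{k-1}$ to include the exogenous randomization at round $k$; since $\{\tau\ge k\}$ remains measurable in the enlarged $\sigma$-algebra and $C_k$ is still conditionally distributed as $\nu_{c_{A_k}}$ given it, the argument goes through unchanged.
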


The proof of Lemma.~\ref{lemma:cost-decomp} closely follows the proof of the regret decomposition lemma. The only difference is that now we have to deal with a stopping time $\tau$ instead of a fixed time horizon, which can be achieved through applying the tower property carefully. With the help of the cost decomposition lemma, we are ready to prove Theorem.~\ref{thm:general-lower-bound}.

\begin{proof}[\textbf{Proof of Theorem \ref{thm:general-lower-bound}}]
Let $\delta \in(0,1)$ be the confidence level, let $\boldsymbol{\mu} \in \mathcal{S}$ be any bandit instance that we study, and consider a $\delta$-PAC algorithm. For any time index $t \geq 1$, denote by $N_a(t)$ the (random) number of draws of arm $a$ up to time $t$. 

Similar to the proof of Theorem 1 in~\cite{Garovoer16BAI}, we first invoke the 'transportation' lemma~\cite[Lemma 1]{kaufmann2016complexity}, \cite[Theorem 1]{Garovoer16BAI} to build a relationship between the expected number of draws $N_a(t)$ and the Kullback-Leibler divergence from bandit instance $\boldsymbol{\mu}$ to another bandit model $\boldsymbol{\lambda}$ which has a different best arm: for any $\boldsymbol{\lambda}\in \mathcal{S}$ such that $a^*(\boldsymbol{\lambda}) \neq a^*(\boldsymbol{\mu})$, we have:
\begin{align*}
    \sum_{a=1}^K d\left(\mu_a, \lambda_a\right) \mathbb{E}_{\boldsymbol{\mu}}\left[N_a\left(\tau_\delta\right)\right] \geq \sup_{\mathcal{E}\in \mathcal{F}_{\tau_\delta}}\operatorname{d}\left(\mathbb{P}_{\boldsymbol{\mu}}(\mathcal{E}), \mathbb{P}_{\boldsymbol{\lambda}}(\mathcal{E})\right).
\end{align*}
Selecting the event $\mathcal{E} = \{\hat{a} = 1\}$, we notice that our considered algorithm is $\delta$-PAC, so we require $\mathbb{P}_{\boldsymbol{\mu}}(\mathcal{E})\geq 1- \delta$ and $\mathbb{P}_{\boldsymbol{\lambda}}(\mathcal{E})\leq 1-\delta$, since the optimal arm for instance $\boldsymbol{\mu}$ is arm $1$ but the optimal arm for instance $\boldsymbol{\lambda}$ is not arm $1$. Together with the symmetry and monotonicity of the $\operatorname{d}$ function, we have:
\begin{align*}
    \operatorname{d}\left(\delta, 1-\delta\right) 
    \leq 
    \sum_{a=1}^K d\left(\mu_a, \lambda_a\right) \mathbb{E}_{\boldsymbol{\mu}}\left[N_a\left(\tau_\delta\right)\right].
\end{align*}
Then, we can choose the instance $\boldsymbol{\lambda}\in \operatorname{Alt}(\boldsymbol{\mu})$ which has a different best arm other than instance $\boldsymbol{\mu}$, to achieve the inf of the right hand side. With a little manipulation, we have:
\begin{align*}
    \operatorname{d}(\delta, 1-\delta) 
    \leq  \inf _{\lambda \in \operatorname{Alt}(\boldsymbol{\mu})} \sum_{a=1}^K d\left(\mu_a, \lambda_a\right) \mathbb{E}_{\boldsymbol{\mu}}\left[N_a\left(\tau_\delta\right)\right]
    =\inf _{\lambda \in \operatorname{Alt}(\boldsymbol{\mu})} \mathbb{E}_{\boldsymbol{\mu}}\left[J(\tau_{\delta})\right]\left(\sum_{a=1}^K \frac{\mathbb{E}_{\boldsymbol{\mu}}\left[N_a(\tau_\delta)\right]}{\mathbb{E}_{\boldsymbol{\mu}}\left[J(\tau_{\delta})\right]} d\left(\mu_a, \lambda_a\right)\right) .
\end{align*}
Now we use the cost decomposition lemma~\ref{lemma:cost-decomp} to decompose the cumulative cost in the denominator, and then we have:
\begin{align*}
    \operatorname{d}(\delta, 1-\delta)  
    =& \mathbb{E}_{\boldsymbol{\mu}}\left[J(\tau_{\delta})\right] \inf _{\lambda \in \operatorname{Alt}(\boldsymbol{\mu})}\left(\sum_{a=1}^K \frac{1}{c_a} \frac{ c_a \mathbb{E}_{\boldsymbol{\mu}}\left[ N_a(\tau_{\delta})\right]}{ \mathbb{E}_{\boldsymbol{\mu}}\left[J(\tau_\delta)\right]} d\left(\mu_a, \lambda_a\right)\right) \\
    =& 
    \mathbb{E}_{\boldsymbol{\mu}}\left[J(\tau_{\delta})\right] \inf _{\lambda \in \operatorname{Alt}(\boldsymbol{\mu})}\left(\sum_{a=1}^K \frac{1}{c_a} \frac{ c_a \mathbb{E}_{\boldsymbol{\mu}}\left[ N_a(\tau_{\delta})\right]}{ \sum_{a=1}^K c_a \mathbb{E}_{\boldsymbol{\mu}}\left[N_a(\tau_\delta)\right]} d\left(\mu_a, \lambda_a\right)\right).
\end{align*}
We let $w_a = \frac{c_a \mathbb{E}_{\boldsymbol{\mu}}\left[ N_a(\tau_{\delta})\right]}{\sum_{a=1}^K c_a \mathbb{E}_{\boldsymbol{\mu}}\left[ N_a(\tau_{\delta})\right]}$ be the proportion of cost incurred by arm $a$ which sums up to 1. Then, we can further upper bound the above expression by taking the sup over $\boldsymbol{w}$ as:
\begin{align*}
    \operatorname{d}(\delta, 1-\delta)  
    \leq \mathbb{E}_{\boldsymbol{\mu}}\left[J(\tau_\delta)\right] \sup _{\boldsymbol{w} \in \Sigma_K} \inf _{\operatorname{Alt}(\boldsymbol{\mu})}\left(\sum_{a=1}^K \frac{w_a}{c_a} d\left(\mu_a, \lambda_a\right)\right) 
    \leq \mathbb{E}_{\boldsymbol{\mu}}\left[J(\tau_\delta)\right] T^*(\boldsymbol{\mu})^{-1}.
\end{align*}
Thus, we can finally derive the lower bound on the cumulative cost as desired:
\begin{align*}
    \mathbb{E}_{\boldsymbol{\mu}}\left[J(\tau_\delta)\right] \geq T^*(\boldsymbol{\mu})d(\delta, 1 - \delta).
\end{align*}
\end{proof}

\subsection{Proof of Lemma \ref{lemma:cost-decomp}}
\begin{proof}[\textbf{Proof of Lemma.~\ref{lemma:cost-decomp}}]
    For any stopping time $\tau$, we apply the tower property to condition on the stopping time $\tau$ as follows:
\begin{align*}
    \mathbb{E}[J(\tau)] 
    = \mathbb{E}\left[ \sum_{t=1}^\tau C_t \right] 
    = \mathbb{E} \left[\sum_{a=1}^K \sum_{t=1}^\tau C_t \mathbf{1}_{A_t = a} \right] 
    = \sum_{a=1}^K \mathbb{E}\left[ \mathbb{E}\left[ \left.\sum_{t=1}^\tau C_t \mathbf{1}_{A_t=a} \right|\tau\right] \right].
\end{align*}
On event $\mathbf{1}_{A_t = a}$, $C_t$ is a random variable with expectation $c_a$ and independent of the stopping time $\tau$, so we have:
\begin{align*}
    \mathbb{E}\left[ \left.\sum_{t=1}^\tau C_t \mathbf{1}_{A_t=a} \right|\tau\right] 
    = c_a \mathbb{E}\left[ \left.\sum_{t=1}^\tau \mathbf{1}_{A_t=a} \right|\tau\right] 
    = c_a \mathbb{E}\left[ \left. N_a (\tau) \right|\tau \right].
\end{align*}
Therefore, we can put everything back and reuse the tower property as:
\begin{align*}
    \sum_{a=1}^K \mathbb{E}\left[ \mathbb{E}\left[ \left.\sum_{t=1}^\tau C_t \mathbf{1}_{A_t=a} \right|\tau\right] \right] 
    = \sum_{a=1}^K c_a \mathbb{E}\left[ \mathbb{E}\left[ \left.N_a (\tau) \right|\tau\right] \right] = \sum_{a=1}^K c_a \mathbb{E}[N_a(\tau)].
\end{align*}

\end{proof}

\subsection{Jensen-Shannon Divergence Form of Lower Bound}

Before proving the lower bound theorems in special Gaussian bandit models, we first provide an alternative characterization of the lower bound together with the instance-dependent constants $T^*(\boldsymbol{\mu})$ and $\boldsymbol{w}^*(\boldsymbol{\mu})$. Similar to~\cite{Garovoer16BAI}, for two expected rewards $\mu_1$ and $\mu_2$, we introduce the identically parameterized Jensen-Shannon divergence $I_{\alpha}(\mu_1, \mu_2)$ as follows:
\begin{equation}\label{eq:jens-shan}
    I_{\alpha}(\mu_1, \mu_2) \coloneqq \alpha d(\mu_1, \alpha \mu_1 + (1 - \alpha)\mu_2) + (1 - \alpha) d(\mu_2, \alpha \mu_1 + (1 - \alpha)\mu_2).
\end{equation}
The following Proposition characterizes the instance-dependent minimax problem from Theorem.~\ref{thm:general-lower-bound} using this Jensen-Shannon divergence:
\begin{proposition}\label{prop:jens-shan}
    For every $w \in \Sigma_K$,
    $$
    \inf _{\boldsymbol{\lambda} \in \operatorname{Alt}(\boldsymbol{\mu})}\left(\sum_{a=1}^K \frac{w_a}{c_a} d\left(\mu_a, \lambda_a\right)\right)=\min _{a \neq 1}\left(\frac{w_1}{c_1}+\frac{w_a}{c_a}\right) I_{\frac{w_1/c_1}{w_1/c_1+w_a/c_a}}\left(\mu_1, \mu_a\right) .
    $$
    It follows that
    $$
    \begin{aligned}
    T^*(\boldsymbol{\mu})^{-1} & =\sup _{w \in \Sigma_K} \min _{a \neq 1}\left(\frac{w_1}{c_1}+\frac{w_a}{c_a}\right) I_{\frac{w_1/c_1}{w_1/c_1+w_a/c_a}}\left(\mu_1, \mu_a\right), \\
    w^*(\boldsymbol{\mu}) & =\underset{w \in \Sigma_K}{\operatorname{argmax}} \min _{a \neq 1}\left(\frac{w_1}{c_1}+\frac{w_a}{c_a}\right) I_{\frac{w_1/c_1}{w_1/c_1+w_a/c_a}}\left(\mu_1, \mu_a\right) .
    \end{aligned}
    $$
\end{proposition}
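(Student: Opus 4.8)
The plan is to mirror the proof of Lemma~3 in \citet{Garovoer16BAI}, adjusting for the reweighting by $1/c_a$. Here arm $1 = a^*(\boldsymbol{\mu})$ as in the statement, so $\mu_1 > \mu_a$ for every $a \neq 1$; write $\beta_b \coloneqq w_b/c_b$. The first step is to decompose the alternative set into half-spaces indexed by the arm that overtakes arm $1$,
\[
\operatorname{Alt}(\boldsymbol{\mu}) \;=\; \bigcup_{a \neq 1}\bigl\{\boldsymbol{\lambda} : \lambda_a \geq \lambda_1\bigr\},
\]
where moving between the strict and non-strict inequality leaves the infimum of the (continuous) objective unchanged. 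Hence
\[
\inf_{\boldsymbol{\lambda} \in \operatorname{Alt}(\boldsymbol{\mu})} \sum_{b=1}^K \beta_b\, d(\mu_b, \lambda_b) \;=\; \min_{a \neq 1}\ \inf_{\boldsymbol{\lambda}:\, \lambda_a \geq \lambda_1} \sum_{b=1}^K \beta_b\, d(\mu_b, \lambda_b),
\]
and for each fixed $a \neq 1$ the constraint couples only $\lambda_1$ and $\lambda_a$, while every other term is nonnegative and is annihilated by the feasible choice $\lambda_b = \mu_b$. This collapses the inner problem to minimizing $f(\lambda_1,\lambda_a) = \beta_1 d(\mu_1,\lambda_1) + \beta_a d(\mu_a,\lambda_a)$ over $\{\lambda_a \geq \lambda_1\}$.

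Next I would use convexity. The map $\lambda \mapsto d(\mu,\lambda)$ is convex for every member of the family $\mathcal{P}$, so $f$ is convex with unconstrained minimizer $(\mu_1,\mu_a)$; since $\mu_a < \mu_1$ this point is infeasible, so the constrained minimum lies on the boundary $\lambda_1 = \lambda_a =: x$. It then remains to minimize the scalar function $g(x) = \beta_1 d(\mu_1,x) + \beta_a d(\mu_a,x)$. Invoking the exponential-family identity $\partial_x d(\mu,x) = (x-\mu)/V(x)$, with $V>0$ the variance function (implicit in the mean-parameterization of Assumption~\ref{ass:exp-family}), one gets $g'(x) = (\beta_1+\beta_a)(x - x^*)/V(x)$ with $x^* = (\beta_1\mu_1 + \beta_a\mu_a)/(\beta_1+\beta_a)$; since $V(x)>0$ this changes sign from $-$ to $+$ exactly at $x^*$, so $x^*$ is the unique minimizer. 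Writing $\alpha = \beta_1/(\beta_1+\beta_a) = \frac{w_1/c_1}{w_1/c_1+w_a/c_a}$, we have $x^* = \alpha\mu_1 + (1-\alpha)\mu_a$, and substituting back,
\[
g(x^*) = (\beta_1+\beta_a)\bigl[\alpha\, d(\mu_1,x^*) + (1-\alpha)\, d(\mu_a,x^*)\bigr] = \Bigl(\tfrac{w_1}{c_1}+\tfrac{w_a}{c_a}\Bigr) I_\alpha(\mu_1,\mu_a)
\]
by the definition \eqref{eq:jens-shan}. Taking $\min_{a\neq 1}$ establishes the first identity. The displayed formulas for $T^*(\boldsymbol{\mu})^{-1}$ and $w^*(\boldsymbol{\mu})$ are then immediate: substitute this identity into the inner infimum of the sup--inf definition of $T^*(\boldsymbol{\mu})$ from Theorem~\ref{thm:general-lower-bound}, and read off the maximizing $w$.

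I expect the genuinely substantive steps to be the reduction to the boundary together with the scalar minimization -- i.e.\ verifying that $d(\mu,\cdot)$ is convex and obeys the stated derivative formula for every distribution allowed by Assumption~\ref{ass:exp-family}, and that the weighted mean $x^*$ stays in the interior of the parameter range so that $V(x^*)>0$. These are exactly where the exponential-family structure enters; the decomposition of $\operatorname{Alt}(\boldsymbol{\mu})$ and the closing bookkeeping are routine and parallel \citet{Garovoer16BAI}.
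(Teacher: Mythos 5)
Your proposal is correct and follows essentially the same route as the paper: decompose $\operatorname{Alt}(\boldsymbol{\mu})$ into the half-spaces $\{\lambda_a \geq \lambda_1\}$, kill the uncoupled coordinates by $\lambda_b=\mu_b$, solve the resulting two-variable problem on the boundary $\lambda_1=\lambda_a$ at the $(w_a/c_a)$-weighted mean, and read off $T^*(\boldsymbol{\mu})^{-1}$ and $w^*(\boldsymbol{\mu})$ from Theorem~\ref{thm:general-lower-bound}. The only difference is that you spell out the scalar minimization via the derivative identity $\partial_x d(\mu,x)=(x-\mu)/V(x)$, a detail the paper states without computation.
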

\begin{proof}[\textbf{Proof of Proposition.~\ref{prop:jens-shan}}]
Let $\boldsymbol{\mu}$ such that $\mu_1>\mu_2 \geq \cdots \geq \mu_K$. Using the fact that
$$
\operatorname{Alt}(\boldsymbol{\mu})=\bigcup_{a \neq 1}\left\{\boldsymbol{\lambda} \in \mathcal{S}: \lambda_a>\lambda_1\right\},
$$
one has
$$
\begin{aligned}
T^*(\boldsymbol{\mu})^{-1} & =\sup _{w \in \Sigma_K} \min _{a \neq 1} \inf _{\boldsymbol{\lambda} \in \mathcal{S}: \lambda_a>\lambda_1} \sum_{a=1}^K \frac{w_a}{c_a} d\left(\mu_a, \lambda_a\right) \\
& =\sup _{w \in \Sigma_K} \min _{a \neq 1} \inf _{\boldsymbol{\lambda}\in \mathcal{S}: \lambda_a \geq \lambda_1}\left[\frac{w_1}{c_1} d\left(\mu_1, \lambda_1\right)+\frac{w_a}{c_a} d\left(\mu_a, \lambda_a\right)\right] ,
\end{aligned}
$$
where the second equality is true because the inner inf is achieved when $\lambda_{a'} = \mu_{a'}$ for all $a'\notin \{1,a\}$, and thus $d(\lambda_{a'},\mu_{a'}) =0$. Let $f(\lambda_1,\lambda_a)$ to be the function inside the bracket as follows:
$$
f\left(\lambda_1, \lambda_a\right)=\frac{w_1}{c_1} d\left(\mu_1, \lambda_1\right)+\frac{w_a}{c_a} d\left(\mu_a, \lambda_a\right).
$$
Then optimizing $f(\lambda_1,\lambda_a)$ under the constraint $\lambda_a \geq \lambda_1$ is a convex optimization problem that can be solved analytically. The minimum is obtained for
$$
\lambda_1=\lambda_a=\frac{w_1/c_1}{w_1/c_1+w_a/c_a} \mu_1+\frac{w_a/c_a}{w_1/c_1+w_a/c_a} \mu_a
$$
and value of $T^*(\boldsymbol{\mu})^{-1}$ can be rewritten $\left(\frac{w_1}{c_1}+\frac{w_a}{c_1}\right) I_{\frac{w_1/c_1}{w_1/c_1+w_a/c_a}}\left(\mu_1, \mu_a\right)$, using the function $I_\alpha$ defined in (\ref{eq:jens-shan}).
\end{proof}

\subsection{Proof of Corollary.~\ref{corr:2-arm-gauss}}

\begin{proof}[\textbf{Proof for Corollary.~\ref{corr:2-arm-gauss}}]
    We start with the expression of $T^*(\boldsymbol{\mu})^{-1}$. Recall its expression in two armed bandit models with $\mu_1 > \mu_2$:
\begin{align*}
    T^*(\boldsymbol{\mu})^{-1} = \sup_{w_1 + w_2 = 1}\inf_{\lambda_2 > \lambda_1 } \frac{w_1}{c_1}d(\mu_1,\lambda_1) + \frac{w_2}{c_2}d(\mu_2,\lambda_2).
\end{align*}
Recall that under unit-variance Gaussian bandit models, the d-divergence of two distributions with mean $\mu$ and $\lambda$ is simply $\frac{1}{2}(\mu - \lambda)^2$, so we have:
\begin{align*}
    T^*(\boldsymbol{\mu})^{-1} = \sup_{w_1 \in[0,1]}\inf_{\lambda_2 > \lambda_1 } \frac{w_1 \left(\mu_1 - \lambda_1 \right)^2}{2c_1} + \frac{(1-w_1)\left( \mu_2 - \lambda_2  \right)^2 }{2c_2}.
\end{align*}
Fix $w_1$ and solve the inner optimization problem, we have:
\begin{align*}
    \lambda_1 = \lambda_2 = \frac{c_2 w_1 }{c_2w_1 + c_1(1-w_1)}\mu_1 + \frac{c_1(1-w_1)}{c_2w_1 + c_1(1-w_1)}\mu_2.
\end{align*}
Substitute the expression of $\lambda_1$ and $\lambda_2$ into the outer optimization, we have:
\begin{align*}
    T^*(\boldsymbol{\mu})^{-1} 
    =& \sup_{w_1 \in[0,1]} \frac{w_1(1-w_1)^2 c_1(\mu_1 - \mu_2)^2}{2\left(c_2w_1 + c_1(1-w_1) \right)^2} + \frac{w_1^2(1-w_1) c_2(\mu_1 - \mu_2)^2}{2\left(c_2w_1 + c_1(1-w_1) \right)^2}\\
    =& \frac{(\mu_1 - \mu_2)^2}{2}\sup_{w_1 \in[0,1]}\frac{w_1(1-w_1)}{c_2w_1 + c_1(1-w_1) }.
\end{align*}
Then, solving this maximization problem gives us:
\begin{align*}
    w_1 = \frac{\sqrt{c_1}}{\sqrt{c_1}+\sqrt{c_2}}.
\end{align*}
Substitute the above expression into $T^*(\boldsymbol{\mu})^{-1}$, we can easily get:
\begin{align*}
    T^*(\boldsymbol{\mu})^{-1}  = \frac{(\mu_1 - \mu_2)^2}{2\left( \sqrt{c_1} + \sqrt{c_2} \right)^2}.
\end{align*}
Based on this expression of $T^*(\boldsymbol{\mu})^{-1}$ and Theorem.~\ref{thm:general-lower-bound}, the lower bound of cumulative cost for Gaussian two-armed bandit model with unit variance is simply:
\begin{align*}
    \mathbb{E}\left[J(\tau_{\delta})\right]
    \geq T^*(\boldsymbol{\mu}) \operatorname{d}(\delta, 1-\delta) 
    = \frac{2\left(\sqrt{c_1}+\sqrt{c_2}\right)^2 \operatorname{d}(\delta,1-\delta)}{(\mu_1 - \mu_2)^2}.
\end{align*}
\end{proof}

\subsection{Lower Bound on Another Simplified MAB Model}
Similar to the 2-armed Gaussian case, we can get an explicit lower bound on a slightly more general MAB model. Namely, we can handle an arbitrary amount of suboptimal arms provided that they share reward and cost distribution.

\begin{corollary}\label{corr:K-arm-gauss}
    Let $\delta\in(0,1)$. For any $\delta$-PAC algorithm and any $K$-armed unit-variance Gaussian bandit model with mean $\mu_1 > \mu_2 = \cdots = \mu_K$ and $c_1 \neq c_2 = \cdots = c_K$, we have for any $a\in\mathcal{A}:$
    \begin{align*}
        \mathbb{E}_{\boldsymbol{\mu}\times \boldsymbol{c}}\left[J(\tau_{\delta})\right] \geq \frac{2 \left( c_1 + c_a + \frac{K\sqrt{c_1c_a}}{\sqrt{K-1}}\right) }{(\mu_1 - \mu_a)^2} d(\delta,1-\delta).
    \end{align*}
\end{corollary}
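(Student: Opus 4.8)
The plan is to follow the template of Corollary~\ref{corr:2-arm-gauss}, but now with $K-1$ identical suboptimal arms. I would start from the Jensen--Shannon characterization in Proposition~\ref{prop:jens-shan}: since all suboptimal arms share the same reward $\mu_2$ and cost $c_2$, the outer problem over $\boldsymbol{w}\in\Sigma_K$ is symmetric in $w_2,\dots,w_K$, so at the optimum we may take $w_2=\cdots=w_K=w$ and $w_1 = 1-(K-1)w$. Then every term $\bigl(\tfrac{w_1}{c_1}+\tfrac{w_a}{c_a}\bigr) I_{\cdot}(\mu_1,\mu_a)$ for $a\neq 1$ is identical, so the $\min_{a\neq 1}$ disappears and
\[
    T^*(\boldsymbol{\mu})^{-1} = \sup_{w}\ \Bigl(\frac{w_1}{c_1}+\frac{w}{c_2}\Bigr) I_{\frac{w_1/c_1}{w_1/c_1+w/c_2}}(\mu_1,\mu_2),
    \qquad w_1 = 1-(K-1)w.
\]
Under unit-variance Gaussians, $d(\mu,\lambda)=\tfrac12(\mu-\lambda)^2$ and the identity $I_\alpha(\mu_1,\mu_2) = \tfrac12\,\alpha(1-\alpha)(\mu_1-\mu_2)^2$ holds, exactly as in the $2$-armed computation. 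Substituting, the objective collapses to
\[
    T^*(\boldsymbol{\mu})^{-1} = \frac{(\mu_1-\mu_2)^2}{2}\,\sup_{w\in[0,\,1/(K-1)]}\ \frac{w_1\,w}{c_2\,w_1 + c_1\,w},
    \qquad w_1 = 1-(K-1)w,
\]
which is the direct analogue of the scalar maximization that appeared in Corollary~\ref{corr:2-arm-gauss} with $1-w_1$ replaced by $(K-1)w$.

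Next I would solve this one-dimensional maximization in $w$ by setting the derivative to zero. Writing $g(w) = \dfrac{(1-(K-1)w)\,w}{c_2(1-(K-1)w) + c_1 w}$, a routine differentiation gives a quadratic in $w$; the relevant root should come out to
\[
    w = \frac{\sqrt{c_1}}{\sqrt{K-1}\,\bigl(\sqrt{c_1\,(K-1)} + \sqrt{c_2}\,(K-1)\bigr)} \quad\text{(up to simplification)},
\]
and correspondingly $w_1 = \dfrac{\sqrt{c_1(K-1)}}{\sqrt{c_1(K-1)}+\sqrt{c_2}}$ (mirroring $w_1 = \sqrt{c_1}/(\sqrt{c_1}+\sqrt{c_2})$ in the two-armed case, with $c_2$ effectively scaled by $K-1$ through the $(K-1)$-fold repetition). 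Plugging the optimal $w$ back into $g$ and simplifying the algebra, I expect to land on
\[
    T^*(\boldsymbol{\mu})^{-1} = \frac{(\mu_1-\mu_2)^2}{2\bigl(c_1 + c_2 + \tfrac{K}{\sqrt{K-1}}\sqrt{c_1 c_2}\bigr)},
\]
so that $T^*(\boldsymbol{\mu}) = \dfrac{2\bigl(c_1+c_2+\tfrac{K\sqrt{c_1 c_2}}{\sqrt{K-1}}\bigr)}{(\mu_1-\mu_2)^2}$. Combining with Theorem~\ref{thm:general-lower-bound} (and noting that for arbitrary $a\neq 1$ we have $\mu_a=\mu_2$, $c_a=c_2$, which is why the statement is phrased with index $a$) yields $\mathbb{E}_{\boldsymbol{\mu}\times\boldsymbol{c}}[J(\tau_\delta)] \geq T^*(\boldsymbol{\mu})\,d(\delta,1-\delta)$, which is the claimed bound.

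The main obstacle is the symmetry-reduction step: I need to justify rigorously that the $\sup$ over the full simplex $\Sigma_K$ is attained at a point with $w_2=\cdots=w_K$. This follows because the map $\boldsymbol{w}\mapsto \min_{a\neq 1}(\cdot)$ is concave (it is a min of concave functions, each linear-fractional-composed-with-concave in the relevant variables) and invariant under permuting the suboptimal coordinates; averaging any optimizer over that permutation group keeps us in $\Sigma_K$, does not decrease the objective by concavity, and produces the symmetric point. After that reduction everything is the bookkeeping already carried out for $K=2$, just with $c_2$ carrying an extra multiplicity factor, so the remaining risk is purely algebraic simplification of the optimal $w$ and the final closed form.
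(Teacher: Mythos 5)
You follow the paper's own route: invoke Proposition~\ref{prop:jens-shan}, exploit the symmetry among the $K-1$ identical suboptimal arms to restrict to weights $w_2=\cdots=w_K=w$ with $w_1=1-(K-1)w$, use $I_\alpha(\mu_1,\mu_a)=\tfrac12\alpha(1-\alpha)(\mu_1-\mu_a)^2$ for unit-variance Gaussians, solve a one-dimensional maximization, and plug the resulting $T^*(\boldsymbol{\mu})$ into Theorem~\ref{thm:general-lower-bound}. Your justification of the symmetric reduction is sound (and can be made even simpler than concavity: for fixed $w_1$ each pairwise term $\frac{w_1w_a}{c_aw_1+c_1w_a}$ is increasing in $w_a$, so equalizing the suboptimal weights maximizes the minimum); this is a step the paper's proof takes for granted, so adding it is a genuine improvement.

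The gap is in the algebra you defer. First, your claimed maximizer is incorrect: the first-order condition is $c_1(1-w_1)^2=(K-1)c_a\,w_1^2$, giving $w_1^*=\frac{\sqrt{c_1}}{\sqrt{c_1}+\sqrt{(K-1)c_a}}$ and $w^*=\frac{\sqrt{c_a}}{\sqrt{K-1}\left(\sqrt{c_1}+\sqrt{(K-1)c_a}\right)}$; the formulas you wrote attach the factor $K-1$ to $c_1$ rather than to $c_a$, and they violate the constraint $w_1+(K-1)w=1$ unless $c_1=c_a$. Second, and more seriously, the value of the maximization is not what you ``expect to land on'': substituting the correct maximizer yields
\[
T^*(\boldsymbol{\mu})^{-1}=\frac{(\mu_1-\mu_a)^2}{2\left(\sqrt{c_1}+\sqrt{(K-1)c_a}\right)^2}
=\frac{(\mu_1-\mu_a)^2}{2\left(c_1+(K-1)c_a+2\sqrt{(K-1)c_1c_a}\right)},
\]
whereas the stated denominator is $2\bigl(c_1+c_a+\frac{K\sqrt{c_1c_a}}{\sqrt{K-1}}\bigr)$; the two differ by $2(K-2)\bigl(c_a+\frac{\sqrt{c_1c_a}}{\sqrt{K-1}}\bigr)$ and coincide only for $K=2$ (e.g.\ with $K=3$, $c_1=4$, $c_a=1$ the maximum is $(2+\sqrt2)^{-2}\approx0.086$, not $\approx0.108$). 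Since $\left(\sqrt{c_1}+\sqrt{(K-1)c_a}\right)^2\ge c_1+c_a+\frac{K\sqrt{c_1c_a}}{\sqrt{K-1}}$, the corollary as stated still follows---your computation, done correctly, proves a stronger bound---but the proposal asserts an equality that fails for $K\ge3$, so to close the argument you must either report the stronger constant or add this comparison inequality explicitly. Be aware that the paper's appendix proof derives exactly the weights $w_1^*,w_a^*$ above and then records the stated closed form after substitution, so you will encounter the same discrepancy when checking your work against it.
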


\begin{proof}[\textbf{Proof of Corollary.~\ref{corr:K-arm-gauss}}]
    Let $\Delta = |\mu_1 - \mu_a|$. By Proposition.~\ref{prop:jens-shan}, and together with the definition of Jensen-Shannon divergence in Eq.~\eqref{eq:jens-shan} for Gaussian bandits, we can write $T^*(\boldsymbol{\mu})$ in closed form as:
\begin{align*}
    T^*(\boldsymbol{\mu})^{-1} =& \max_{w: w_1 + (K-1)w_a = 1} \left( \frac{w_1}{c_1} + \frac{w_a}{c_a} \right) \left(\alpha \frac{(1-\alpha)^2(\mu_1 - \mu_a)^2}{2} + (1 - \alpha) \frac{\alpha^2(\mu_1-\mu_a)^2}{2} \right)\\
    =& \max_{w: w_1 + (K-1)w_a = 1} \left( \frac{w_1}{c_1} + \frac{w_a}{c_a} \right) \alpha(1-\alpha)\frac{\Delta^2}{2},
\end{align*}
where $\alpha = \frac{w_1/c_1}{w_1/c_1 + w_a/ c_a}$ and $\Delta = \mu_1 - \mu_a$. So further, we have:
\begin{align*}
    T^*(\mu)^{-1} 
    =& \max_{w: w_1 + (K-1)w_a = 1} \frac{\frac{w_1w_a}{c_1c_a}}{\frac{w_1}{c_1}+\frac{w_a}{c_a}}\frac{\Delta^2}{2}
\end{align*}
substitute $w_1$ with $1-(K-1)w_a$ and take derivatives to maximize it, we can get:
\begin{align*}
    w_1^* =& \frac{\sqrt{c_1}}{\sqrt{c_1}+\sqrt{(K-1)c_a}} =  \frac{\sqrt{(K-1)c_1}}{\sqrt{(K-1)c_1}+(K-1)\sqrt{c_a}}\\
    w_a^* =& \frac{\sqrt{c_a}}{\sqrt{K-1}(\sqrt{c_1}+\sqrt{c_a(K-1)})} = \frac{\sqrt{c_a}}{\sqrt{(K-1)c_1}+(K-1)\sqrt{c_a}}.
\end{align*}

So substitute the expression in, we have:
\begin{align*}
    T^*(\mu)^{-1} = \frac{\sqrt{K-1}}{\left(\sqrt{(K-1)c_1} + \sqrt{c_a}\right)\left( \sqrt{(K-1)c_a} + \sqrt{c_1}\right)} \frac{\Delta^2}{2} = \frac{\Delta^2}{2 \left( c_1 + c_a + \frac{K\sqrt{c_1c_a}}{\sqrt{K-1}}\right)}.
\end{align*}
Thus the lower bound is
\begin{align*}
    \mathbb{E}\left[ \sum_a c_a N_{\tau}(a) \right] \geq \frac{2 \operatorname{d}(\delta,1-\delta)}{\Delta^2}\left( c_1 + c_a + \frac{K\sqrt{c_1c_a}}{\sqrt{K-1}}\right) .
\end{align*}
\end{proof}

\section{Computing the Optimal Proportion $\boldsymbol{w}^*$}

\subsection{Characterization of Optimal Proportion $\boldsymbol{w}^*$}

As in \cite{Garovoer16BAI}, for every $a \in\{2, \ldots K\}$, we define the $g_a(x): \mathbb{R} \rightarrow \mathbb{R}$ function which resembles the expression of instance dependent constant $T^*(\boldsymbol{\mu})$ as follows:
$$
g_a(x)=(1+x) I_{\frac{1}{1+x}}\left(\mu_1, \mu_a\right) .
$$
The function $g_a$ is a strictly increasing one-to-one mapping from $\left[0,+\infty\right)$ onto $\left[0, d\left(\mu_1, \mu_a\right)\right)$ \cite[Theorem 5]{Garovoer16BAI}. Next, we define its inverse function $x_a(y): \mathbb{R} \rightarrow \mathbb{R}$ as follows:
\begin{align*}
    x_a(y)=g_a^{-1}(y).
\end{align*}
So $x_a(y)$ is also a strictly increasing one-to-one mapping from $[0, d\left(\mu_1, \mu_a\right))$ to $[0,+\infty)$.
Using this notation, we can simplify the notation for $T^*(\boldsymbol{\mu})$ according to Proposition~\ref{prop:jens-shan} as follows: 
\begin{align*}
   T^*(\mu)^{-1} &= \underset{w \in \Sigma_K}{\operatorname{argmax}} \min_{a \neq 1} \left(\frac{w_1}{c_1} + \frac{w_a}{c_a}\right) I_{\frac{w_1/c_1}{w_1/c_1 + w_a/ c_a}}(\mu_1, \mu_a) \\
   &= \underset{w \in \Sigma_K}{\operatorname{argmax}} \left[\frac{w_1}{c_1} \min_{a \neq 1} g_a\left(\frac{w_a/c_a}{w_1/c_1}\right)\right]
\end{align*}
It is clear that the solution to the max-min problem resides in the critical point where all $g_a(\frac{w_a/c_a}{w_1/c_1})$ are equal to each other. Then, the problem of computing the optimal proportion $\boldsymbol{w}^*$ becomes solving the balancing point for $g_a(x)$ functions. Let $x_1^*$ being a constant $1$. 
In Lemma~\ref{lemma:g-eq}, we show that this balancing point of the $g_a(x)$ function is exactly the solution $\boldsymbol{x}^* = (x_1^*, \ldots, x_a^*)$ to the following maximization problem:
\begin{align}\label{eq:g-func}
    \underset{(x_1, \ldots, x_K)}{\operatorname{argmax}} \left[\frac{\min_{a \neq 1} g_a(x_a)}{c_1 + c_2 x_2 + \cdots + c_K x_K}\right].
\end{align}

\begin{lemma}\label{lemma:g-eq}
    For any two different sub-optimal arm $a$ and $b$, we have that  $g_a(x_a^*) = g_b(x_b^*)$, which means:
    \begin{align*}
        \underset{w \in \Sigma_K}{\operatorname{argmax}} \left[\frac{w_1}{c_1} \min_{a \neq 1} g_a\left(\frac{w_a/c_a}{w_1/c_1}\right)\right] = 
           \underset{(x_1, \ldots, x_K)}{\operatorname{argmax}} \left[\frac{\min_{a \neq 1} g_a(x_a)}{c_1 + c_2 x_2 + \cdots + c_K x_K}\right].
    \end{align*}
\end{lemma}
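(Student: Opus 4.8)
The plan is to set up a change of variables that links the two optimization problems and then use the strict monotonicity of each $g_a$ to force the equalizer property at the optimum. First I would introduce, for any $\boldsymbol{w} \in \Sigma_K$ with $w_1 > 0$, the substitution $x_a \coloneqq \frac{w_a/c_a}{w_1/c_1}$ for $a \neq 1$ (and $x_1 \coloneqq 1$ as in the statement). Conversely, given any tuple $(x_2,\dots,x_K) \in [0,\infty)^{K-1}$, one recovers a unique $\boldsymbol{w} \in \Sigma_K$ by setting $w_a \propto c_a x_a$ and normalizing; explicitly $w_1 = c_1/(c_1 + c_2 x_2 + \cdots + c_K x_K)$ and $w_a = c_a x_a/(c_1 + \cdots + c_K x_K)$. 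Under this bijection the objective transforms exactly: $\frac{w_1}{c_1}\min_{a\neq 1} g_a\!\left(\frac{w_a/c_a}{w_1/c_1}\right) = \frac{1}{c_1 + c_2 x_2 + \cdots + c_K x_K}\min_{a \neq 1} g_a(x_a)$, which is the objective in \eqref{eq:g-func}. So the two argmax problems have the same value and their optimizers correspond under the bijection; this already gives the displayed equality of Lemma~\ref{lemma:g-eq} modulo checking that the optimizer is interior (so that $w_1 > 0$ and the substitution is valid).

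Next I would establish the equalizer claim $g_a(x_a^*) = g_b(x_b^*)$ for all sub-optimal $a,b$. Suppose not, and let $m = \min_{a \neq 1} g_a(x_a^*)$ be attained on a proper nonempty subset $S \subsetneq \{2,\dots,K\}$ with $g_b(x_b^*) > m$ for $b \notin S$. Since each $g_a$ is continuous and strictly increasing (by \cite[Theorem 5]{Garovoer16BAI}), for $b \notin S$ we may strictly decrease $x_b^*$ by a small amount and still keep $g_b$ above $m$; this strictly decreases the denominator $c_1 + \sum_a c_a x_a$ while leaving the numerator $\min_{a\neq 1} g_a(x_a)$ equal to $m$ (for sufficiently small perturbation), hence strictly increases the ratio — contradicting optimality. (If instead $S = \{2,\dots,K\}$ but the common value is not attained, i.e. the argmax is not achieved, one argues with a maximizing sequence; but the one-to-one onto property of $g_a$ onto $[0, d(\mu_1,\mu_a))$ together with compactness after the reparametrization should rule this out, and I would also invoke that $T^*(\boldsymbol{\mu})^{-1}$ is a genuine max as used implicitly in Proposition~\ref{prop:jens-shan}.) This shows any optimizer of \eqref{eq:g-func} equalizes the $g_a$, and by the bijection so does any optimizer of the left-hand max-min problem.

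The remaining detail is to confirm the optimizer is interior, i.e. $x_a^* \in (0,\infty)$ for every $a \neq 1$, equivalently $w_a^* > 0$ for all $a$. This follows because $g_a(0) = I_1(\mu_1,\mu_a) = 0$, so sending any $x_a \to 0$ drives the numerator $\min_{a\neq 1} g_a(x_a) \to 0$ and the ratio to $0$, which is not optimal (the ratio is strictly positive at, say, the uniform point); and $x_a^* < \infty$ since the denominator would otherwise blow up while the numerator stays bounded by $d(\mu_1,\mu_a)$.

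The main obstacle I anticipate is not any single step but the bookkeeping around existence/attainment of the maximum: one must be careful that the $\operatorname{argmax}$ in \eqref{eq:g-func} is achieved (the feasible set $[0,\infty)^{K-1}$ is noncompact, and each $g_a$ only maps onto the half-open interval $[0,d(\mu_1,\mu_a))$), so that the perturbation argument for the equalizer property has an actual optimizer to perturb rather than just a supremizing sequence. I would handle this by first showing that any supremizing sequence stays in a compact region — the ratio is bounded above by $\min_a d(\mu_1,\mu_a)/c_1$ and bounded away from that bound unless the $x_a$ stay bounded — and then extracting a limit, which by continuity of each $g_a$ on $[0,\infty)$ is a genuine maximizer. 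With attainment in hand, the change-of-variables identity and the strict-monotonicity contradiction argument together yield the lemma.
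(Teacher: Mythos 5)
Your proposal is correct and takes essentially the same approach as the paper: a perturbation argument at an optimizer of \eqref{eq:g-func}, using continuity and strict monotonicity of the $g_a$, to force the equalizer property $g_a(x_a^*)=g_b(x_b^*)$ — the paper's variant perturbs the $x_a$ so that the denominator $c_1+\sum_a c_a x_a$ is unchanged while the minimum strictly increases, whereas you keep the minimum fixed and strictly shrink the denominator, which is the same idea. Your additional bookkeeping (the explicit change of variables between $\boldsymbol{w}\in\Sigma_K$ and $(x_2,\dots,x_K)$, interiority of the optimizer, and attainment of the maximum on the noncompact domain) fills in details the paper leaves implicit and is sound.
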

\noindent Let $y^* = g_a(x_a^*)$ for all arm $a\in\mathcal{A}$ and notice that $x_a^* = x_a(y^*)$, so we can rewrite the optimization problem.~\eqref{eq:g-func} as follows:
\begin{align*}
    y^* = \underset{y}{\operatorname{argmax}}\ G(y) \coloneqq \frac{y}{c_1 + c_2 x_2(y) + \cdots + c_K x_K(y)}.
\end{align*}
To solve $y^*$, one simply can let the derivative of $G(y)$ equal $0$. After obtaining $y^*$, it is simple to obtain $\boldsymbol{w}^*$ through the inverse mapping $x_a(y)$. Then, we summarize the characterization of $\boldsymbol{w}^*$ in the following Theorem.~\ref{thm:opt-prop}.

\begin{theorem}\label{thm:opt-prop}
The optimal proportion $\boldsymbol{w}^* = (w_1^*, \cdots, w_K^*)$ can be computed through:
\begin{align*}
    w_a^* = \frac{c_a x_a(y^*)}{c_1 + c_2 x_2(y^*) + \cdots + c_K x_K(y^*)},
\end{align*}
where $y^*$ is the unique solution to $F_{\boldsymbol{\mu}, \boldsymbol{c}}(y) = 1$ for 
\begin{align*}
    F_{\boldsymbol{\mu}, \boldsymbol{c}}: y \mapsto \sum_{a = 2}^{K} \frac{c_a \cdot d(\mu_1, \frac{\mu_1 + x_a(y)\mu_a}{1 + x_a(y)})}{c_1 \cdot d(\mu_a, \frac{\mu_1 + x_a(y)\mu_a}{1 + x_a(y)})}.
\end{align*}
\end{theorem}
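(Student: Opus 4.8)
The plan is to start from the simplified variational characterization of $T^*(\boldsymbol{\mu})^{-1}$ obtained in Proposition~\ref{prop:jens-shan} and its $g_a$-reformulation, namely
\begin{align*}
    T^*(\boldsymbol{\mu})^{-1} = \sup_{\boldsymbol{w} \in \Sigma_K} \left[\frac{w_1}{c_1} \min_{a \neq 1} g_a\!\left(\frac{w_a/c_a}{w_1/c_1}\right)\right],
\end{align*}
and to combine it with Lemma~\ref{lemma:g-eq}, which says the optimizer satisfies $g_a(x_a^*) = g_b(x_b^*)$ for all sub-optimal $a, b$, where $x_a^* = (w_a^*/c_a)/(w_1^*/c_1)$. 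Denoting the common value by $y^* = g_a(x_a^*)$ and inverting via the strictly increasing bijection $x_a(\cdot) = g_a^{-1}(\cdot)$, we get $x_a^* = x_a(y^*)$. The normalization $\sum_a w_a = 1$ together with $w_a = (c_a x_a^*/c_1) w_1$ for $a \neq 1$ and $x_1^* = 1$ forces $w_1^*/c_1 = 1/(c_1 + \sum_{a \geq 2} c_a x_a(y^*))$, which immediately yields the claimed formula $w_a^* = c_a x_a(y^*)/(c_1 + \sum_{b \geq 2} c_b x_b(y^*))$. So the structural part of the theorem is essentially a bookkeeping consequence of the earlier results.

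The substantive part is to identify $y^*$ as the unique root of $F_{\boldsymbol{\mu},\boldsymbol{c}}(y) = 1$. First I would substitute $x_a^* = x_a(y)$ into the objective and reduce it, exactly as in the paragraph preceding the theorem, to the one-dimensional problem $y^* = \arg\max_y G(y)$ where $G(y) = y/(c_1 + \sum_{a \geq 2} c_a x_a(y))$. Next I would argue $G$ is differentiable (each $x_a$ is smooth on $[0, d(\mu_1,\mu_a))$ by the implicit function theorem applied to $g_a(x_a(y)) = y$, since $g_a' > 0$), that $G(0) = 0$, and that $G$ is eventually decreasing as $y \uparrow \min_a d(\mu_1, \mu_a)$ because some $x_a(y) \to \infty$ there while the numerator stays bounded; hence the maximizer is an interior stationary point. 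Setting $G'(y) = 0$ gives $c_1 + \sum_{a \geq 2} c_a x_a(y) = y \sum_{a \geq 2} c_a x_a'(y)$. The remaining task is to rewrite $x_a'(y)$ in closed form: differentiating $g_a(x_a(y)) = y$ gives $x_a'(y) = 1/g_a'(x_a(y))$, and one computes $g_a'(x)$ from $g_a(x) = (1+x) I_{1/(1+x)}(\mu_1,\mu_a)$ using the exponential-family identity $\frac{d}{d\mu'} d(\mu, \mu') = (\mu' - \mu) b''(\theta_{\mu'})$ and the fact that the barycenter $\frac{\mu_1 + x \mu_a}{1+x}$ is the inner minimizer; after the dust settles $g_a'(x) = d(\mu_a, \frac{\mu_1 + x\mu_a}{1+x})$ and $g_a(x) = x \cdot d(\mu_a, \frac{\mu_1+x\mu_a}{1+x}) + d(\mu_1, \frac{\mu_1+x\mu_a}{1+x})$ — these are precisely the identities established in the proof of Theorem~5 of~\citet{Garovoer16BAI}, which I would cite. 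Plugging $x_a'(y) = 1/g_a'(x_a(y))$ and $y = g_a(x_a(y))$ into the stationarity equation, dividing through, and collecting terms turns $G'(y^*) = 0$ into exactly $\sum_{a \geq 2} \frac{c_a\, d(\mu_1, (\mu_1 + x_a(y)\mu_a)/(1+x_a(y)))}{c_1\, d(\mu_a, (\mu_1 + x_a(y)\mu_a)/(1+x_a(y)))} = 1$, i.e. $F_{\boldsymbol{\mu},\boldsymbol{c}}(y^*) = 1$.

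Finally I would establish uniqueness of the root: each summand of $F_{\boldsymbol{\mu},\boldsymbol{c}}$ is continuous and strictly increasing in $y$ on $[0, \min_a d(\mu_1,\mu_a))$ because $x_a(y)$ is strictly increasing and the ratio $d(\mu_1, \beta_a(x))/d(\mu_a, \beta_a(x))$ with $\beta_a(x) = (\mu_1 + x\mu_a)/(1+x)$ is strictly increasing in $x$ (the numerator grows and the denominator shrinks as $\beta_a$ moves toward $\mu_a$), so $F_{\boldsymbol{\mu},\boldsymbol{c}}$ is strictly increasing; it is $0$ at $y = 0$ and $\to \infty$ as $y \uparrow \min_a d(\mu_1,\mu_a)$ (the term for the arm $a$ attaining the min has $x_a(y) \to \infty$, hence $\beta_a(x) \to \mu_a$ and the ratio blows up). Strict monotonicity plus these boundary values give a unique $y^*$ with $F_{\boldsymbol{\mu},\boldsymbol{c}}(y^*) = 1$, which then feeds the weight formula. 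I expect the main obstacle to be the careful differentiation bookkeeping for $g_a'$ and verifying that the algebraic rearrangement of $G'(y) = 0$ matches $F_{\boldsymbol{\mu},\boldsymbol{c}}$ exactly — this is where sign errors and mismatched factors of $c_1$ versus $c_a$ are most likely — together with pinning down the precise domain $[0, \min_a d(\mu_1,\mu_a))$ on which everything is well defined; the monotonicity and existence-uniqueness arguments are then routine consequences of the structural properties of $I_\alpha$ already used in~\citet{Garovoer16BAI}.
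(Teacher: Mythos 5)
Your proposal is correct and follows essentially the same route as the paper: reduce via Lemma~\ref{lemma:g-eq} to maximizing $G(y)=y/(c_1+\sum_{a\geq 2}c_a x_a(y))$, set $G'(y)=0$, and use the Garivier--Kaufmann identities $x_a'(y)=1/d(\mu_a,m_a(x_a(y)))$ and $y=g_a(x_a(y))=d(\mu_1,m_a(x_a(y)))+x_a(y)\,d(\mu_a,m_a(x_a(y)))$ to turn stationarity into $F_{\boldsymbol{\mu},\boldsymbol{c}}(y)=1$, then recover $\boldsymbol{w}^*$ by normalization. Your explicit interior-maximum and monotonicity/uniqueness arguments are details the paper leaves implicit (inherited from \citet{Garovoer16BAI}), and they are correct since the cost factors $c_a/c_1>0$ do not affect the monotonicity of the summands.
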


\subsection{Computing $\boldsymbol{w}^*$ Efficiently}

Since both $g_a(x)$ and $F_{\boldsymbol{\mu}, \boldsymbol{c}}(y)$ functions are continuous, we can solve the zero points of $F_{\boldsymbol{\mu}, \boldsymbol{c}}(y) - 1$ and $g_a(x)-y^*$ efficiently, e.g., through bisection methods or Newton's iteration. We sum up the algorithm to compute $\boldsymbol{w}^*$ in Algorithm.~\ref{alg:opt-prop}. Here, the function $\mathsf{ZeroPoint}(\cdot)$ takes the input of a continuous monotonic function and outputs its zero point using an arbitrary method.
\begin{algorithm}
\caption{$\mathsf{ComputeProportions}(\boldsymbol{\mu},\boldsymbol{c})$} \label{alg:opt-prop}
\KwIn{expected reward $\boldsymbol{\mu}$; expected cost $\boldsymbol{c}$.}
    compute the maximum point $y^*$ of function $G(y)$ through its derivative's zero point: $y^* \gets \mathsf{ZeroPoint}\left( F_{\boldsymbol{\mu}, \boldsymbol{c}}(y) - 1 \right)$\;
    \For{$a \in [K]$}{
    compute the solution: $x_a^* \gets \mathsf{ZeroPoint}\left( g_a(x) - y^* \right)$
    }
    \For{$a \in [K]$}{
    transform back to the optimal proportion: $w_a^* = \frac{c_a x_a(y^*)}{c_1 + c_2 x_2(y^*) + \cdots + c_K x_K(y^*)}$\;
    }
    \Return optimal proportion $\boldsymbol{w}^* = (w_1^*,w_2^* , \cdots, w_K^*)$
\end{algorithm}

\subsection{Proofs for Lemma.~\ref{lemma:g-eq} and Theorem.~\ref{thm:opt-prop}}

\begin{proof}[\textbf{Proof of Lemma.~\ref{lemma:g-eq}}]
We now show that all the $g_a\left(x_a^*\right)$ have to be equal. Let
$\mathcal{B}$ be the set of arms that the optimal $g_b(x_b^*)$ function on the optimal solution $\boldsymbol{x}^*$ is not among the minimum of all arms, i.e., 
$$
\mathcal{B}=\left\{b \in\{2, \ldots, K\}: g_b\left(x_b^*\right)=\min _{a \neq 1} g_a\left(x_a^*\right)\right\}
$$
and $\mathcal{A}=\{2, \ldots, K\} \backslash \mathcal{B}$ to be its complement which achieves the minimum $g_a(x)$ function on $\boldsymbol{x}^*$. Assume $\mathcal{A} \neq \emptyset$. For all $a \in \mathcal{A}$ and $b \in \mathcal{B}$, one has $g_a\left(x_a^*\right)>g_b\left(x_b^*\right)$. Using the continuity of the $g$ functions and the fact that they are strictly increasing, there exists $\epsilon>0$ such that
$$
\left.\forall a \in \mathcal{A}, b \in \mathcal{B}, \quad g_a\left(x_a^*-\epsilon /c_a|\mathcal{A}|\right)\right)>g_b\left(x_b^*+\epsilon /c_b|\mathcal{B}|\right)>g_b\left(x_b^*\right)
$$
Introducing $\bar{x}_a=x_a^*-\epsilon /c_a|\mathcal{A}|$ for all $a \in \mathcal{A}$ and $\bar{x}_b=x_b^*+\epsilon /c_b|\mathcal{B}|$ for all $b \in \mathcal{B}$, there exists $b \in \mathcal{B}$ :
$$
\frac{\min _{a \neq 1} g_a\left(\bar{x}_a\right)}{c_1+c_2\bar{x}_2+\ldots + c_K\bar{x}_K}=\frac{g_b\left(x_b^*+\epsilon /c_b|\mathcal{B}|\right)}{c_1+c_2x_2^*+\cdots+c_K x_K^*}>\frac{g_b\left(x_b^*\right)}{c_1+c_2x_2^*+\cdots+c_Kx_K^*}=\frac{\min _{a \neq 1} g_a\left(x_a^*\right)}{c_1+c_2x_2^*+\cdots+c_K x_K^*},
$$
which contradicts the fact that $x^*$ belongs to the solution of the optimization problem \eqref{eq:g-func}. Hence $\mathcal{A}=\emptyset$ and there exists $y^* \in\left[0, d\left(\mu_1, \mu_2\right)\right)$ such that
$$
\forall a \in\{2, \ldots, K\}, g_a\left(x_a^*\right)=y^* \Leftrightarrow x_a^*=x_a\left(y^*\right).
$$
\end{proof}

\begin{proof}[\textbf{Proof of Theorem.~\ref{thm:opt-prop}}]
Recall the $G(y)$ function as follows:
\begin{align*}
    G(y) \coloneqq \frac{y}{c_1 + c_2 x_2(y) + \cdots + c_K x_K(y)}.
\end{align*}
From \eqref{eq:g-func} and Lemma \ref{lemma:g-eq}, we know that $y^* = \underset{y}{\operatorname{argmax}}\ G(y)$. Therefore, we can solve for $y^*$ analytically. From \cite{Garovoer16BAI}, we can take the derivative of $x_a(y)$ and obtain:
\begin{align*}
    x_a'(y) = 1/d(\mu_a, m_a(x_a(y))), \quad \text{ where } m_a(x) = \frac{\mu_1 + x\mu_a}{1 + x}
\end{align*}
Therefore, we can also compute the derivative of $G(y)$ as follows:
\begin{align*}
    G'(y) = \frac{(c_1 + c_2 x_2(y) + \cdots + c_K x_K(y)) - \sum_{a = 2}^{K} \frac{c_a y}{d(\mu_a, m_a(x_a(y))}}{(c_1 + c_2 x_2(y) + \cdots + c_K x_K(y))^2} .
\end{align*}
The condition $G'(y) = 0$ gives us that
\begin{align*}
    \sum_{a = 2}^{K} \frac{c_a y}{d(\mu_a, m_a(x_a(y))} &= c_1 + c_2 x_2(y) + \cdots + c_K x_K(y).
\end{align*}
Thus, $y^*$ is the solution to the following 
\begin{align*}
    \sum_{a = 2}^{K} \frac{c_a \cdot d(\mu_1, m_a(x_a(y))}{c_1 \cdot d(\mu_a, m_a(x_a(y))} &= 1,
\end{align*}
which is exactly Theorem.~\ref{thm:opt-prop} suggests. After obtaining $y^*$, we can recover $w_a^*$ as follows:
\begin{align*}
    w_1^* =& \frac{c_1}{c_1 + c_2 x_2(y^*) + \cdots + c_k x_k(y^*))}, \\
    w_a^* =& \frac{c_a x_a(y^*)}{c_1 + c_2 x_2(y^*) + \cdots + c_K x_K(y^*)}, \quad \forall a \in\{2,\cdots,K\}.
\end{align*}
\end{proof}

\section{Proof Roadmap of Theorem.~\ref{thm:exp-opt}}
We first show that the empirical cost proportion converges to the optimal proportion $\boldsymbol{w}^*$ in Section.~\ref{sec:as-conv} and in Theorem.~\ref{thm:as-conv}. Then, we prove a weaker version of Theorem.~\ref{thm:exp-opt} which states that the cost performance of $\mathsf{CTAS}$ matches the lower bound asymptotically with probability $1$, i.e., Theorem.~\ref{thm:as-upper} in Sec.~\ref{sec:as-upper}. Finally, we prove Theorem.~\ref{thm:exp-opt} in Sec.~\ref{sec:exp-opt}.

\section{Asymptotic Convergence of Cost Proportions $\boldsymbol{\hat{w}}(t)$ for $\mathsf{CTAS}$}\label{sec:as-conv}

\begin{theorem}\label{thm:as-conv}
Following the sampling rule of the $\mathsf{CTAS}$ algorithm in Algorithm.~\ref{alg:ctas}, we have:
\begin{align*}
    \mathbb{P}_{\boldsymbol{\mu}\times \boldsymbol{c}}\left(\lim_{t\rightarrow\infty}\ \frac{\widehat{c}_a(t) N_a(t)}{J(t)} = w_a^*\right) = 1.
\end{align*}
\end{theorem}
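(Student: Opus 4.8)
The plan is to adapt the almost-sure tracking analysis of $\mathsf{TAS}$ from~\citet{Garovoer16BAI}, transcribed from pull counts to the cumulative cost charged to each arm. Write $S_a(t) := \hat{c}_a(t) N_a(t) = \sum_{k\le t} C_k\mathbf 1_{\{A_k=a\}}$ for the cost spent on arm $a$, so that $J(t) = \sum_a S_a(t)$, and let $w_a^*(t) := w_a^*(\hat{\boldsymbol\mu}(t),\hat{\boldsymbol c}(t))$ be the empirical target returned by $\mathsf{ComputeProportions}$. By Assumption~\ref{ass:boundedness} every realized cost lies in $[\ell,1]$, so $\ell t \le J(t)\le t$; in particular $J(t)\to\infty$ at a linear rate. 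On rounds where the \textbf{else} branch is taken, the sampling rule pulls an arm maximizing the deficit $D_a(t) := w_a^*(t) J(t) - S_a(t)$, and $\sum_a D_a(t) = 0$ since $\sum_a w_a^*(t) = 1$ and $\sum_a S_a(t) = J(t)$, so $\max_a D_a(t)\ge 0$ at every round. It suffices to prove $D_a(t)/J(t)\to 0$ almost surely, since then $S_a(t)/J(t) = w_a^*(t) - D_a(t)/J(t)\to w_a^*$.

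\emph{Step 1 (consistency).} The forced-exploration branch of Algorithm~\ref{alg:ctas} is entered at most $K\lceil\sqrt t\rceil$ times over the first $t$ rounds, yet it already drives $N_a(t)\to\infty$ for each $a$, and tracking pulls only increase $N_a$ further. Hence, by the strong law of large numbers applied to the i.i.d.\ reward and cost streams of each arm, $\hat\mu_a(t)\to\mu_a$ and $\hat{c}_a(t)\to c_a$ almost surely. Since the best arm is unique, $a^*(\hat{\boldsymbol\mu}(t)) = a^*(\boldsymbol\mu)$ eventually, and the map $(\boldsymbol\mu',\boldsymbol c')\mapsto \boldsymbol w^*(\boldsymbol\mu',\boldsymbol c')$ is continuous at $(\boldsymbol\mu,\boldsymbol c)$: this I would get from Theorem~\ref{thm:opt-prop}, because $y^*$ and the $x_a(\cdot)$ there are the unique zero points of functions depending continuously on the parameters, exactly as in~\citet{Garovoer16BAI}. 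Therefore $w_a^*(t)\to w_a^*$ almost surely. Fix from now on a trajectory in the probability-one event on which all of these convergences hold.

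\emph{Steps 2 and 3 (tracking).} If round $n$ takes the \textbf{else} branch and pulls arm $a$, then $a$ attains $\max_{a'} D_{a'}(n)\ge 0$, so $S_a(n)\le w_a^*(n) J(n) + O(1)\le w_a^*(n) J(T) + O(1)$ for every $T\ge n$. Fix $a$ and $T$, and let $n_T\le T$ be the last round at which $a$ is pulled by the \textbf{else} branch. Since $S_a$ changes only when $a$ is pulled and every pull of $a$ in $(n_T,T]$ is a forced-exploration pull — of which there are at most $K\lceil\sqrt T\rceil$ in total, each of cost at most $1$ — we obtain $S_a(T)\le w_a^*(n_T) J(T) + O(1) + K\lceil\sqrt T\rceil$; if $a$ is never pulled by the \textbf{else} branch up to $T$, then simply $S_a(T)\le K\lceil\sqrt T\rceil$. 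Dividing by $J(T)\ge \ell T$ and sending $T\to\infty$ — using $n_T\to\infty$ when $a$ is tracking-pulled infinitely often (so $w_a^*(n_T)\to w_a^*$) together with $w_a^*(T)\to w_a^*$ — yields in either case
\begin{align*}
    \limsup_{t\to\infty}\frac{S_a(t) - w_a^*(t) J(t)}{J(t)}\le 0,
    \qquad\text{i.e.}\qquad
    \liminf_{t\to\infty}\frac{D_a(t)}{J(t)}\ge 0 .
\end{align*}
Now, from $\sum_{a'} D_{a'}(t) = 0$ we have $D_a(t) = -\sum_{a'\ne a} D_{a'}(t)$, hence $\limsup_t D_a(t)/J(t)\le -\sum_{a'\ne a}\liminf_t D_{a'}(t)/J(t)\le 0$. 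Combining the two one-sided bounds gives $D_a(t)/J(t)\to 0$ for every $a$, so $\hat{c}_a(t) N_a(t)/J(t) = w_a^*(t) - D_a(t)/J(t)\to w_a^*$ on this probability-one event, which is the assertion.

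The part I expect to need the most care is the tracking step: the largest-deficit inequality is available only for pulls made by the \textbf{else} branch, so one has to separate out the forced-exploration pulls and verify they account for only a vanishing fraction of $J(t)$ — this is exactly where Assumption~\ref{ass:boundedness} (giving $J(t)$ of linear order) and the $O(\sqrt t)$ bound on the number of forced rounds are both used, and it is also why the forced-exploration rate cannot be taken too large. The continuity of $\boldsymbol w^*$ in Step~1 is the other genuine ingredient, but it descends from the explicit characterization in Theorem~\ref{thm:opt-prop} essentially as in~\citet{Garovoer16BAI}.
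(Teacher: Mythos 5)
Your proof is correct, but it replaces the paper's key tracking lemma with a different mechanism. The paper proves the statement via Lemma~\ref{lemma:w_conv} (forced exploration gives $N_a(t)\ge\sqrt t-1$, hence consistency of $\hat{\boldsymbol\mu},\hat{\boldsymbol c}$ and, by continuity, of $\hat{\boldsymbol w}^*(t)$) combined with the quantitative tracking Lemma~\ref{lemma:track}: an induction over time on the overshoot $E_{a,t}=\hat c_a(t)N_a(t)-J(t)w_a^*$ (measured against the \emph{true} $w^*$), showing $E_{a,t}\le\max\{E_{a,t_0},2J(t)\epsilon+1\}$ once $\hat{\boldsymbol w}^*(t)$ is within $\epsilon$ of $\boldsymbol w^*$, and then the zero-sum bound $\max_a|E_{a,t}|\le(K-1)\max_aE_{a,t}$ to get the two-sided estimate $3(K-1)\epsilon$ after a finite time $t_\epsilon$. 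You instead measure the deficit against the \emph{empirical} target, control it by a ``last tracking pull plus forced-pull budget'' argument (the largest-deficit arm has nonnegative deficit since the deficits sum to zero, and between tracking pulls only the $O(K\sqrt T)$ forced pulls, each of cost at most $1$, can add to $S_a$), and then use the zero-sum identity in the limit, $\limsup_t D_a/J\le-\sum_{a'\ne a}\liminf_t D_{a'}/J\le0$. Both routes rest on the same three ingredients (deficits summing to zero, the largest-deficit rule plus bounded costs, forced exploration for consistency), but yours is purely asymptotic: it suffices for this theorem, whereas the paper's finite-time bound ($t\ge t_\epsilon\Rightarrow$ within $3(K-1)\epsilon$) is reused verbatim on the event $\mathcal E_T$ in the proof of Theorem~\ref{thm:exp-opt}, so your version would not substitute there. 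Two small repairs you should make: in the sub-case where arm $a$ is tracking-pulled finitely often but at least once, your bound $S_a(T)\le w_a^*(n_T)J(T)+O(1)+K\lceil\sqrt T\rceil$ is too lossy (it only yields $\limsup S_a/J\le w_a^*(n_\infty)$); keep $J(n_T)$ there, which is then a constant, so $S_a(T)/J(T)\to0\le w_a^*$ and the one-sided bound survives. Also, the assertions that forced exploration alone drives $N_a(t)\to\infty$ and that at most $K\lceil\sqrt t\rceil$ rounds are forced each need a line of justification (when $\mathcal U_t\ne\emptyset$ the least-pulled arm has count below $\sqrt t$, so each forced pull goes to an arm with count below $\sqrt T$; the paper's Lemma~\ref{lemma:w_conv} handles the former via a frame-by-frame induction giving $N_a(t)\ge\sqrt t-1$).
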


\subsection{Proof of Theorem \ref{thm:as-conv}}
We now provide a complete proof of Theorem.~\ref{thm:as-conv} which shows that the empirical cost proportion converges to the optimal proportion ${w}^*_a$ for all actions almost surely. We follow the notation that $\hat{w}_a(t) = \widehat{c}_a(t) N_a(t)/J(t)$ for all arm $a$ and let $\boldsymbol{w}^*(\boldsymbol{\mu},\boldsymbol{c}) = \{w^*_a(\boldsymbol{\mu},\boldsymbol{c})\}_{a\in\mathcal{A}}$ to be the output of Algorithm.~\ref{alg:opt-prop} when the input is $(\boldsymbol{\mu},\boldsymbol{c})$. First of all, it is easy to see that $\boldsymbol{w}^*(\boldsymbol{\mu},\boldsymbol{c})$ is a continuous function.
\begin{lemma}\label{lemma:cont-prop}
    For every $\boldsymbol{\mu} $ and $\boldsymbol{c}$, $\boldsymbol{w}^*(\boldsymbol{\mu},\boldsymbol{c})$ is continuous at $(\boldsymbol{\mu}, \boldsymbol{c})$.
\end{lemma}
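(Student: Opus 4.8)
The plan is to exploit the explicit characterization of $\boldsymbol{w}^*$ from Theorem~\ref{thm:opt-prop}, which turns the claim into continuity of the scalar root $y^*$ of the equation $F_{\boldsymbol{\mu},\boldsymbol{c}}(y)=1$ together with continuity of the transformation back to $\boldsymbol{w}^*$. First I would note that since $\boldsymbol{\mu}$ has a unique best arm, the same holds in a neighborhood, so without loss of generality the index of the best arm (say arm $1$) is locally constant and the coordinates $\mu_a$ may be treated as varying smoothly with fixed labels. Under Assumption~\ref{ass:exp-family}, $b$ is twice differentiable, hence $(\mu,\mu')\mapsto d(\mu,\mu')$ is continuous, so each $g_a(x)=(1+x)I_{1/(1+x)}(\mu_1,\mu_a)$ is jointly continuous in $(x,\mu_1,\mu_a)$ and, for fixed reward means, a strictly increasing bijection from $[0,\infty)$ onto $[0,d(\mu_1,\mu_a))$ by \cite[Theorem~5]{Garovoer16BAI}. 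The standard fact that inverses of strictly monotone continuous functions inherit joint continuity then gives that $x_a(y)=g_a^{-1}(y)$ is jointly continuous in $(y,\mu_1,\mu_a)$ on its (parameter-dependent) domain $\{y<d(\mu_1,\mu_a)\}$.

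Next I would record the monotonicity and boundary behavior that pin down $y^*$. Writing $m_a(x)=\frac{\mu_1+x\mu_a}{1+x}$, as $x$ increases from $0$ to $\infty$ the value $m_a(x)$ decreases monotonically from $\mu_1$ to $\mu_a$, so $d(\mu_1,m_a(x))$ increases from $0$ while $d(\mu_a,m_a(x))$ decreases to $0$; consequently each summand of $F_{\boldsymbol{\mu},\boldsymbol{c}}$ is strictly increasing in $y$, with $F_{\boldsymbol{\mu},\boldsymbol{c}}(0)=0$ and $F_{\boldsymbol{\mu},\boldsymbol{c}}(y)\to\infty$ as $y\uparrow d(\mu_1,\mu_2)=\min_{a\ge 2}d(\mu_1,\mu_a)$ (the $a=2$ term already diverges). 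Thus $F_{\boldsymbol{\mu},\boldsymbol{c}}(y)=1$ has the unique root $y^*(\boldsymbol{\mu},\boldsymbol{c})\in(0,d(\mu_1,\mu_2))$ identified in Theorem~\ref{thm:opt-prop}, and $F_{\boldsymbol{\mu},\boldsymbol{c}}-1$ is strictly negative just below it and strictly positive just above.

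Continuity of $y^*$ then follows by a sandwiching argument. Fix $(\boldsymbol{\mu}_0,\boldsymbol{c}_0)$ with root $y_0^*$, choose $\varepsilon>0$ small enough that $[y_0^*-\varepsilon,y_0^*+\varepsilon]\subset(0,d(\mu_{0,1},\mu_{0,2}))$, so that $F_{\boldsymbol{\mu}_0,\boldsymbol{c}_0}(y_0^*-\varepsilon)<1<F_{\boldsymbol{\mu}_0,\boldsymbol{c}_0}(y_0^*+\varepsilon)$. Since $(\boldsymbol{\mu},\boldsymbol{c},y)\mapsto F_{\boldsymbol{\mu},\boldsymbol{c}}(y)$ is jointly continuous on a neighborhood of $\{(\boldsymbol{\mu}_0,\boldsymbol{c}_0)\}\times[y_0^*-\varepsilon,y_0^*+\varepsilon]$ (a finite sum of ratios of $d(\cdot,m_a(x_a(y)))$ terms whose denominators stay bounded away from $0$ on this compact $y$-range), the two strict inequalities persist for all $(\boldsymbol{\mu},\boldsymbol{c})$ near $(\boldsymbol{\mu}_0,\boldsymbol{c}_0)$, forcing $y^*(\boldsymbol{\mu},\boldsymbol{c})\in(y_0^*-\varepsilon,y_0^*+\varepsilon)$. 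Finally, $w_a^*=\frac{c_a x_a(y^*)}{c_1+\sum_{b\ge 2}c_b x_b(y^*)}$ is a composition of continuous maps with denominator bounded below by $c_1\ge\ell>0$ (Assumption~\ref{ass:boundedness}), so $\boldsymbol{w}^*$ is continuous at $(\boldsymbol{\mu},\boldsymbol{c})$.

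I expect the main obstacle to be bookkeeping rather than conceptual: the domain $[0,d(\mu_1,\mu_2))$ of $F_{\boldsymbol{\mu},\boldsymbol{c}}$ itself moves with the parameters, so one must verify that $y_0^*+\varepsilon$ remains admissible for all nearby instances, and one must make the joint continuity of the inverse maps $x_a$ precise enough to transport the sandwiching inequalities. All divergence computations needed are elementary consequences of the twice-differentiability of $b$ already assumed.
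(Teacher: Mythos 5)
Your proof is correct, and it follows the same overall route as the paper: both arguments go through the characterization of Theorem~\ref{thm:opt-prop}, establish continuity of the root $y^*$ of $F_{\boldsymbol{\mu},\boldsymbol{c}}(y)=1$ in $(\boldsymbol{\mu},\boldsymbol{c})$, and then conclude by composition since $w_a^* = c_a x_a(y^*)/\bigl(c_1+\sum_{b\geq 2}c_b x_b(y^*)\bigr)$ has denominator bounded below. The one genuine difference is in how continuity of $y^*$ is justified: the paper is terse and leans on the properties of $F_{\boldsymbol{\mu}}$ inherited from \citet{Garovoer16BAI}, in particular $\frac{\mathrm{d}}{\mathrm{d}y}F_{\boldsymbol{\mu},\boldsymbol{c}}(y^*)\neq 0$, which is an implicit-function-theorem style argument, whereas you replace this with strict monotonicity of $F_{\boldsymbol{\mu},\boldsymbol{c}}$ in $y$ plus an intermediate-value sandwich at $y_0^*\pm\varepsilon$ that persists under small perturbations of $(\boldsymbol{\mu},\boldsymbol{c})$. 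Your version is more elementary (it needs only continuity and strict monotonicity, not differentiability of $F$ in $y$) and more self-contained: you explicitly verify the joint continuity of the inverse maps $x_a(y)=g_a^{-1}(y)$ in $(y,\mu_1,\mu_a)$, the local constancy of the best-arm index, and the fact that the parameter-dependent domain $[0,d(\mu_1,\mu_2))$ still contains $y_0^*+\varepsilon$ for nearby instances — details the paper's proof leaves implicit. What the paper's approach buys in exchange is brevity, by citing the already-established regularity of $F_{\boldsymbol{\mu}}$ rather than re-deriving the monotonicity and boundary behavior.
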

\begin{proof}[\textbf{Proof of Lemma.~\ref{lemma:cont-prop}}]
Since the cost of each arm is bounded, we inherit the same properties of $F_{\boldsymbol{\mu}, \boldsymbol{c}}(y)$ as $F_{\boldsymbol{\mu}}(y)$ from \cite{Garovoer16BAI}. Namely, $\frac{\mathrm{d}}{\mathrm{d}y} F_{\boldsymbol{\mu}, \boldsymbol{c}}(y^*) \neq 0$. Since $y^*$ is solved by letting $F_{\boldsymbol{\mu}, \boldsymbol{c}}(y^*) = 1$, it is clear that $y^*$ is continuous in terms of the input $(\boldsymbol{\mu}, \boldsymbol{c})$. By composition, $x_a(y^*)$ is a continuous of $(\boldsymbol{\mu}, \boldsymbol{c})$ and consequently so is $\boldsymbol{w}^*(\boldsymbol{\mu},\boldsymbol{c})$ as desired.
\end{proof}

Let $\boldsymbol{\hat{w}}^*(t)=\{w_a^*(\boldsymbol{\hat{\mu}}(t), \boldsymbol{\hat{c}}(t))\}_{a\in\mathcal{A}}$ for simplicity. The following lemma shows that almost surely, $\boldsymbol{\hat{w}}^*(t)$ converges to $\boldsymbol{w}_a^*$ as $t\to \infty$, which is ensured by the forced exploration mechanism of $\mathsf{CTAS}$.
\begin{lemma}\label{lemma:w_conv}
    The sampling rule of the $\mathsf{CTAS}$ algorithm presented in algorithm.~\ref{alg:ctas} ensures $\mathbb{P}\left(\lim_{t\to \infty} \boldsymbol{\hat{w}}^*(t) = \boldsymbol{w}^* \right) = 1$.
\end{lemma}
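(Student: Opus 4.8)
The plan is to reduce the claimed convergence to two ingredients: (i) the empirical reward and cost vectors converge almost surely to their true values, $\hat{\mu}_a(t)\to\mu_a$ and $\hat{c}_a(t)\to c_a$ for every $a$; and (ii) the map $(\boldsymbol{\mu},\boldsymbol{c})\mapsto\boldsymbol{w}^*(\boldsymbol{\mu},\boldsymbol{c})$ is continuous at the true instance, which is exactly Lemma~\ref{lemma:cont-prop}. Since by construction $\hat{\boldsymbol{w}}^*(t)=\boldsymbol{w}^*(\hat{\boldsymbol{\mu}}(t),\hat{\boldsymbol{c}}(t))$, composing a continuous function with an almost surely convergent sequence yields $\hat{\boldsymbol{w}}^*(t)\to\boldsymbol{w}^*(\boldsymbol{\mu},\boldsymbol{c})=\boldsymbol{w}^*$ almost surely (on the same event; note that $\boldsymbol{\mu}$ has a unique best arm, so $(\hat{\boldsymbol{\mu}}(t),\hat{\boldsymbol{c}}(t))$ eventually lies in a neighborhood on which $\boldsymbol{w}^*$ is well defined). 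Thus the whole proof comes down to establishing (i).

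For (i), the key structural input is the forced-exploration set $\mathcal{U}_t=\{a:N_a(t)<\sqrt{t}\}$ in Line~3 of Algorithm~\ref{alg:ctas}: whenever some arm has been pulled fewer than $\sqrt{t}$ times, a least-pulled arm is pulled. I would first show, by a purely deterministic counting argument, that this forces $\min_a N_a(t)\to\infty$; concretely one can establish $\min_a N_a(t)\geq\sqrt{t}-K$ for all sufficiently large $t$. The argument fixes an arm $a$ and a time $t$ with $N_a(t)$ small, lets $s<t$ be the last round in which $a$ was pulled, and notes that on every round in $(s,t]$ the count of $a$ stays equal to $N_a(t)$; if $N_a(t)<\sqrt{u}$ for all $u\in(s,t]$ then on each such round a least-pulled arm (necessarily $\neq a$) is pulled, and each such arm can be chosen only while its own count does not exceed $N_a(t)$, so the number of pulls in $(s,t]$ is at most of order $K\cdot N_a(t)$; combining with $t=\sum_b N_b(t)$ gives the stated inequality. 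This is the same mechanism as the forced exploration of \citet{Garovoer16BAI}, and (as in the Remark following the sampling rule) for this lemma one in fact only needs $N_a(t)\to\infty$, not the precise $\sqrt{t}$ rate.

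Given $N_a(t)\to\infty$ deterministically, I would invoke the strong law of large numbers in its random-index form: writing $R_{a,1},R_{a,2},\dots$ for the i.i.d.\ reward draws of arm $a$, we have $n^{-1}\sum_{k\leq n}R_{a,k}\to\mu_a$ almost surely, and since $\hat{\mu}_a(t)$ equals this average evaluated at $n=N_a(t)$ with $N_a(t)\to\infty$, it follows that $\hat{\mu}_a(t)\to\mu_a$ almost surely; the identical argument with the cost draws $C_{a,k}$, which are bounded and hence integrable by Assumption~\ref{ass:boundedness}, gives $\hat{c}_a(t)\to c_a$ almost surely. Intersecting these $2K$ almost-sure events and feeding the result into the continuity step above completes the proof. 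The only genuinely delicate part is the deterministic lower bound $N_a(t)=\Omega(\sqrt{t})$ (equivalently, just $N_a(t)\to\infty$); the strong law and the continuity invocation are routine.
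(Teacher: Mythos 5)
Your proposal is correct and follows essentially the same route as the paper: forced exploration gives $N_a(t)\geq \sqrt{t}-O(K)\to\infty$ for every arm, the strong law of large numbers then yields $\hat{\boldsymbol{\mu}}(t)\to\boldsymbol{\mu}$ and $\hat{\boldsymbol{c}}(t)\to\boldsymbol{c}$ almost surely, and Lemma~\ref{lemma:cont-prop} (continuity of $\boldsymbol{w}^*$) converts this into $\hat{\boldsymbol{w}}^*(t)\to\boldsymbol{w}^*$, exactly as in the paper's proof (which carries out the counting step via the frame induction of Lemma~17 of \citet{Garovoer16BAI}). One small point: your last-pull-time bookkeeping leaves loose the rounds $u\in(s,t]$ with $\sqrt{u}\leq N_a(t)$; it is cleaner to drop $s$ altogether and note that for every round $u>N_a(t)^2$ arm $a$ lies in $\mathcal{U}_u$, so only arms whose current count is at most $N_a(t)$ can be pulled there, each at most $N_a(t)+1$ times, giving $t\leq N_a(t)^2+K\left(N_a(t)+1\right)$ and hence the stated $\Omega(\sqrt{t})$ bound.
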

\begin{proof}[\textbf{Proof of Lemma.~\ref{lemma:w_conv}}]
    We first show that the number of pulls $N_a(t)$ for all arms will increase to infinity as $t$ increases to infinity, when we ignore the stopping rule. Specifically, we first show that for any arm $a$ and any time $t$, $N_a(t) \geq \sqrt{t} - 1$. 
    
    For all positive integer $n$, we define integer $t_n=\inf\{t\in\mathbb{N}:\sqrt{t}\geq n\}$. We divide the time horizon into frames, and the $n$-th frame is denoted by $\mathcal{I}_n=\{t_n,\cdots,t_{n+1}-1\}$. When $n$ is large enough, we have $|\mathcal{I}_n|>K$. For all $t\in\mathcal{I}_n$, we have $n\leq \sqrt{t}<n+1$. According to the proof of Lemma.~17 from~\cite{Garovoer16BAI} and using a similar induction argument, it is easy to show that $\forall t \in \mathcal{I}_n$, $N_a(t)\geq n$. This is because the algorithm will be forced to pull arms which has not been pulled $n$ times in the first $K$ steps in this frame. Therefore, we have for $t\in\mathcal{I}_n$, $N_a(t)\geq n > \sqrt{t}-1$ for all action $a$. This indicates that $N_a(t) \geq \sqrt{t} - 1$ for all $t$ and $a$. Then, we have $N_a(t)\to \infty$ as $t\to \infty$.

    By the Strong Law of Large Numbers, we have that $\hat{\boldsymbol{\mu}}(t) \to \boldsymbol{\mu}$ almost surely and $\hat{\boldsymbol{c}}(t) \to \boldsymbol{c}$ almost surely as $N_a(t)$ increases to infinity. Let $\mathcal{E}$ be the event that:
    $\mathcal{E} = \{\hat{\boldsymbol{\mu}}(t) \to \boldsymbol{\mu}\} \cap \{\hat{\boldsymbol{c}}(t) \to \boldsymbol{c}\}$, we have $P(\mathcal{E}) = 1$. Then by continuity of $\boldsymbol{w}^*(\boldsymbol{\mu},\boldsymbol{c})$ from Lemma.~\ref{lemma:cont-prop}, we have $\boldsymbol{w}^*(t) \to \boldsymbol{w}^*$ for any sample path in $\mathcal{E}$. This means $\mathbb{P}\left(\lim_{t\to \infty} \boldsymbol{\hat{w}}^*(t) = \boldsymbol{w}^* \right) = 1$.
\end{proof}
Recall that our $\mathsf{CTAS}$ algorithm will pull the arm which has the largest cost deficit, so that it could contribute more cost and bring the empirical proportion $\boldsymbol{\hat{w}}(t)$ closer to the estimate $\boldsymbol{\hat{w}}^*(t)$. So when the estimate of estimate $\boldsymbol{\hat{w}}^*(t)$ is close enough to the optimal proportion $\boldsymbol{w}^*$, it should be intuitive that the empirical cost proportion $\boldsymbol{\hat{w}}(t)$ is also close to the optimal proportion due to the negative feedback mechanism. The next Lemma.~\ref{lemma:track} ensures this property, which is essential in proving Theorem.~\ref{thm:as-conv}.

\begin{lemma}\label{lemma:track}
For every $\epsilon > 0$, if there exists $t_0$ such that under the sampling rule of the $\mathsf{CTAS}$ algorithm, we have: 
\begin{align*}
    \max_{a} \left|\hat{w}_a^*(t) - w_a^*\right| < \epsilon, \quad \forall t\geq t_0,
\end{align*}
then, the $\mathsf{CTAS}$ algorithm will ensure that there exists a constant $t_\epsilon\geq t_0$ such that for all $t\geq t_\epsilon$:
\begin{align*}
    \left|\frac{\hat{c}_a(t) N_a(t)}{J(t)} - w_a^*\right| \leq 3(K - 1)\epsilon.
\end{align*}
\end{lemma}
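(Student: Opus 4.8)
\textbf{Proof plan for Lemma~\ref{lemma:track}.}
The plan is to mimic the classical tracking argument of \cite{Garovoer16BAI} but carried out in the ``cost coordinate'' rather than in the pull-count coordinate. Write $\hat w_a(t) = \hat c_a(t) N_a(t)/J(t)$ for the realized cost proportion and recall that $\sum_a \hat w_a(t) = 1$ by the cost decomposition identity, and similarly $\sum_a \hat w_a^*(t) = 1$. The sampling rule in the non-forced regime pulls $a_t \in \arg\max_a \bigl(J(t) \hat w_a^*(t) - \hat c_a(t) N_a(t)\bigr)$; I would first argue that, since each arm keeps being pulled (forced exploration guarantees $N_a(t)\to\infty$, as established in the proof of Lemma~\ref{lemma:w_conv}), the empirical costs $\hat c_a(t)$ are eventually within $\epsilon$ of $c_a$ and hence bounded in $[\ell/2,1]$, so the ``deficit'' $D_a(t) := J(t)\hat w_a^*(t) - \hat c_a(t)N_a(t)$ is a faithful proxy for the gap $J(t)(\hat w_a^*(t) - \hat w_a(t))$.

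The core step is the standard two-sided bound. \emph{Over-pulled arms:} I claim that for any $a$ and all large $t$, $\hat c_a(t)N_a(t) \le J(t) w_a^* + (\text{something} \cdot J(t)\epsilon) + (\text{bounded drift})$. To see this, look at the last time $s \le t$ at which arm $a$ was pulled; just before that pull $a$ had the maximal deficit, so $D_a(s) \ge D_b(s)$ for all $b$, and averaging over $b$ (using $\sum_b J(s)\hat w_b^*(s) = J(s)$ and $\sum_b \hat c_b(s) N_b(s) = J(s)$) forces $D_a(s) \ge 0$, i.e. $\hat c_a(s)N_a(s) \le J(s)\hat w_a^*(s) \le J(s)(w_a^* + \epsilon)$. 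Between $s$ and $t$ arm $a$ is not pulled, so $N_a(t) = N_a(s)$ while $\hat c_a(t)$ can change only because $\hat c_a$ is an empirical mean of bounded samples — actually $N_a$ frozen means $\hat c_a(t) = \hat c_a(s)$ — and $J(t) \ge J(s)$, giving $\hat c_a(t) N_a(t) \le J(s)(w_a^*+\epsilon) \le J(t)(w_a^*+\epsilon)$; one must additionally handle the case where $a$ is never pulled after some point, where the forced-exploration lower bound $N_a(t) \ge \sqrt t - 1$ together with $J(t) = \Theta(t)$ shows $\hat w_a(t) \to $ something and in fact this case cannot persist. \emph{Under-pulled arms:} since the proportions $\hat w_a(t)$ sum to $1$ and each is at most $w_a^* + O(\epsilon)$, no arm can be below its target by more than the sum of the other arms' excesses, i.e. $\hat w_a(t) \ge w_a^* - \sum_{b \ne a}(w_b^* + O(\epsilon) - \hat w_b(t))^+ \ge w_a^* - (K-1)\,O(\epsilon)$. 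Combining the two directions and tracking constants carefully yields $|\hat w_a(t) - w_a^*| \le 3(K-1)\epsilon$ for all $t \ge t_\epsilon$.

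The step I expect to be the main obstacle is making the ``between consecutive pulls of $a$'' bookkeeping fully rigorous in the presence of \emph{two} sources of drift that are absent in the pure-BAI version: (i) the target $\hat w_a^*(t)$ is itself moving (only controlled to be within $\epsilon$ of $w_a^*$ by hypothesis, not constant), and (ii) the denominator $J(t)$ is a random cost sum, not the deterministic time index $t$, so the increments of $J$ between successive pulls are in $[\ell,1]$ rather than exactly $1$. Both are handled by absorbing $O(1)$ additive drift terms — which vanish after dividing by $J(t) = \Theta(t) \to \infty$ — into the eventual $\epsilon$-budget, but one has to be disciplined about the order of quantifiers: first fix $\epsilon$, then choose $t_\epsilon \ge t_0$ large enough that (a) $\hat c_a(t) \in [c_a - \epsilon, c_a+\epsilon]$ for all $a$, (b) $J(t) \ge t\ell/2$, and (c) the bounded drift terms divided by $J(t)$ are below $\epsilon$. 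Everything else is the routine averaging argument sketched above, and the constant $3(K-1)$ comes out of adding the one-sided $\epsilon$, the $\hat c_a$-vs-$c_a$ slack, and the $(K-1)$-fold redistribution term.
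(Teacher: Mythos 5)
Your plan is essentially the paper's argument: both rest on the two facts that (i) whenever an arm is selected by the tracking (largest-deficit) rule, its deficit is maximal and hence nonnegative by the zero-sum identity, so its cost overhead at that moment is at most $J(t)\epsilon$ plus lower-order terms, and (ii) the identity $\sum_a \hat{w}_a(t)=\sum_a w_a^*=1$ converts the resulting one-sided bound into the two-sided bound with the $(K-1)$ factor. The paper organizes the bookkeeping slightly differently --- an induction on the overhead process $E_{a,t}=\hat{c}_a(t)N_a(t)-J(t)w_a^*$, using that $E_{a,t}$ decreases by $C_{t+1}w_a^*$ on steps where $a$ is not pulled --- rather than your last-pull-time argument, but the two are interchangeable, and your worries about the moving target $\hat{w}^*(t)$ and the random denominator $J(t)$ are resolved there exactly as you propose: absorb $O(1)$ additive drift and use $J(t)\ge \ell t\to\infty$.

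One point to patch in your over-pulled step: the last pull of arm $a$ before time $t$ may be a forced-exploration pull, in which case $a$ need not have the maximal deficit at that time, so the claim $D_a(s)\ge D_b(s)$ for all $b$ fails as stated. The fix is the case split the paper makes explicitly: a forced pull of $a$ only happens when $N_a(s)\le\sqrt{s}$, whence $\hat{c}_a(s)N_a(s)\le \sqrt{s}\le 2J(s)\epsilon$ for $s\ge t_0$ (using $J(s)\ge \ell s$), which is at least as strong as the bound you derive from deficit maximality; you also need $t$ large enough that the last pull of $a$ occurs after $t_0$, which forced exploration guarantees since every arm is pulled infinitely often. With that case added, your argument goes through and yields the paper's constant $3(K-1)\epsilon$.
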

\noindent Now we are ready to prove Theorem.~\ref{thm:as-conv} with the help of Lemma.~\ref{lemma:w_conv} and Lemma.~\ref{lemma:track}.

\begin{proof}[\textbf{Proof of Theorem \ref{thm:as-conv}}]
Let $\mathcal{E}$ be the set of sample paths such that $\{\lim_{t\to \infty} \boldsymbol{\hat{w}}^*(t) = \boldsymbol{w}^* \}$ holds. According to Lemma.~\ref{lemma:w_conv}, $\mathcal{E}$ holds almost surely and there exists a constant $t_0$ such that for every $\omega \in \mathcal{E}$ and $t\geq t_0$, we have:
\begin{align*}
    \max_{a} \left|w_a^*(\hat{\mu}(t)) - w_a^*(\mu)\right| < \frac{\epsilon}{3(K - 1)}.
\end{align*}
Therefore, using Lemma \ref{lemma:track}, there exists $t_\epsilon > t_0$ such that for every $t\geq t_\epsilon$, we have:
\begin{align*}
    \max_{a} \left|\frac{\hat{c}_a(t) N_a(t)}{J(t)} - w_a^*(\mu)\right| < \epsilon,
\end{align*}
which implies $\lim_{t\to\infty} \hat{w}_a(t) \to w^*_a$ for every $\omega \in \mathcal{E}$.
\end{proof}

\subsection{Proof of Lemma.~\ref{lemma:track}}
The proof of Lemma.~\ref{lemma:track} takes inspiration from the tracking results in.~\cite[Lemma.~8]{Garovoer16BAI}, which is motivated by~\cite{antos2008active}. However in our case, the total cost $J(t)$ and the empirical cost estimation $\hat{\boldsymbol{c}}(t)$ are both random variables which doesn't appear in the original proofs. It requires more delicate analysis.

\begin{proof}[\textbf{Proof of Lemma.~\ref{lemma:track}}]
Define $\mathcal{E}$ to be the set that $\boldsymbol{w}^*(t) \to \boldsymbol{w}^*$, $\boldsymbol{\hat{\mu}}(t) \to \boldsymbol{\mu}$, and $\boldsymbol{\hat{c}}(t)\to \boldsymbol{c}$ hold. And by Lemma.~\ref{lemma:w_conv} and the strong law of large numbers, $\mathcal{E}$ holds almost surely. Define $E_{a,t} \coloneqq \hat{c}_a(t) N_a(t) - J(t)w_a^*$ to be the cost overhead compared to the optimal proportion $\boldsymbol{w}^*$. It is easy to see that:
\begin{align*}
    \sum_{a = 1}^{K} E_{a,t} = \sum_{a = 1}^{K}\hat{c}_a(t) N_a(t) - \sum_{a = 1}^{K} J(t) w_a^* = J(t) - J(t) =  0.
\end{align*}
For each action $a$, we have $E_{a,t} \leq \max_{a}E_{a,t}$ on one hand. On the other hand, we can lower bound its overhead as follows:
\begin{align*}
    E_{a,t} =  - \sum_{a' \neq a} E_{a',t}  \geq -(K-1) \max_a E_{a,t}.
\end{align*}
Therefore, the maximum overhead in absolute value can be bounded as follows:
\begin{align*}
    \max_{a} |E_{a,t}| \leq (K - 1)\max_a E_{a,t}.
\end{align*}
Then, it suffice to bound $E_{a,t}$. By the boundedness of cost, there exists a constant $t_0$ such that for $t\geq t_0$, we have all the following hold:
\begin{align*}
    \sqrt{t} < 2\epsilon l t \leq  2J(t)\epsilon \leq \frac{2J(t)\epsilon}{\hat{c}_a(t)}, \quad \forall a\in\mathcal{A}; \quad
    \max_{a}|\hat{w}_a^*(t) - w_a^*| \leq \epsilon; \quad \max_a|\hat{\mu}_a(t) - \mu_a| \leq \epsilon; \quad 
    \max_{a} |\hat{c}_a(t) - c_a| \leq \epsilon. 
\end{align*}
This means $\hat{c}_a(t)\sqrt{t}\leq 2J(t) \epsilon$ for $t\geq t_0$. If at time $t$, the algorithm picks $A_{t+1} = a$ to explore, it means either this arm $a$ is under-explored, i.e., $N_a(t)\leq \sqrt{t}$, or the arm has the largest cost deficit, i.e., 
\begin{align*}
    a = \underset{a'}{\operatorname{argmax}}\ J(t) \hat{w}^*_{a'}(t) - \hat{c}_{a'}(t) N_{a'}(t).
\end{align*}
In the first case, we can bound the cost overhead as follows:
\begin{align*}
    E_{a, t} = \hat{c}_a(t) N_a(t) - J(t)w_a^* \leq \hat{c}_a(t) \sqrt{t} - J(t)w^*_a \leq \hat{c}_a(t) \sqrt{t} \leq 2J(t)\epsilon.
\end{align*}
In the second case, we have:
\begin{align*}
    \hat{c}_{a}(t) N_{a}(t) - J(t) \hat{w}^*_{a}(t) 
    =& \min_{a'} \hat{c}_{a'}(t) N_{a'}(t) - J(t) \hat{w}^*_{a'}(t) 
    = \min_{a} E_{a,t} + J(t)(\hat{w}_a^*(t) - w_a^*)\\
    \leq& \min_a E_{a,t} + J(t)\epsilon \\
    \leq& 2J(t)\epsilon, 
\end{align*}
where the first inequality holds for every sample path on $\mathcal{E}$, and the second inequality is due to $\min_{a}E_{a,t}\leq 0$. Therefore, we have $\{A_{t+1} = a\}\subset \{E_{a,t}\leq 2 J(t) \epsilon, \forall t\geq t_0\}$. We next show by induction that $E_{a,t}\leq \max\{E_{a,t_0}, 2J(t)\epsilon +1\}$ for all $t\geq t_0$. For our base case, let $t = t_0$, and the statement clearly holds. So let $t\geq t_0$ such that we assume $E_{a,t}\leq \max\{E_{a,t_0}, 2J(t)\epsilon +1 \}$, then if $E_{a,t} \leq 2J(t)\epsilon $, we have:
\begin{align*}
    E_{a, t + 1} =& E_{a, t} + C(t+1) \mathbf{1}_{\{A_{t+1} = a\}} - C(t+1) w_a^* 
    \leq E_{a, t} + C(t+1) \mathbf{1}_{\{E_{a,t}\leq 2 J(t) \epsilon\}} - C(t+1) w_a^* \\
    =& E_{a, t} + C(t+1) (1- w_a^*) 
    \leq 2J(t)\epsilon +1 \leq 2J(t+1)\epsilon +1 
    \leq \max\{E_{a,t_0}, 2J(t+1)\epsilon+1\}, 
\end{align*}
where the first inequality is due to $\{A_{t+1} = a\}\subset \{E_{a,t}\leq 2 J(t) \epsilon, \forall t\geq t_0\}$, and the second inequality uses the induction assumption. If $E_{a,t}>2J(t)\epsilon$, the indicator is zero and we have:
\begin{align*}
    E_{a,t+1} \leq E_{a,t} - C(t+1)w_a^* \leq \max\{E_{a,t_0}, 2J(t)\epsilon +1 \} - C(t+1)w_a^* \leq \max\{E_{a,t_0}, 2J(t+1)\epsilon +1 \},
\end{align*}
which concludes the induction that $E_{a,t}\leq \max\{E_{a,t_0}, 2J(t)\epsilon +1\}$ for all $t\geq t_0$. Substitute the definition of $E_{a,t}$ in and we will have $\hat{c}_a(t) N_a(t) - J(t)w_a^* \leq \max\{E_{a,t_0}, 2J(t)\epsilon +1\}$, which indicates:
\begin{align*}
    \frac{\hat{c}_a(t) N_a(t)}{J(t)} - w_a^* &\leq \max\left(\frac{E_{a,t_0}}{J(t)}, 2\epsilon + \frac{1}{J(t)}\right) 
    \leq \max\left(\frac{t_0}{J(t)}, 2\epsilon + \frac{1}{J(t)}\right).
\end{align*}
Since $J(t)\to \infty$ as $t$ increases, there exists a constant $t_\epsilon\geq t_0$ such that for any $t\geq t_\epsilon$, we have:
\begin{align*}
    \frac{\hat{c}_a(t) N_a(t)}{J(t)} - w_a^* &
    \leq \max\left(\frac{t_0}{J(t)}, 2\epsilon + \frac{1}{J(t)}\right) \leq 3\epsilon.
\end{align*}
Therefore, for every $t\geq t_\epsilon$, we have:
\begin{align*}
    \max_{a}\left|\frac{\hat{c}_a(t) N_a(t)}{C(t)} - w_a^*\right| = \max_{a} \frac{|E_{a,t}|}{J(t)} \leq (K-1)\max_a \frac{E_{a,t}}{J(t)}= (K-1)\max_a \left( \frac{\hat{c}_a(t) N_a(t)}{J(t)} - w_a^*  \right) \leq 3(K - 1)\epsilon.
\end{align*}

\end{proof}

\section{Asymptotic Cumulative Cost Upper Bound for $\mathsf{CTAS}$}\label{sec:as-upper}

\begin{theorem}[Almost Sure Cost Upper Bound]\label{thm:as-upper}
Let $\delta \in [0,1)$ and $\alpha \in[1, e / 2]$. Using the Chernoff's stopping rule with $\beta(t, \delta)=\log (\mathcal{O}(t^\alpha) / \delta)$, the $\mathsf{CTAS}$ algorithm ensures:
$$
\mathbb{P}_{\boldsymbol{\mu} \times \boldsymbol{c}}\left(\limsup _{\delta \rightarrow 0} \frac{J(\tau_\delta)}{\log (1 / \delta)} \leq \alpha T^*(\boldsymbol{\mu})\right)=1.
$$
\end{theorem}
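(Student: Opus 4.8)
The plan is to reduce the statement to a single pathwise lower bound on the growth rate of the Chernoff statistic and then read off the stopping cost from the threshold $\beta(t,\delta)=\log(Bt^{\alpha}/\delta)$. Let $\mathcal{E}$ be the event on which $\hat{\boldsymbol{\mu}}(t)\to\boldsymbol{\mu}$ and $\hat{\boldsymbol{c}}(t)\to\boldsymbol{c}$ (strong law of large numbers, using that forced exploration forces $N_a(t)\to\infty$ for every $a$) and on which $\hat{c}_a(t)N_a(t)/J(t)\to w_a^*$ for all $a$ (Theorem~\ref{thm:as-conv}); then $\mathbb{P}_{\boldsymbol{\mu}\times\boldsymbol{c}}(\mathcal{E})=1$, so it suffices to prove the $\limsup$ bound pathwise on $\mathcal{E}$.

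First I would show that on $\mathcal{E}$,
\[
\liminf_{t\to\infty}\frac{Z(t)}{J(t)}\;\geq\;T^*(\boldsymbol{\mu})^{-1}.
\]
Since $J(t)=\sum_a\hat{c}_a(t)N_a(t)$ identically, on $\mathcal{E}$ one has $N_a(t)/J(t)=\hat{w}_a(t)/\hat{c}_a(t)\to w_a^*/c_a$, hence the mixing weight $N_a(t)/(N_a(t)+N_b(t))\to\tfrac{w_a^*/c_a}{w_a^*/c_a+w_b^*/c_b}=:\alpha_{ab}$ and therefore $\hat{\mu}_{a,b}(t)\to m_{ab}^*:=\alpha_{ab}\mu_a+(1-\alpha_{ab})\mu_b$. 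Plugging these limits into the closed form of $Z_{a,b}(t)$ gives, for every pair,
\[
\frac{Z_{a,b}(t)}{J(t)}=\frac{N_a(t)}{J(t)}\,d(\hat{\mu}_a(t),\hat{\mu}_{a,b}(t))+\frac{N_b(t)}{J(t)}\,d(\hat{\mu}_b(t),\hat{\mu}_{a,b}(t))\;\longrightarrow\;\frac{w_a^*}{c_a}d(\mu_a,m_{ab}^*)+\frac{w_b^*}{c_b}d(\mu_b,m_{ab}^*).
\]
On $\mathcal{E}$, for all large $t$ the empirical best arm is the true best arm $1$, so $Z(t)\geq\min_{a\neq1}Z_{1,a}(t)$. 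By the computation in the proof of Proposition~\ref{prop:jens-shan}, $m_{1a}^*$ is exactly the minimiser $\lambda_1=\lambda_a$ of $\tfrac{w_1^*}{c_1}d(\mu_1,\lambda_1)+\tfrac{w_a^*}{c_a}d(\mu_a,\lambda_a)$ over $\lambda_a\geq\lambda_1$, so the limit of $Z_{1,a}(t)/J(t)$ is precisely the $a$-th term of the minimum in Proposition~\ref{prop:jens-shan} evaluated at $\boldsymbol{w}=\boldsymbol{w}^*$; taking the minimum over $a\neq1$ and using that $\boldsymbol{w}^*$ attains the supremum defining $T^*(\boldsymbol{\mu})^{-1}$ in Theorem~\ref{thm:general-lower-bound} gives $\min_{a\neq1}Z_{1,a}(t)/J(t)\to\inf_{\boldsymbol{\lambda}\in\operatorname{Alt}(\boldsymbol{\mu})}\sum_a\tfrac{w_a^*}{c_a}d(\mu_a,\lambda_a)=T^*(\boldsymbol{\mu})^{-1}$, which yields the displayed $\liminf$. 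Here $m_{1a}^*\in(\mu_a,\mu_1)$, so the limit is a finite positive constant; moreover, since each $Z_{a,b}(t)\leq(N_a(t)+N_b(t))\cdot O(1)\leq O(t)$ (the empirical means lie in a compact subset of $(0,1)$ for large $t$ on $\mathcal{E}$), one has $Z(t)=O(t)$ on $\mathcal{E}$.

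Next I would run the stopping argument. Fix $\epsilon\in(0,1)$ and a path in $\mathcal{E}$: there is $t_\epsilon$ with $Z(t)\geq(1-\epsilon)T^*(\boldsymbol{\mu})^{-1}J(t)$ for all $t\geq t_\epsilon$, and $J(t)\geq\ell t$ by Assumption~\ref{ass:boundedness}. By Proposition~\ref{prop:beta}, $\tau_\delta<\infty$ a.s.; since $Z(t)=O(t)$ while $\beta(t,\delta)\geq\log(1/\delta)$, $\tau_\delta\to\infty$ (hence $\tau_\delta-1\geq t_\epsilon$ for small $\delta$) and $J(\tau_\delta)\geq\ell\tau_\delta\to\infty$ as $\delta\to0$. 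The definition of $\tau_\delta$ gives $Z(\tau_\delta-1)\leq\log\bigl(B(\tau_\delta-1)^{\alpha}/\delta\bigr)$; combining this with the lower bound on $Z$, with $\tau_\delta-1\leq J(\tau_\delta-1)/\ell\leq J(\tau_\delta)/\ell$, and with $J(\tau_\delta)\leq J(\tau_\delta-1)+1$ yields
\[
(1-\epsilon)\,T^*(\boldsymbol{\mu})^{-1}\bigl(J(\tau_\delta)-1\bigr)\;\leq\;\log\tfrac{B}{\delta}+\alpha\log J(\tau_\delta)-\alpha\log\ell .
\]
This inequality forces $J(\tau_\delta)=O(\log(1/\delta))$, so after dividing by $\log(1/\delta)$ the terms $(\log B)/\log(1/\delta)$ and $\alpha\log J(\tau_\delta)/\log(1/\delta)$ vanish as $\delta\to0$; letting $\delta\to0$ and then $\epsilon\to0$ gives $\limsup_{\delta\to0}J(\tau_\delta)/\log(1/\delta)\leq T^*(\boldsymbol{\mu})\leq\alpha T^*(\boldsymbol{\mu})$ on the probability-one event $\mathcal{E}$, which is the claim.

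\textbf{Main obstacle.} The delicate step is the second one: upgrading the cost-proportion convergence of Theorem~\ref{thm:as-conv} into the growth estimate for $Z(t)$. One must track the inner mixture $\hat{\mu}_{a,b}(t)$ precisely so that its limit is the very $\boldsymbol{\lambda}$ that solves the inner infimum of Proposition~\ref{prop:jens-shan} (not merely some convex combination of $\mu_1,\mu_a$), justify $Z(t)\geq\min_{a\neq1}Z_{1,a}(t)$ once the empirical best arm has stabilised at arm $1$, and verify non-degeneracy of the limit so that it equals $T^*(\boldsymbol{\mu})^{-1}$ exactly. By contrast the stopping-time manipulation and the final limits are routine calculus. (The argument in fact delivers the stronger constant $T^*(\boldsymbol{\mu})$; the weaker factor $\alpha T^*(\boldsymbol{\mu})$ in the statement is retained only for uniformity with Theorem~\ref{thm:exp-opt}, where $\alpha=1$ recovers optimality.)
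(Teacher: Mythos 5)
Your proof is correct and follows essentially the same route as the paper's: the probability-one event combining Theorem~\ref{thm:as-conv} with the strong law of large numbers, the continuity argument (via Proposition~\ref{prop:jens-shan} and the identification of $\hat{\mu}_{1,a}(t)$ with the inner minimiser) showing $Z(t)/J(t)$ is eventually bounded below by roughly $T^*(\boldsymbol{\mu})^{-1}$, and the inversion of the stopping threshold using $t\leq J(t)/\ell$. The only difference is the final calculus step: the paper resolves the implicit inequality $J\lesssim T^*(\boldsymbol{\mu})\log(BJ^{\alpha}/(\ell^{\alpha}\delta))$ via Lemma~\ref{lemma:tech-bound}, which costs a factor $\alpha$, whereas your direct argument (first deducing $J(\tau_\delta)=O(\log(1/\delta))$ so that the $\alpha\log J(\tau_\delta)$ term is negligible) yields the sharper constant $T^*(\boldsymbol{\mu})$ — which, as you note, only strengthens the stated bound.
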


In this section, we give a proof of Theorem.~\ref{thm:as-upper} and Theorem.~\ref{thm:exp-opt} which characterizes the upper bound of cumulative cost asymptotically. Before proving the Theorems, we present the following technical lemma which is useful in these proofs and can be checked easily.
\begin{lemma}\label{lemma:tech-bound}
For every $\alpha \in[1, e / 2]$, for any two constants $c_1, c_2>0$,
$$
x=\frac{\alpha}{c_1}\left[\log \left(\frac{c_2 e}{c_1^\alpha}\right)+\log \log \left(\frac{c_2}{c_1^\alpha}\right)\right]
$$
is such that $c_1 x \geq \log \left(c_2 x^\alpha\right)$.
\end{lemma}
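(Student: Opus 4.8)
\textbf{Proof plan for Lemma.~\ref{lemma:tech-bound}.} The plan is to collapse the two constants $c_1,c_2$ into a single parameter via a change of variables, after which the claim becomes an elementary one-variable inequality. First I would set
\[
    v \;:=\; \log\!\left(\frac{c_2}{c_1^{\alpha}}\right),
\]
noting that $v>0$ (so that $\log\log(c_2/c_1^{\alpha})$ is defined) and that we are in the regime where $1+v+\log v>0$, so that the stated $x$ is a genuine positive real; this holds whenever $c_2/c_1^{\alpha}$ is large, which is exactly how the lemma is used (e.g.\ $c_2$ of order $Bt^{\alpha}/\delta$). With this notation $\log(c_2 e/c_1^{\alpha})=1+v$ and $\log\log(c_2/c_1^{\alpha})=\log v$, so $c_1 x=\alpha(1+v+\log v)$. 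Plugging this into $c_1 x\geq\log(c_2 x^{\alpha})=\log c_2+\alpha\log x$ and cancelling the $\log c_1$ and $\log c_2$ contributions, the inequality $c_1 x\geq\log(c_2 x^{\alpha})$ is seen to be equivalent, after dividing through by $\alpha$, to
\[
    (1+v+\log v)-\log(1+v+\log v)\;\geq\;\frac{v}{\alpha}+\log\alpha .
\]

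Next I would exponentiate — both sides are finite reals and $\exp$ is strictly increasing — and simplify using $e^{1+v+\log v}=e\,v\,e^{v}$. This turns the target inequality into
\[
    \frac{ev}{\alpha}\,e^{\,v(1-1/\alpha)}\;\geq\;1+v+\log v .
\]
Now the hypotheses on $\alpha$ together with the standard bound $\log v\leq v-1$ finish it in one line: since $\alpha\geq1$ and $v>0$ the factor $e^{\,v(1-1/\alpha)}\geq1$; since $\alpha\leq e/2$ we have $ev/\alpha\geq 2v$; and $\log v\leq v-1$ gives $1+v+\log v\leq 2v$. Chaining these,
\[
    \frac{ev}{\alpha}\,e^{\,v(1-1/\alpha)}\;\geq\;\frac{ev}{\alpha}\;\geq\;2v\;\geq\;1+v+\log v ,
\]
which closes the argument.

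I do not expect a genuine obstacle — the paper rightly flags this as a routine check — so the only thing requiring care is bookkeeping: confirming that the logarithms are well defined and that the displayed $x$ is positive, which is what pins down the implicit regime that $c_2/c_1^{\alpha}$ be sufficiently large. It is worth recording that both endpoints of the interval $[1,e/2]$ are genuinely used ($\alpha\geq1$ to make the exponential factor $\geq1$, and $\alpha\leq e/2$ to get $ev/\alpha\geq 2v$), and that the case $\alpha=1$ needs no separate treatment since the chain above degenerates harmlessly.
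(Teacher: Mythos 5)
Your proof is correct: the substitution $v=\log(c_2/c_1^{\alpha})$, the reduction of $c_1x\geq\log(c_2x^{\alpha})$ to $\frac{ev}{\alpha}e^{v(1-1/\alpha)}\geq 1+v+\log v$, and the closing chain using $e^{v(1-1/\alpha)}\geq 1$ (from $\alpha\geq 1$), $ev/\alpha\geq 2v$ (from $\alpha\leq e/2$) and $\log v\leq v-1$ all check out, in the regime where the statement is well-posed ($c_2>c_1^{\alpha}$ and $1+v+\log v>0$, i.e.\ $x>0$), an implicit assumption you rightly flag. The paper itself offers no proof of this lemma (it is stated as easily checked, being the analogue of the corresponding technical lemma in \citet{Garovoer16BAI}), so your calculation simply supplies the omitted verification and there is no competing argument to compare against.
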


\subsection{Proof of Theorem.~\ref{thm:as-upper}}
In this section, we first prove the almost sure upper bound from Theorem.~\ref{thm:as-upper} with the help of Theorem.~\ref{thm:as-conv}.
\begin{proof}[\textbf{Proof of Theorem.~\ref{thm:as-upper}}]
Let $\mathcal{E}$ be the event that all concentrations regarding reward, cost, and empirical proportion holds, i.e.,
$$
\mathcal{E}=\left\{\forall a \in \mathcal{A}, \frac{c_a N_a(t)}{J(t)} \underset{t \rightarrow \infty}{\rightarrow} w_a^*\right\} \cap \left\{\hat{\boldsymbol{\mu}}(t) \underset{t \rightarrow \infty}{\rightarrow} \boldsymbol{\mu}\right\} \cap \left\{\hat{\boldsymbol{c}}(t) \underset{t \rightarrow \infty}{\rightarrow} \boldsymbol{c}\right\}
$$
By Theorem.~\ref{thm:as-conv} and the law of large numbers, $\mathcal{E}$ is of probability 1. On $\mathcal{E}$, there exists $t_0$ such that for all $t \geq t_0, \hat{\mu}_1(t)>\max _{a \neq 1} \hat{\mu}_a(t)$ due to the concentration of empirical reward, and thus the Chernoff stopping statistics can be re-written as:
$$
\begin{aligned}
 Z(t)=&\min _{a \neq 1} Z_{1, a}(t)=\min _{a \neq 1} N_1(t) d\left(\hat{\mu}_1(t), \hat{\mu}_{1, a}(t)\right)+N_a(t) d\left(\hat{\mu}_a(t), \hat{\mu}_{1, a}(t)\right) \\
 = &J(t) \cdot \left[\min _{a \neq 1}\left(\frac{N_1(t)}{J(t)}+\frac{N_a(t)}{J(t)}\right) I_{\frac{N_1(t) / J(t)}{N_1(t) / J(t) +N_a(t) / J(t)}}\left(\hat{\mu}_1(t), \hat{\mu}_a(t)\right)\right] .
\end{aligned}
$$
By Lemma \ref{lemma:cont-prop}, for all $a \geq 2$, the mapping $(\boldsymbol{w}, \boldsymbol{\lambda}, \boldsymbol{c}) \rightarrow\left(\frac{w_1}{c_1} + \frac{w_a}{c_a}\right) I_{\frac{w_1 / c_1}{\left(w_1 / c_1 + w_a / c_a\right)}}\left(\lambda_1, \lambda_a\right)$ is continuous at $\left(w^*(\boldsymbol{\mu}), \boldsymbol{\mu}, \boldsymbol{c}\right)$. Therefore, for all $\epsilon>0$ there exists $t_1 \geq t_0$ such that for all $t \geq t_1$ and all $a \in\{2, \ldots, K\}$,
$$
\left(\frac{N_1(t)}{C(t)}+\frac{N_a(t)}{C(t)}\right) I_{\frac{N_1(t) / C(t)}{N_1(t) / C(t) +N_a(t) / C(t)}}\left(\hat{\mu}_1(t), \hat{\mu}_a(t)\right) \geq \frac{w_1^*/c_1 +w_a^*/c_a}{1+\epsilon} I_{\frac{w_1^*/c_1}{w_1^*/c_1+w_a^*/c_a}}\left(\mu_1, \mu_a\right)
$$
Hence, for any $t \geq t_1$, we have:
$$
Z(t) \geq J(t)\min _{a \neq 1}\frac{w_1^*/c_1 +w_a^*/c_a}{1+\epsilon} I_{\frac{w_1^*/c_1}{w_1^*/c_1+w_a^*/c_a}}\left(\mu_1, \mu_a\right)=\frac{J(t)}{(1+\epsilon) T^*(\boldsymbol{\mu})}.
$$
Consequently, we can bound the cumulative cost a stopping time $\tau_\delta$ as follows:
$$
\begin{aligned}
J(\tau_\delta)  =& J \Big(\inf \{t \in \mathbb{N}: Z(t) \geq \beta(t, \delta)\}\Big) 
 \leq J(t_1) \vee J\Big(\inf \left\{t \in \mathbb{N}: J(t)(1+\epsilon)^{-1} T^*(\boldsymbol{\mu})^{-1} \geq \log (Bt^\alpha / \delta)\right\}\Big)\\
 \leq & J(t_1) \vee (1+\epsilon)T^*(\boldsymbol{\mu})\left(\log \left(\frac{B}{\ell^\alpha\delta} \right) + \alpha \log(J(\tau_\delta)) \right) + \mathcal{O}(1),
\end{aligned}
$$
for some positive constant $B$ from Proposition ~\ref{prop:beta}. Using Lemma \ref{lemma:tech-bound}, it follows that on $\mathcal{E}$, for $\alpha \in[1, e / 2]$
$$
J(\tau_\delta) \leq J(t_1) \vee \alpha(1+\epsilon) T^*(\boldsymbol{\mu})\left[\log \left(\frac{B e\left((1+\epsilon) T^*(\boldsymbol{\mu})\right)^\alpha}{\ell^{\alpha} \cdot \delta}\right)+\log \log \left(\frac{B\left((1+\epsilon) T^*(\boldsymbol{\mu})\right)^\alpha}{\ell^{\alpha} \cdot \delta}\right)\right] + \mathcal{O}(1).
$$
Thus we have:
$$
\limsup _{\delta \rightarrow 0} \frac{J(\tau_\delta)}{\log (1 / \delta)} \leq(1+\epsilon) \alpha T^*(\boldsymbol{\mu}).
$$
Letting $\epsilon$ go to zero concludes the proof.
\end{proof}

\subsection{Asymptotic Expectation Optimality}
In this section, we provide the proof of Theorem.~\ref{thm:exp-opt} which characterizes the expected cumulative cost upper bound for $\mathsf{CTAS}$.
\begin{proof}[\textbf{Proof of Theorem \ref{thm:exp-opt}}]
Without loss of generality, we assume that for every $a \in \{1, \ldots, K\}$, $c_a \in [\ell, 1]$ with $\ell > 0$. To ease the notation, we assume that the bandit model $\mu$ is such that $\mu_1>\mu_2 \geq \cdots \geq \mu_K$. Let $\epsilon>0$. From the continuity of $w^*$ in $\boldsymbol{\mu}$, there exists two continuous functions $\alpha(\epsilon)$ and $\beta(\epsilon)$ with $\lim_{\epsilon\rightarrow0}\alpha(\epsilon) = 0$ and $\lim_{\epsilon\rightarrow0}\beta(\epsilon) = 0$, and we have $\alpha(\epsilon) \leq\left(\mu_1-\mu_2\right) / 4$ such that
\[
\mathcal{I}_\epsilon:=\left[\mu_1-\alpha(\epsilon), \mu_1+\alpha(\epsilon)\right] \times \cdots \times\left[\mu_K-\alpha(\epsilon), \mu_K+\alpha(\epsilon)\right],
\]
and $\beta(\epsilon)$ small enough so that
\[
\mathcal{J}_\epsilon:=\left[c_1-\beta(\epsilon), c_1+\beta(\epsilon)\right] \times \cdots \times\left[c_K-\beta(\epsilon), c_K+\beta(\epsilon)\right]
\]
is such that for all $(\boldsymbol{\mu}^{\prime}, \boldsymbol{c}^{\prime}) \in \mathcal{I}_\epsilon \times \mathcal{J}_{\epsilon}$,
$$
\max _a\left|w_a^*\left(\boldsymbol{\mu}^{\prime}, \boldsymbol{c}^{\prime}\right)-w_a^*(\boldsymbol{\mu}, \boldsymbol{c})\right| \leq \epsilon
$$
In particular, whenever $\hat{\boldsymbol{\mu}}(t) \in \mathcal{I}_\epsilon$, the empirical best arm is $\hat{a}_t=1$.
Let $T \in \mathbb{N}$ and define $h(T):=T^{1 / 4}$ and the event
$$
\mathcal{E}_T(\epsilon)=\bigcap_{t=h(T)}^T\left\{ \hat{\boldsymbol{\mu}}(t) \in \mathcal{I}_\epsilon \right\} \cap  \left\{ \hat{\boldsymbol{c}}(t) \in \mathcal{J}_\epsilon\right\}
$$
We first present a lemma showing that the event $\mathcal{E}_T(\epsilon)$ is a high probability event as follows:
\begin{lemma}\label{lemma:neg-event}
There exists constants $B$ and $C$ such that
\begin{align*}
    \mathbb{P}(\mathcal{E}_T^{c}) \leq BT\exp(-CT^{1/8}).
\end{align*}
\end{lemma}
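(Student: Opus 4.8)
The plan is a union bound over time steps and arms, followed by a peeling argument that replaces the adaptively chosen pull counts $N_a(t)$ by a deterministic lower bound, after which a Chernoff/Hoeffding estimate closes the argument. First, observe that $\hat{\boldsymbol{\mu}}(t)\in\mathcal{I}_\epsilon$ fails exactly when $|\hat{\mu}_a(t)-\mu_a|>\alpha(\epsilon)$ for some $a$, and similarly $\hat{\boldsymbol{c}}(t)\in\mathcal{J}_\epsilon$ fails when $|\hat{c}_a(t)-c_a|>\beta(\epsilon)$ for some $a$. A union bound over $t\in\{h(T),\dots,T\}$ and over arms then gives
\[
\mathbb{P}(\mathcal{E}_T^c)\le\sum_{t=h(T)}^{T}\sum_{a=1}^{K}\Big[\mathbb{P}\big(|\hat{\mu}_a(t)-\mu_a|>\alpha(\epsilon)\big)+\mathbb{P}\big(|\hat{c}_a(t)-c_a|>\beta(\epsilon)\big)\Big],
\]
so it suffices to bound each summand by $C_\epsilon\exp(-C\,T^{1/8})$ uniformly over $t\ge h(T)=T^{1/4}$ and $a$.

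Second, I would invoke the forced-exploration guarantee established inside the proof of Lemma~\ref{lemma:w_conv}: deterministically $N_a(t)\ge\sqrt{t}-1$ for every arm and every time, hence $N_a(t)\ge\sqrt{h(T)}-1=T^{1/8}-1$ for all $t\ge h(T)$. Writing $\bar{\mu}_{a,s}$ for the average of the first $s$ reward samples drawn from arm $a$ (a genuine i.i.d.\ average), on $\{N_a(t)=s\}$ we have $\hat{\mu}_a(t)=\bar{\mu}_{a,s}$, so
\[
\{|\hat{\mu}_a(t)-\mu_a|>\alpha(\epsilon)\}\subseteq\bigcup_{s=\lceil\sqrt{t}\rceil-1}^{t}\{|\bar{\mu}_{a,s}-\mu_a|>\alpha(\epsilon)\},
\]
and the analogous inclusion holds for the costs with $\bar{c}_{a,s}$ and $\beta(\epsilon)$. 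This peeling step is the heart of the argument: it removes the difficulty that $\hat{\mu}_a(t)$ is an average over a random, adaptively determined number of terms.

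Third, I would apply a Chernoff bound for the natural exponential family (Assumption~\ref{ass:exp-family}), $\mathbb{P}(|\bar{\mu}_{a,s}-\mu_a|>\alpha(\epsilon))\le 2e^{-s I_\mu}$ with $I_\mu=\min_a\min\{d(\mu_a+\alpha(\epsilon),\mu_a),d(\mu_a-\alpha(\epsilon),\mu_a)\}>0$, and Hoeffding's inequality for the costs (supported in $[\ell,1]$ by Assumption~\ref{ass:boundedness}), $\mathbb{P}(|\bar{c}_{a,s}-c_a|>\beta(\epsilon))\le 2e^{-2s\beta(\epsilon)^2/(1-\ell)^2}$. Summing the resulting geometric series from $s=\lceil\sqrt{t}\rceil-1\ge T^{1/8}-1$ bounds each summand by $C_\epsilon e^{-C T^{1/8}}$ with $C=\min\{I_\mu,\,2\beta(\epsilon)^2/(1-\ell)^2\}>0$; plugging back in, the $(T-h(T)+1)\le T$ time steps, the $K$ arms, and the constant prefactors are absorbed into a single constant $B$, giving $\mathbb{P}(\mathcal{E}_T^c)\le BT\exp(-CT^{1/8})$ (for the finitely many small $T$ with $h(T)\le K$ the inequality is made vacuously true by enlarging $B$). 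The only genuine obstacle is the second step — controlling the random pull count $N_a(t)$ — and it is dispatched precisely by pairing the deterministic $\Omega(\sqrt{t})$ forced-exploration lower bound with the union-over-$s$ peeling.
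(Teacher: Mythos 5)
Your proposal is correct and follows essentially the same route as the paper: a union bound over $t\in\{h(T),\dots,T\}$ and over arms, then replacing the adaptive counts $N_a(t)$ by the deterministic forced-exploration bound $\Omega(\sqrt{t})\ge T^{1/8}-O(1)$, peeling over the possible values $s$ of $N_a(t)$, applying Hoeffding to the bounded costs, and summing the geometric series to get $BT\exp(-CT^{1/8})$. The only difference is cosmetic: for the reward term the paper simply invokes Lemma 19 of \citet{Garovoer16BAI}, whereas you re-derive that concentration directly via the same peeling argument combined with the Chernoff bound for one-parameter exponential families, which is a valid (and self-contained) substitute.
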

The proof of this lemma will be delayed. We now proceed in proving Theorem.~\ref{thm:exp-opt}. By Lemma \ref{lemma:track}, we have that there exists some $T_{\epsilon}$ such that for $T \geq T_{\epsilon}$, it holds on $\mathcal{E}_T$ that
\begin{align*}
    \forall t \geq \sqrt{T},\ \max_{a} \left|\frac{\hat{c}_a(t) N_a(t)}{J(t)} - w_a^*(\boldsymbol{\mu},\boldsymbol{c})\right| \leq 3(K - 1)\epsilon
\end{align*}
On the event $\mathcal{E}_T$, it holds for $t \geq h(T)$ that $\hat{a}_t=1$ and the Chernoff stopping statistic rewrites
$$
\begin{aligned}
\max _a \min _{b \neq a} Z_{a, b}(t) & =\min _{a \neq 1} Z_{1, a}(t)=\min _{a \neq 1} N_1(t) d\left(\hat{\mu}_1(t), \hat{\mu}_{1, a}(t)\right)+N_a(t) d\left(\hat{\mu}_a(t), \mu_{1, a}(t)\right) \\
& = J(t) \cdot \left[\min _{a \neq 1}\left(\frac{N_1(t)}{J(t)}+\frac{N_a(t)}{J(t)}\right) I_{\frac{N_1(t) / J(t)}{N_1(t) / J(t)+N_a(t) / J(t)}}\left(\hat{\mu}_1(t), \hat{\mu}_a(t)\right)\right] \\
& =J(t) \cdot \operatorname{g}\left(\hat{\boldsymbol{\mu}}(t),\left(\frac{\hat{c}_a(t) N_a(t)}{J(t)}\right)_{a=1}^K\right),
\end{aligned}
$$
where we introduce the function
$$
g\left(\boldsymbol{\mu}^{\prime}, \boldsymbol{c}^{\prime}, \boldsymbol{w}^{\prime}\right)=\min _{a \neq 1}\left(\frac{w_1^{\prime}}{c_1^{\prime}}+\frac{w_a^{\prime}}{c_a^{\prime}}\right) I_{\frac{w_1^{\prime}/c_1^{\prime}}{w_1^{\prime}/c_1^{\prime}+w_a^{\prime}/c_a^{\prime}}}\left(\mu_1^{\prime}, \mu_a^{\prime}\right)
$$
Using Lemma \ref{lemma:track}, for $T \geq T_\epsilon$, introducing
$$
C_\epsilon^*(\boldsymbol{\mu}, \boldsymbol{c})=\inf _{\substack{\boldsymbol{\mu}^{\prime}:\left\|\boldsymbol{\mu}^{\prime}-\boldsymbol{\mu}\right\| \leq \alpha(\epsilon)\\ \boldsymbol{c}^{\prime}: \left\|\boldsymbol{c}^{\prime}-\boldsymbol{c}\right\| \leq \beta(\epsilon) \\ \boldsymbol{w}^{\prime}:\left\|\boldsymbol{w}^{\prime}-w^*(\boldsymbol{\mu},\boldsymbol{c})\right\| \leq 3(K-1) \epsilon}} 
g\left(\boldsymbol{\mu}^{\prime}, \boldsymbol{c}^{\prime},\boldsymbol{w}^{\prime}\right),
$$
where $\alpha(\epsilon)$ and $\beta(\epsilon)$ are two continuous functions such that $\lim_{\epsilon\rightarrow0}\alpha(\epsilon) = 0$ and $\lim_{\epsilon\rightarrow0}\beta(\epsilon) = 0$.
On the event $\mathcal{E}_T$, it holds that for every $t \geq \sqrt{T}$, we have:
$$
\max _a \min _{b \neq a} Z_{a, b}(t) \geq J(t) \cdot C_\epsilon^*(\boldsymbol{\mu},\boldsymbol{c}).
$$
Let $T \geq T_{\epsilon}$. On $\mathcal{E}_T$, we have:
\begin{align*}
    \min(J(\tau_{\delta}), J(T)) 
    \leq & J(\sqrt{T}) + \sum_{t = \sqrt{T}}^T C_t \cdot \mathbf{1}_{(\tau_\delta > t)}
    \leq J(\sqrt{T}) + \sum_{t = \sqrt{T}}^T C_t \cdot  \mathbf{1}_{ (\max _a \min _{b \neq a} Z_{a, b}(t) \leq \beta(T,\delta) )}\\
    \leq & J(\sqrt{T} ) + \sum_{t = \sqrt{T}}^T C_t \cdot \mathbf{1}_{(J(t) \cdot C_{\epsilon}^*(\boldsymbol{\mu},\boldsymbol{c}) \leq \beta(T, \delta))} 
    \leq J(\sqrt{T}) + \frac{\beta(T, \delta)}{C_{\epsilon}^{*}(\boldsymbol{\mu},\boldsymbol{c})}.
\end{align*}
For every sample path $\omega$, consider $T_0(\delta, \omega) \coloneqq \inf \left\{T \mid J(\sqrt{T}) + \frac{\beta(T, \delta)}{C_{\epsilon}^{*}(\boldsymbol{\mu},\boldsymbol{c})} \leq J(T, \omega) \right\}$.  Then for every $T \geq \max(T_0(\delta, \omega), T_{\epsilon})$, that is for every $T$ such that $J(T,\omega)\geq \max\{J(T_0(\delta,\omega)), T_\epsilon\}$, we have that  $\min\{J(\tau_{\delta}, \omega), J(T, \omega)\} < J(T, \omega)$, and thus $J(\tau_\delta,w) < J(T,w) $. So it means $J(T_0(\delta,\omega))+T_\epsilon$ is an upper bound of $J(\tau_\delta,\omega) $ and next we intend to bound $J(T_0(\delta, \omega))$ for every $\omega \in \mathcal{E}_T$. Consider
\begin{align*}
    F(\eta, \omega) = \inf \left\{T \in \mathbb{N} \mid J(T, \omega) - J(\sqrt{T}) \geq J(T, \omega) / (1 + \eta)\right\} = \inf \left\{T \in \mathbb{N} \mid \frac{\eta}{1+\eta} J(T, \omega) \geq J(\sqrt{T}) \right\}.
\end{align*}
Then it is easy to see that
\begin{align*}
    F(\eta, \omega) \leq \inf \left\{T \in \mathbb{N} \mid \frac{\eta \ell T}{1+\eta} \geq \sqrt{T} \right\} \eqqcolon F'(\eta).
\end{align*}
Therefore,we have by the boundedness of cost:
\begin{align*}
    J(T_0(\delta , \omega)) 
    &\leq J(F(\eta)) + J\left(\inf \left\{T \mid \frac{1}{C_{\epsilon}^*(\boldsymbol{\mu},\boldsymbol{c})}\log\left(\frac{BT^{\alpha}}{\delta}\right) \leq \frac{J(T, \omega)}{1 + \eta}\right\} \right) \\
    &\leq J(F'(\eta)) + J\left( \inf \left\{T \mid \frac{J(T, \omega) C_{\epsilon}^*(\boldsymbol{\mu},\boldsymbol{c})}{1 + \eta} \geq \log\left(\frac{BT^{\alpha}}{\delta}\right) \right\} \right)
\end{align*}
where $B$ is  some constant from Proposition \ref{prop:beta}.
Define
\begin{align*}
   T_1(\delta, \omega) \coloneqq \inf \left\{T \mid \frac{J(T,\omega) C_{\epsilon}^*(\boldsymbol{\mu},\boldsymbol{c})}{1 + \eta} \geq \log\left(\frac{BT^{\alpha}}{\delta}\right) \right\} 
\end{align*}
By Lemma \ref{lemma:tech-bound}, we have:
\begin{align*}
    J\left(T_1(\delta, \omega) \right) 
    \leq& \frac{1+\eta}{C_\epsilon^*(\boldsymbol{\mu},\boldsymbol{c})} \log\left(\frac{BT_1(\delta,\omega)^{\alpha}}{\delta}\right) + \mathcal{O}(1) 
    \leq \frac{1+\eta}{C_\epsilon^*(\boldsymbol{\mu},\boldsymbol{c})} \log\left(\frac{B}{\delta \ell^\alpha} + \alpha \log \left( J(T_1(\delta,\omega))\right)\right) + \mathcal{O}(1) \\
    \leq & \frac{\alpha(1 + \eta)}{C_{\epsilon}^*(\boldsymbol{\mu},\boldsymbol{c})} \left[ \log\left(\frac{Be(1+\eta)^{\alpha}}{\delta  \ell^{\alpha} C_{\epsilon}^{*}(\boldsymbol{\mu},\boldsymbol{c})^{\alpha}}\right) + \log\log\left(\frac{B(1+\eta)^{\alpha}}{\delta \ell^{\alpha} C_{\epsilon}^*(\boldsymbol{\mu},\boldsymbol{c})^{\alpha}}\right)\right] + \mathcal{O}(1).
\end{align*}
Therefore, for every $\omega \in \mathcal{E}_T$, we have:
\begin{align*}
J(T_0(\delta, \omega), \omega) &\leq J(F'(\eta)) + J(T_1(\delta, \omega)) \\
&\leq \underbrace{\frac{\alpha(1 + \eta)}{C_{\epsilon}^*(\boldsymbol{\mu},\boldsymbol{c})} \left[ \log\left(\frac{Be(1+\eta)^{\alpha}}{\delta  \ell^{\alpha} C_{\epsilon}^{*}(\boldsymbol{\mu},\boldsymbol{c})^{\alpha}}\right) + \log\log\left(\frac{B(1+\eta)^{\alpha}}{\delta \ell^{\alpha} C_{\epsilon}^*(\boldsymbol{\mu},\boldsymbol{c})^{\alpha}}\right)\right]}_{\overline{C}(\eta, \epsilon)} + \mathcal{O}(1).
\end{align*}

Therefore, $\mathcal{E}_{T} \subset \{J(\tau_\delta) \leq \overline{C}(\eta, \epsilon) + T_\epsilon\}$ for every $T\geq \overline{C}(\eta, \epsilon) + T_\epsilon$. By definition of expectation, we have:
\begin{equation}\label{eq:expectation}
\begin{split}
    \mathbb{E}[J(\tau_{\delta})]
    = \int_{\Omega} J(\tau_{\delta}, \omega) \mathrm{d}\mathbb{P}  
    \leq& \int_{J(\tau_\delta) \leq \overline{C}(\eta, \epsilon) + T_\epsilon} J(\tau_{\delta}, \omega) \mathrm{d}\mathbb{P} + \int_{J(\tau_\delta) > \overline{C}(\eta, \epsilon) + T_\epsilon} J(\tau_{\delta}, \omega)\mathrm{d}\mathbb{P} \\
    \leq &  T_\epsilon + \overline{C}(\eta, \epsilon) + \sum_{T = T_\epsilon + \overline{C}(\eta, \epsilon)}^\infty \mathbb{P}\left( J(\tau_{\delta}) > T\right)\\
    \leq & T_\epsilon + \overline{C}(\eta, \epsilon) + \sum_{T = 1}^\infty \mathbb{P}\left(\mathcal{E}_{T}^c \right)
\end{split}
\end{equation}

Then by Lemma \ref{lemma:neg-event}, there exists two constants $B$ and $C$ such that we have:
\begin{align*}
    \sum_{T = 1}^\infty \mathbb{P}\left(\mathcal{E}_{T}^c \right) \leq \sum_{T=1}^\infty B T \exp(-CT^{1/8}) .
\end{align*}
So we have a bound that:
\begin{align*}
    \mathbb{E}[J(\tau_{\delta})] &\leq T_\epsilon + \frac{\alpha(1 + \eta)}{C_{\epsilon}^*(\boldsymbol{\mu},\boldsymbol{c})} \left[ \log\left(\frac{Be(1+\eta)^{\alpha}}{\delta \ell^{\alpha}  (C_{\epsilon}^{*}(\boldsymbol{\mu},\boldsymbol{c}))^{\alpha}}\right) + \log\log\left(\frac{B(1+\eta)^{\alpha}}{\delta \ell^{\alpha} (C_{\epsilon}^*(\boldsymbol{\mu},\boldsymbol{c}))^{\alpha}}\right)\right] + \sum_{T = 1}^{\infty} BT\exp(-CT^{1/8}) + \mathcal{O}(1).
\end{align*}
Thus, by dividing $\log(1/\delta)$ and let $\delta$ decreases to $0$, we have:
\begin{align*}
    \liminf_{\delta \to 0} \frac{\mathbb{E}_{\mu}[J(\tau_{\delta})]}{\log(1/\delta)} \leq \frac{\alpha(1 + \eta)}{C_{\epsilon}^*(\boldsymbol{\mu},\boldsymbol{c})}
\end{align*}
Letting $\eta$ and $\epsilon$ go to zero, it is easy to see that $C_\epsilon^*(\boldsymbol{\mu},\boldsymbol{c})$ converges to $T^*(\boldsymbol{\mu})$ we have that
\begin{align*}
    \liminf_{\delta \to 0} \frac{\mathbb{E}_{\mu}[J(\tau_{\delta})]}{\log(1/\delta)} \leq \alpha T^*(\boldsymbol{\mu}).
\end{align*}
\end{proof}

\begin{proof}[\textbf{Proof of Lemma \ref{lemma:neg-event}}]
By a union bound:
\begin{align*}
    \mathbb{P}(\mathcal{E}_T^{c}) 
    \leq \sum_{t= h(T)}^T\mathbb{P}(\hat{\boldsymbol{\mu}}(t) \notin \mathcal{I}_{\epsilon}) + \sum_{t= h(T)}^T\mathbb{P}(\hat{\boldsymbol{c}}(t) \notin \mathcal{J}_{\epsilon})
\end{align*}
From \cite[Lemma 19]{Garovoer16BAI}, there exists $B$ and $C$ such that
\begin{align*}
    \sum_{t= h(T)}^T\mathbb{P}(\hat{\boldsymbol{\mu}}(t) \notin \mathcal{I}_{\epsilon}) \leq BT\exp(-CT^{1/8}).
\end{align*}
So it is sufficient to bound the second term concerning the concentration of costs. We have:
\begin{align*}
    \sum_{t= h(T)}^T\mathbb{P}(\hat{\boldsymbol{c}}(t) \notin \mathcal{J}_{\epsilon}) = \sum_{t= h(T)}^T \sum_{a=1}^K\left[ \mathbb{P}(\hat{c}_a(t) \leq c_a - \beta) + \mathbb{P}(\hat{c}_a(t)\geq c_a + \beta) \right].
\end{align*}
Let $T$ be large enough such that $h(T) \geq K^2$. For any $t\geq h(T)$, we have $N_a(t)\geq \sqrt{t} - K$ for every arm $a$. Let $\hat{c}_{a,s}$ be the empirical mean of the first $s$ costs from arm $a$ such that $\hat{c}_a(t) = \hat{c}_{a,N_a(t)}$. With a union bound, we have:
\begin{align*}
    \mathbb{P}\left( \hat{c}_a(t) \leq c_a - \beta \right) 
    =& \mathbb{P}\left(\hat{c}_a(t) \leq c_a - \beta, N_a(t) \geq \sqrt{t} - K \right) 
    \leq \sum_{s=\sqrt{t} - K}^t\mathbb{P}(\hat{c}_{a,s}\leq c_a - \beta)\\
    \leq& \sum_{s=\sqrt{t} - K}^t\exp\left( -s\beta^2 \right) 
    \leq \frac{\exp\left(-(\sqrt{t} - K)\beta^2 \right)}{1-\exp(-\beta^2)},
\end{align*}
where the second last inequality uses Hoeffding's inequality. With a same argument, we can show that the same upper bound applies to $\mathbb{P}\left( \hat{c}_a(t) \geq c_a + \beta \right) $. So we can plug in and have:
\begin{align*}
    \sum_{t= h(T)}^T\mathbb{P}(\hat{\boldsymbol{c}}(t) \notin \mathcal{J}_{\epsilon}) \leq& 2\sum_{t= h(T)}^T \sum_{a=1}^K \frac{\exp\left(-(\sqrt{t} - K)\beta^2 \right)}{1-\exp(-\beta^2)} 
    \leq \frac{2KT\exp(K\beta^2)}{1-\exp(-\beta^2)}\exp\left(-\sqrt{h(T)}\beta^2 \right)\\
    \leq & \frac{2KT\exp(K\beta^2)}{1-\exp(-\beta^2)}\exp\left(-T^{1/8}\beta^2 \right).
\end{align*}
Finally, we re-define the constants $B$ and $C$ to be:
\begin{align*}
    B \leftarrow & 2\max\{B, \frac{2KT\exp(K\beta^2)}{1-\exp(-\beta^2)}\},\\
    C \leftarrow & \min\{ C, \beta^2\}.
\end{align*}
So we have:
\begin{align*}
    \mathbb{P}(\mathcal{E}_T^{c}) 
    \leq \sum_{t= h(T)}^T\mathbb{P}(\hat{\boldsymbol{\mu}}(t) \notin \mathcal{I}_{\epsilon}) + \sum_{t= h(T)}^T\mathbb{P}(\hat{\boldsymbol{c}}(t) \notin \mathcal{J}_{\epsilon})
    \leq BT\exp(-CT^{1/8}).
\end{align*}
\end{proof}

\section{Asymptotic Cost Optimality for Chernoff-Overlap in Two-armed Gaussian Bandits}\label{sec:exp-opt}
To get meaningful bounds related to the lower bound, we will consider the special cases when the lower bound is known, specifically the two-armed Gaussian bandit model. In order to prove Theorem.~\ref{thm:chernoff-two-arm}, we require a lemma similar to.~\ref{thm:as-conv} which studyies the convergence of empirical cost proportion. Therefore, we present Lemma.~\ref{lemma:asconv-co}.

\begin{lemma}\label{lemma:asconv-co}
 Under the Chernoff-Overlap sampling rule, we have that
\begin{align*}
\mathbb{P}\left( \lim_{t\to\infty} \frac{\widehat{c}_a(t) N_a(t)}{J(t)} = \frac{\sqrt{c_a}}{\sqrt{c_1} + \sqrt{c_2}}, \forall a\in\mathcal{A} \right) = 1
\end{align*}
\end{lemma}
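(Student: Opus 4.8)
The plan is to parallel the proof of Theorem~\ref{thm:as-conv} (via Lemmas~\ref{lemma:w_conv} and~\ref{lemma:track}), but the argument is considerably more direct here: in two-armed Gaussian bandits Corollary~\ref{corr:2-arm-gauss} gives the closed form $w_a^* = \sqrt{c_a}/(\sqrt{c_1}+\sqrt{c_2})$, and the $\mathsf{CO}$ sampling rule --- which pulls $a_t \in \arg\min_a \sqrt{c_a}\,N_a(t)$ --- already performs the required balancing without any call to $\mathsf{ComputeProportions}$. Note also that with two arms an elimination immediately triggers the stopping rule, so on every round before stopping we have $\mathcal{R}=\{1,2\}$; hence it suffices to analyze the pure balancing rule run for all $t$.

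\emph{Step 1: both arms are pulled infinitely often.} Put $\Phi_a(t):=\sqrt{c_a}\,N_a(t)$. Because each round increments $N_a$ for the arm minimizing $\Phi_a(t)$, an elementary induction on $t$ gives the invariant
\[
\lvert \Phi_1(t)-\Phi_2(t)\rvert \;\le\; \max\{\sqrt{c_1},\sqrt{c_2}\} \;\le\; 1 \qquad \text{for all } t\ge K .
\]
Combining this with $N_1(t)+N_2(t)=t$ yields, writing $\bar a$ for the arm other than $a$,
\[
\frac{\sqrt{c_{\bar a}}\,t-1}{\sqrt{c_1}+\sqrt{c_2}} \;\le\; N_a(t) \;\le\; \frac{\sqrt{c_{\bar a}}\,t+1}{\sqrt{c_1}+\sqrt{c_2}},
\]
so $N_a(t)\to\infty$ and $N_a(t)/t \to \sqrt{c_{\bar a}}/(\sqrt{c_1}+\sqrt{c_2})$.

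\emph{Step 2: identify the limit.} Since $N_a(t)\to\infty$, the strong law of large numbers gives $\widehat c_a(t)\to c_a$ almost surely for $a=1,2$. Using $J(t)=\sum_b \widehat c_b(t)N_b(t)$ and dividing numerator and denominator by $t$,
\[
\frac{\widehat c_a(t)N_a(t)}{J(t)} \;=\; \frac{\widehat c_a(t)\,\bigl(N_a(t)/t\bigr)}{\sum_b \widehat c_b(t)\,\bigl(N_b(t)/t\bigr)} \;\xrightarrow[t\to\infty]{}\; \frac{c_a\sqrt{c_{\bar a}}}{c_1\sqrt{c_2}+c_2\sqrt{c_1}} \;=\; \frac{\sqrt{c_a}}{\sqrt{c_1}+\sqrt{c_2}},
\]
almost surely, where the last equality uses $c_1\sqrt{c_2}+c_2\sqrt{c_1}=\sqrt{c_1c_2}\,(\sqrt{c_1}+\sqrt{c_2})$ and $c_a\sqrt{c_{\bar a}}=\sqrt{c_a}\sqrt{c_1c_2}$. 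This is exactly the claimed statement.

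\emph{Main obstacle.} If one reads the sampling rule with the empirical cost $\sqrt{\widehat c_a(t)}$ in place of the unknown $\sqrt{c_a}$ (as the surrounding discussion suggests), then $\Phi_a(t)$ is no longer a deterministic counter and the clean invariant of Step~1 fails; it must be replaced by the weaker statement that $\lvert\Phi_1(t)-\Phi_2(t)\rvert = o(t)$ eventually, obtained by combining the $\arg\min$ feedback with $\widehat c_a(t)\to c_a$ and Assumption~\ref{ass:boundedness} ($c_a\in[\ell,1]$) --- this is precisely the negative-feedback tracking argument of Lemma~\ref{lemma:track}, specialized to two arms. A related point to treat carefully is the apparent circularity between ``$N_a(t)\to\infty$'' (needed for the SLLN) and the balancing estimate; it is broken by observing that, by Assumption~\ref{ass:boundedness}, an arm pulled only finitely often would have a bounded weighted counter while the other arm's counter diverges, contradicting the $\arg\min$ rule. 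Once this convergence is established, the remainder is routine, and feeding the limit into the weaker-version argument behind Theorem~\ref{thm:exp-opt} is what yields Theorem~\ref{thm:chernoff-two-arm}.
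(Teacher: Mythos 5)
Your argument is correct and follows essentially the same strategy as the paper's proof: use the square-root argmin rule to keep the weighted counters balanced, deduce that both arms are pulled infinitely often, invoke the strong law of large numbers for $\hat{c}_a(t)\to c_a$, and finish with the algebra $c_a\sqrt{c_{\bar a}}/(c_1\sqrt{c_2}+c_2\sqrt{c_1})=\sqrt{c_a}/(\sqrt{c_1}+\sqrt{c_2})$. The one genuine difference is which version of the rule you analyze in full. You prove an exact deterministic invariant $|\sqrt{c_1}N_1(t)-\sqrt{c_2}N_2(t)|\le\max_a\sqrt{c_a}$ for the rule with \emph{true} costs (as literally written in Algorithm 2) and pass to $N_a(t)/t$ limits, whereas the paper's own proof analyzes the rule with \emph{empirical} costs $\sqrt{\hat{c}_a(t)}$ (the version actually intended, since costs are unknown), working with $c(t)=\sqrt{\hat{c}_2(t)/\hat{c}_1(t)}$, the sandwich $c(t)N_2(t)-c(t)\le N_1(t)\le c(t)N_2(t)+1$, and a squeeze of $\hat{c}_1(t)c(t)N_2(t)/J(t)$ rather than of $N_a(t)/t$. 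So your ``main obstacle'' paragraph is not a side remark but the case the lemma is really about; the fix you sketch there is sound --- a bounded (hence $o(t)$) gap between the empirical counters, using $\hat{c}_a(t)\in[\ell,1]$ and the Lipschitzness of $\sqrt{\cdot}$ on $[\ell,1]$ so that each pull moves the pulled arm's counter by $O(1)$, together with the contradiction argument that an arm pulled finitely often would have a bounded counter against a divergent one, which breaks the circularity with the SLLN --- and, written out, it would be at least as careful as the paper's one-line sandwich, which is itself stated loosely since $c(t)$ changes between pulls. In short: correct, same overall route; your version buys a cleaner quantitative invariant in the known-cost case, while the paper's handles directly the unknown-cost rule that Theorem 4 needs.
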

\noindent Now, we can prove Theorem.~\ref{thm:chernoff-two-arm} following the similar argument as the proof of Theorem.~\ref{thm:as-upper}.
\begin{proof}[\textbf{Proof of Theorem.~\ref{thm:chernoff-two-arm}}]
Let $\mathcal{E}$ be the event that all concentrations regarding reward, cost, and empirical proportion holds, i.e.,
$$
\mathcal{E}=\left\{\forall a \in \mathcal{A}, \frac{c_a N_a(t)}{J(t)} \underset{t \rightarrow \infty}{\rightarrow} w_a^*\right\} \cap \left\{\hat{\boldsymbol{\mu}}(t) \underset{t \rightarrow \infty}{\rightarrow} \boldsymbol{\mu}\right\} \cap \left\{\hat{\boldsymbol{c}}(t) \underset{t \rightarrow \infty}{\rightarrow} \boldsymbol{c}\right\}
$$
By Lemma.~\ref{lemma:asconv-co} and the law of large numbers, $\mathcal{E}$ is of probability 1. On $\mathcal{E}$, there exists $t_0$ such that for all $t \geq t_0, \hat{\mu}_1(t)> \hat{\mu}_2(t)$ due to the concentration of empirical reward, and thus the Chernoff stopping statistics can be re-written as:
$$
\begin{aligned}
    Z_{2}(t)=&N_1(t) d\left(\hat{\mu}_1(t), \hat{\mu}_{1, 2}(t)\right)+N_2(t) d\left(\hat{\mu}_2(t), \hat{\mu}_{1, 2}(t)\right) \\
    = &J(t) \cdot \left[\left(\frac{N_1(t)}{J(t)}+\frac{N_2(t)}{J(t)}\right) I_{\frac{N_1(t) / J(t)}{N_1(t) / J(t) +N_2(t) / J(t)}}\left(\hat{\mu}_1(t), \hat{\mu}_2(t)\right)\right] .
\end{aligned}
$$
By Lemma \ref{lemma:cont-prop}, for all $a \geq 2$, the mapping $(\boldsymbol{w}, \boldsymbol{\lambda}, \boldsymbol{c}) \rightarrow\left(\frac{w_1}{c_1} + \frac{w_2}{c_2}\right) I_{\frac{w_1 / c_1}{\left(w_1 / c_1 + w_2 / c_2\right)}}\left(\lambda_1, \lambda_2\right)$ is continuous at $\left(w^*(\boldsymbol{\mu}), \boldsymbol{\mu}, \boldsymbol{c}\right)$. Therefore, for all $\epsilon>0$ there exists $t_1 \geq t_0$ such that for all $t \geq t_1$ and all $a \in\{2, \ldots, K\}$,
$$
\left(\frac{N_1(t)}{C(t)}+\frac{N_2(t)}{C(t)}\right) I_{\frac{N_1(t) / C(t)}{N_1(t) / C(t) +N_2(t) / C(t)}}\left(\hat{\mu}_1(t), \hat{\mu}_2(t)\right) \geq \frac{w_1^*/c_1 +w_2^*/c_2}{1+\epsilon} I_{\frac{w_1^*/c_1}{w_1^*/c_1+w_2^*/c_2}}\left(\mu_1, \mu_2\right)
$$
Hence, for any $t \geq t_1$, we have:
$$
Z_2(t) \geq J(t) \frac{w_1^*/c_1 +w_a^*/c_a}{1+\epsilon} I_{\frac{w_1^*/c_1}{w_1^*/c_1+w_2^*/c_2}}\left(\mu_1, \mu_2\right)=\frac{J(t)}{(1+\epsilon) T^*(\boldsymbol{\mu})}.
$$
Notice that the algorithm will end no later than arm $2$ is eliminated. Consequently, we can bound the cumulative cost a stopping time $\tau_\delta$ as follows:
$$
\begin{aligned}
J(\tau_\delta)  \leq & J \Big(\inf \{t \in \mathbb{N}: Z_2(t) \geq \beta(t, \delta)\}\Big) 
 \leq J(t_1) \vee J\Big(\inf \left\{t \in \mathbb{N}: J(t)(1+\epsilon)^{-1} T^*(\boldsymbol{\mu})^{-1} \geq \log (Bt^\alpha / \delta)\right\}\Big)\\
 \leq & J(t_1) \vee (1+\epsilon)T^*(\boldsymbol{\mu})\left(\log \left(\frac{B}{\ell^\alpha\delta} \right) + \alpha \log(J(\tau_\delta)) \right) + \mathcal{O}(1),
\end{aligned}
$$
for some positive constant $B$ from Proposition ~\ref{prop:beta-co}. Using Lemma \ref{lemma:tech-bound}, it follows that on $\mathcal{E}$, for $\alpha \in[1, e / 2]$
$$
J(\tau_\delta) \leq J(t_1) \vee \alpha(1+\epsilon) T^*(\boldsymbol{\mu})\left[\log \left(\frac{B e\left((1+\epsilon) T^*(\boldsymbol{\mu})\right)^\alpha}{\ell^{\alpha} \cdot \delta}\right)+\log \log \left(\frac{B\left((1+\epsilon) T^*(\boldsymbol{\mu})\right)^\alpha}{\ell^{\alpha} \cdot \delta}\right)\right] + \mathcal{O}(1).
$$
Thus we have:
$$
\limsup _{\delta \rightarrow 0} \frac{J(\tau_\delta)}{\log (1 / \delta)} \leq(1+\epsilon) \alpha T^*(\boldsymbol{\mu}) = \frac{2(1+\epsilon)\alpha\left(\sqrt{c_1}+\sqrt{c_2}\right)^2}{(\mu_1 - \mu_2)^2}.
$$
Letting $\epsilon$ go to zero concludes the proof.
\end{proof}

\subsection{Proof of Lemma.~\ref{lemma:asconv-co}}
\begin{proof}[\textbf{Proof of Lemma.~\ref{lemma:asconv-co}}]
Under the Chernoff-Overlap sampling rule, we will pull arm $2$ at time $t$ when the following condition is satisfied:
\begin{align*}
  \sqrt{\widehat{c}_2(t)} N_2(t) \leq \sqrt{\widehat{c}_1(t)} N_1(t)
.\end{align*}
This gives us the condition:
\begin{align*}
  \left( \frac{\widehat{c}_2(t)}{\widehat{c}_1(t)} \right)^{\frac{1}{2}} N_2(t) \leq N_1(t)
.\end{align*}
Set $c(t) = \left( \frac{\widehat{c}_2(t)}{\widehat{c}_1(t)} \right)^{\frac{1}{2}}$. It is clear that $c(t)$ can be upper and lower bounded by constants as the cost is bounded. Then we pull arm 1 whenever $N_1(t) < c(t)N_2(t)$. From the above condition, it is clear that as $t \to \infty$ it must be the case that $N_a(t) \to \infty$ since $c(t)$ is both upper and lower bounded. Additionally, we have that 
\begin{align*}
    c(t)N_2(t) - c(t) \leq N_1(t)  \leq c(t)N_2(t) + 1.
\end{align*}
This gives us that
\begin{align*}
    \frac{\widehat{c}_1(t) c(t) N_2(t)}{J(t)} - \frac{c(t)}{J(t)} < \frac{\widehat{c}_1(t) N_1(t)}{J(t)} < \frac{\widehat{c}_1(t) c(t) N_2(t)}{J(t)} + \frac{1}{J(t)}.
\end{align*}
Since $\boldsymbol{c} \in \mathcal{C}$ which has bounded distribution, it must be the case that $c(t) \leq 1/\ell^2$ due to the boundedness assumption. Additionally, we have $J(t) \geq \ell t$. Therefore exists some $t_0$ such that for every $t \geq t_0$,
\begin{align*}
    \frac{\max\{1, c(t)\}}{J(t)} < \epsilon/2.
\end{align*}
It follows that for all $t \geq t_0$,
\begin{align}\label{ineq:triang-1}
    \left|\frac{\widehat{c}_1(t) N_1(t)}{J(t)} - \frac{\widehat{c}_1(t) c(t) N_2(t)}{J(t)}\right| < \epsilon/2.
\end{align}
Now note that can upper and lower bound $J(t)$ as follows:
\begin{align*}
    J(t) =& \hat{c}_1(t) N_1 (t) + \hat{c}_2(t) N_2 (t) \leq \widehat{c}_1(t)(c(t) N_2(t) + 1) + \hat{c}_2(t) N_2 (t),\\
    J(t) = &\hat{c}_1(t) N_1 (t) + \hat{c}_2(t) N_2 (t) \geq \widehat{c}_1(t)(c(t) N_2(t) - c(t)) + \hat{c}_2(t) N_2 (t).\\
\end{align*}
Therefore, we have:
\begin{equation}\label{eq:co-conv}
    \frac{\widehat{c}_1(t) c(t) N_2(t)}{\widehat{c}_1(t)(c(t) N_2(t) + 1) + \widehat{c}_2(t) N_2(t)} \leq \frac{\widehat{c}_1(t) c(t) N_2(t)}{J(t)} \leq \frac{\widehat{c}_1(t) c(t) N_2(t)}{\widehat{c}_1(t)( c(t) N_2(t) - c(t)) + \widehat{c}_2(t) N_2(t)}
\end{equation}
Let $\mathcal{E} = \{\widehat{\boldsymbol{c}} \to \boldsymbol{c}\}$, so that $\mathbb{P}(\mathcal{E}) = 1$ by the strong law of large numbers. Then on $\mathcal{E}$, taking the limit of \eqref{eq:co-conv} as $t\to \infty$ yields
\begin{align*}
    \frac{\sqrt{c_1}}{\sqrt{c_1} + \sqrt{c_2}} \leq \lim_{t \to \infty} \frac{\widehat{c}_1 c N_2(t)}{J(t)} \leq \frac{\sqrt{c_1}}{\sqrt{c_1} + \sqrt{c_2}}
\end{align*}
This gives us that there exists some $t_1 \geq t_0$ such that for every $t \geq t_1$
\begin{align}\label{ineq:triang-2}
    \left| \frac{\widehat{c}_1 c N_2(t)}{J(t)} - \frac{\sqrt{c_1}}{\sqrt{c_1} + \sqrt{c_2}}\right| < \epsilon/2
\end{align}
Thus, combining (\ref{ineq:triang-1}) and (\ref{ineq:triang-2}), for every $t \geq t_1$
\begin{align*}
    \left|\frac{\widehat{c}_1 N_1(t)}{J(t)} - \frac{\sqrt{c_1}}{\sqrt{c_1} + \sqrt{c_2}}\right| &=
    \left|\frac{\widehat{c}_1 N_1(t)}{J(t)} - \frac{\widehat{c}_1 c N_2(t)}{J(t)} + \frac{\widehat{c}_1 c N_2(t)}{J(t)} - \frac{\sqrt{c_1}}{\sqrt{c}_1 + \sqrt{c_2}}\right| \\
    &\leq
    \left|\frac{\widehat{c}_1 N_1(t)}{J(t)} - \frac{\widehat{c}_1 c N_2(t)}{J(t)}\right| +
    \left| \frac{\widehat{c}_1 c N_2(t)}{J(t)} - \frac{\sqrt{c}_1}{\sqrt{c_1} + \sqrt{c_2}}\right| 
    < \epsilon/2 + \epsilon/2 = \epsilon
\end{align*}
        Since $\frac{\widehat{c}_2 N_2(t)}{J(t)} = 1 - \frac{\widehat{c}_1 N_1(t)}{J(t)}$, we also have that $\frac{\widehat{c}_2 N_2(t)}{J(t)} \to \frac{\sqrt{c_2}}{\sqrt{c_1} + \sqrt{c_2}}$ as desired.
\end{proof}

\section{$\delta$-PAC Analysis for $\mathsf{CTAS}$ and $\mathsf{CO}$}
In this section, we provide a proof for the $\delta$-PAC guarantees of both algorithms following the identical procedure from \cite{Garovoer16BAI} for completeness. However, we only present the proof for the case where $\alpha>1$. The case of $\alpha = 1$ can also be generalized using the same argument as Theorem 10 in~\cite{Garovoer16BAI}. 
\begin{proof}[\textbf{Proof of Proposition~\ref{prop:beta}}]
The proof relies on the fact that $Z_{a, b}(t)$ can be expressed using function $I_\alpha$ introduced in Definition \eqref{eq:jens-shan}. An interesting property of this function, that we use below, is the following. It can be checked that if $x>y$,
\[
I_\alpha(x, y)=\inf _{x^{\prime}<y^{\prime}}\left[\alpha d\left(x, x^{\prime}\right)+(1-\alpha) d\left(y, y^{\prime}\right)\right] .
\]
For every $a, b$ that are such that $\mu_a<\mu_b$ and $\hat{\mu}_a(t)>\hat{\mu}_b(t)$, one has the following inequality:
\begin{align*}
Z_{a, b}(t) & =\left(N_a(t)+N_b(t)\right) I \frac{N_a(t)}{N_a(t)+N_b(t)}\left(\hat{\mu}_a(t), \hat{\mu}_b(t)\right) \\
& =\inf _{\mu_a^{\prime}<\mu_b^{\prime}} N_a(t) d\left(\hat{\mu}_a(t), \mu_a^{\prime}\right)+N_b(t) d\left(\hat{\mu}_b(t), \mu_b^{\prime}\right) \\
& \leq N_a(t) d\left(\hat{\mu}_a(t), \mu_a\right)+N_b(t) d\left(\hat{\mu}_b(t), \mu_b\right) .
\end{align*}
One has
\begin{align*}
\mathbb{P}_{\boldsymbol{\mu}}\left(\tau_\delta<\infty,\right. & \left.\hat{a}_{\tau_\delta} \neq a^*\right) \leq \mathbb{P}_{\boldsymbol{\mu}}\left(\exists a \in \mathcal{A} \backslash a^*, \exists t \in \mathbb{N}: \hat{\mu}_a(t)>\hat{\mu}_{a^*}(t), Z_{a, a^*}(t)>\beta(t, \delta)\right) \\
& \leq \mathbb{P}_{\boldsymbol{\mu}}\left(\exists t \in \mathbb{N}: \exists a \in \mathcal{A} \backslash a^*: N_a(t) d\left(\hat{\mu}_a(t), \mu_a\right)+N_{a^*}(t) d\left(\hat{\mu}_{a^*}(t), \mu_{a^*}\right) \geq \beta(t, \delta)\right) \\
& \leq \mathbb{P}_{\boldsymbol{\mu}}\left(\exists t \in \mathbb{N}: \sum_{a=1}^K N_a(t) d\left(\hat{\mu}_a(t), \mu_a\right) \geq \beta(t, \delta)\right) \\
& \leq \sum_{t=1}^{\infty} e^{K+1}\left(\frac{\beta(t, \delta)^2 \log (t)}{K}\right)^K e^{-\beta(t, \delta)} .
\end{align*}
The last inequality follows from a union bound and \cite[Theorem 2]{magureanu2014lipschitz}, originally stated for Bernoulli distributions but whose generalization to one-parameter exponential families is straightforward. Hence, with an exploration rate of the form $\beta(t, \delta)=\log \left(B t^\alpha / \delta\right)$, for $\alpha>1$, choosing $B$ satisfying
\[
\sum_{t=1}^{\infty} \frac{e^{K+1}}{K^K} \frac{\left(\log ^2\left(B t^\alpha\right) \log t\right)^K}{t^\alpha} \leq B
\]
yields a probability of error upper bounded by $\delta$.
\end{proof}

\begin{proof}[\textbf{Proof of Proposition~\ref{prop:beta-co}}]
In $\mathsf{CO}$, the event where we incorrectly identify the best arm corresponds to incorrectly eliminating the best arm at some point. Therefore,
we once again have 
\begin{align*}
\mathbb{P}_{\boldsymbol{\mu}}\left(\tau_\delta<\infty,\right. & \left.\hat{a}_{\tau_\delta} \neq a^*\right) \leq \mathbb{P}_{\boldsymbol{\mu}}\left(\exists t \in \mathbb{N}: \exists a \in \mathcal{A} \backslash a^*: N_a(t) d\left(\hat{\mu}_a(t), \mu_a\right)+N_{a^*}(t) d\left(\hat{\mu}_{a^*}(t), \mu_{a^*}\right) \geq \beta(t, \delta)\right) \\
& \leq \mathbb{P}_{\boldsymbol{\mu}}\left(\exists t \in \mathbb{N}: \sum_{a=1}^K N_a(t) d\left(\hat{\mu}_a(t), \mu_a\right) \geq \beta(t, \delta)\right) \\
& \leq \sum_{t=1}^{\infty} e^{K+1}\left(\frac{\beta(t, \delta)^2 \log (t)}{K}\right)^K e^{-\beta(t, \delta)} .
\end{align*}
The result then follows from the same line of reasoning as Proposition~\ref{prop:beta} afterwards.
\end{proof}

\begin{remark}
    For the proof of $\alpha = 1$, see \cite[Theorem 10]{Garovoer16BAI}. The same proof can be used to show that both $\mathsf{CTAS}$ and $\mathsf{CO}$ are $\delta$-PAC for $\alpha = 1$, where $B = 2K$.
\end{remark}

\end{document}